\begin{document}


\newcommand{\myi}{(\emph{i})\xspace}
\newcommand{\myii}{(\emph{ii})\xspace}
\newcommand{\myiii}{(\emph{iii})\xspace}
\newcommand{\myiv}{(\emph{iv})\xspace}
\newcommand{\myv}{(\emph{v})\xspace}
\newcommand{\myvi}{(\emph{vi})\xspace}
\newcommand{\myvii}{(\emph{vii})\xspace}
\newcommand{\myviii}{(\emph{viii})\xspace}

\newcommand{\A}{\mathcal{A}} \newcommand{\B}{\mathcal{B}}
\newcommand{\C}{\mathcal{C}} \newcommand{\D}{\mathcal{D}}
\newcommand{\E}{\mathcal{E}} \newcommand{\F}{\mathcal{F}}
\newcommand{\G}{\mathcal{G}}
\newcommand{\I}{\mathcal{I}} \newcommand{\J}{\mathcal{J}}
\newcommand{\K}{\mathcal{K}}
\newcommand{\LL}{\mathcal{L}}
\newcommand{\M}{\mathcal{M}} \newcommand{\N}{\mathcal{N}}
\newcommand{\PP}{\mathcal{P}}
\newcommand{\Q}{\mathcal{Q}} \newcommand{\R}{\mathcal{R}}
\renewcommand{\S}{\mathcal{S}}
\newcommand{\T}{\mathcal{T}}
\newcommand{\SN}{\mathcal{SN}} 
\newcommand{\SD}{\mathcal{SD}} 
\newcommand{\V}{\mathcal{V}}
\newcommand{\U}{\mathcal{U}}
\newcommand{\W}{\mathcal{W}} \newcommand{\X}{\mathcal{X}}
\newcommand{\Y}{\mathcal{Y}} \newcommand{\Z}{\mathcal{Z}}

\newcommand{\limp}{\mathbin{\rightarrow}}
\newcommand{\incl}{\subseteq}
\newcommand{\ind}{\hspace*{.18in}}

\newcommand{\Wnext}{\raisebox{-0.27ex}{\LARGE$\bullet$}}
\newcommand{\Next}{\raisebox{-0.27ex}{\LARGE$\circ$}}
\newcommand{\Until}{\mathop{\U}}
\newcommand{\Release}{\mathop{\R}}
\newcommand{\Wuntil}{\mathop{\W}}
\newcommand{\trueVal}{\mathit{true}} 
\newcommand{\falseVal}{\mathit{false}} 
\newcommand{\ttrue}{\mathit{tt}} 
\newcommand{\ffalse}{\mathit{ff}} 
\newcommand{\final}{\mathit{Final}}
\newcommand{\Last}{\mathit{last}}
\newcommand{\Ended}{\mathit{end}}
\newcommand{\length}{\mathit{length}}
\newcommand{\last}{\mathit{n}}
\newcommand{\nnf}{\mathit{NNF}}
\newcommand{\CL}{\mathit{CL}}
\newcommand{\MU}[2]{\mu #1.#2}
\newcommand{\NU}[2]{\nu #1.#2}
\newcommand{\BOX}[1]{ [#1]}
\newcommand{\DIAM}[1]{\langle #1 \rangle}
\newcommand{\transl}{f}
\newcommand{\Int}[2][\I]{#2^{#1}}
\newcommand{\INT}[2][\I]{(#2)^{#1}}
\newcommand{\Inta}[2][\rho]{#2_{#1}^\I}
\newcommand{\INTA}[2][\rho]{(#2)_{#1}^\I}
\newcommand{\LTL}{{\sf LTL}\xspace}
\newcommand{\ltl}{{\sf LTL}\xspace}
\newcommand{\LTLf}{{\sf LTL}$_f$\xspace}
\newcommand{\ltlf}{{\sf LTL}$_f$\xspace}
\newcommand{\ldlf}{{\sf LDLf}\xspace}
\newcommand{\SoneS}{{\sf S1S}\xspace}
\newcommand{\fol}{{\sf FOL}\xspace}
\newcommand{\NFA}{{\sf NFA}\xspace}
\newcommand{\DFA}{{\sf DFA}\xspace}
\newcommand{\TDFA}{{\sf TDFA}\xspace}
\newcommand{\dfa}{{\sf DFA}\xspace}
\newcommand{\tdfa}{{\sf TDFA}\xspace}

\newcommand{\NNF}{{\sc NNF}\xspace}
\newcommand{\XNF}{{\sc XNF}\xspace}
\newcommand{\BNF}{{\sc BNF}\xspace}

\newcommand{\circled}[1]{\raisebox{-0.06ex}{\ding{\numexpr 191 + #1\relax}}\xspace}

\newcommand{\buchi}{B\"uchi\xspace}
\newcommand{\Nat}{{\rm I\kern-.23em N}}
\newcommand{\Prop}{\PP}
\newcommand{\Var}{\V}

\newcommand{\PreC}{\mathit{PreC}}
\newcommand{\Win}{\mathit{Win}}

\newcommand{\Futrue}{\mathop{\F}}
\newcommand{\Global}{\mathop{\G}}
\newcommand{\Endt}{\mathit{end}}
\newcommand{\atomize}[1]{\texttt{"}\ensuremath{#1}\texttt{"}}

\newcommand{\ot}{o}

\def\tool{\textsf{tool}\xspace}

\newcommand{\xnf}[1]{\textsf{xnf}(#1)}
\newcommand{\tnf}[1]{\textsf{tnf}(#1)}

\newcommand{\tail}{\mathit{tail}}
\newcommand{\SET}[1]{\{ #1 \}}
\newcommand{\tup}[1]{\langle #1 \rangle}

\newcommand{\qbf}{{\sc QBF}\xspace}

\newcommand{\subAUT}{\mathop{\preceq_\A}}

\newcommand{\angleBra}[1]{\left\langle #1 \right\rangle}
\newcommand{\boxBra}[1]{\left[ #1 \right]}
\newcommand{\roundBra}[1]{\left( #1 \right)}
\newcommand{\curlyBra}[1]{\left\{ #1 \right\}}

\newcommand{\tran}[1]{\xrightarrow[]{#1}}
\newcommand{\fp}[1]{{\sf fp}(#1)}
\newcommand{\cl}[1]{{\sf cl}(#1)}
\newcommand{\tcl}[1]{{\sf tcl}(#1)}
\newcommand{\reach}[1]{\textsf{Reach}(#1)}

\def\approach{\textsf{MoGuS}\xspace}

\def\SAT{\textsf{SAT}\xspace}
\def\SDD{\textsf{SDD}\xspace}
\def\BDD{\textsf{BDD}\xspace}
\def\DFS{\textsf{DFS}\xspace}
\def\SCC{\textsf{SCC}\xspace}
\def\DAG{\textsf{DAG}\xspace}
\def\API{\textsf{API}\xspace}

\def\aaltaf{\textsf{aaltaf}\xspace}
\def\tool{{\sf Tople}\xspace}
\def\olfs{{\sf OLFS}\xspace}
\def\cynthia{{\sf Cynthia}\xspace}
\def\nike{{\sf Nike}\xspace}
\def\lisa{{\sf Lisa}\xspace}
\def\lydia{{\sf LydiaSyft}\xspace}
\def\spot{{\sf Spot}\xspace}
\def\syft{{\sf Syft}\xspace}
\def\mona{{\sf MONA}\xspace}
\def\ltlsynt{{\sf ltlsynt}\xspace}
\def\strix{{\sf Strix}\xspace}
\def\acacia{{\sf Acacia+}\xspace}

\def\todo{\textcolor{red}{TODO: }\xspace}
\newcommand{\os}[1]{\textcolor{cyan}{OS: #1}}
\newcommand{\JL}[1]{\textcolor{red}{JL: #1}}
\newcommand{\shengping}[1]{\textcolor{violet}{shengping: #1}}

\title{On-the-fly Synthesis for \LTL over Finite Traces: An Efficient Approach that Counts}

\author{Shengping Xiao}
\email{spxiao@stu.ecnu.edu.cn}
\affiliation{%
  \institution{East China Normal University}
  \city{Shanghai}
  \country{China}
}
\author{Yongkang Li}
\email{51265902012@stu.ecnu.edu.cn}
\affiliation{%
  \institution{East China Normal University}
  \city{Shanghai}
  \country{China}
}
\author{Shufang Zhu}
\email{shufang.zhu@cs.ox.ac.uk}
\affiliation{%
  \institution{University of Oxford}
  \city{London}
  \country{UK}
}
\author{Jun Sun}
\email{junsun@smu.edu.sg}
\affiliation{%
  \institution{Singapore Management University}
  \country{Singapore}
}
\author{Jianwen Li}
\email{jwli@sei.ecnu.edu.cn}
\affiliation{%
  \institution{East China Normal University}
  \city{Shanghai}
  \country{China}
}
\author{Geguang Pu}
\email{ggpu@sei.ecnu.edu.cn}
\affiliation{%
  \institution{East China Normal University}
  \city{Shanghai}
  \country{China}
}
\author{Moshe Y. Vardi}
\email{vardi@cs.rice.edu}
\affiliation{%
  \institution{Rice University}
  \city{Houston}
  \country{USA}
}
\renewcommand{\shortauthors}{Xiao et al.}

\begin{abstract}
We present an on-the-fly synthesis framework for Linear Temporal Logic over finite traces~(\ltlf) based on top-down deterministic automata construction. Existing approaches rely on constructing a complete Deterministic Finite Automaton (\DFA) corresponding to the \ltlf specification, a process with doubly exponential complexity relative to the formula size in the worst case. In this case, the synthesis procedure cannot be conducted until the entire \dfa is constructed. This inefficiency is the main bottleneck of existing approaches. To address this challenge, we first present a method for converting \LTLf into Transition-based \DFA (\TDFA) by directly leveraging \LTLf semantics, incorporating intermediate results as direct components of the final automaton to enable parallelized synthesis and automata construction. We then explore the relationship between \LTLf synthesis and \TDFA games and subsequently develop an algorithm for performing \LTLf synthesis using on-the-fly \TDFA game solving. This algorithm traverses the state space in a global forward manner combined with a local backward method, along with the detection of strongly connected components. Moreover, we introduce two optimization techniques --- model-guided synthesis and state entailment --- to enhance the practical efficiency of our approach. Experimental results demonstrate that our on-the-fly approach achieves the best performance on the tested benchmarks and effectively complements existing tools and approaches.
\end{abstract}

\begin{CCSXML}
<ccs2012>
   <concept>
       <concept_id>10011007.10010940.10010992.10010998</concept_id>
       <concept_desc>Software and its engineering~Formal methods</concept_desc>
       <concept_significance>500</concept_significance>
       </concept>
   <concept>
       <concept_id>10003752.10003790.10003794</concept_id>
       <concept_desc>Theory of computation~Automated reasoning</concept_desc>
       <concept_significance>500</concept_significance>
       </concept>
   <concept>
       <concept_id>10003752.10003790.10003793</concept_id>
       <concept_desc>Theory of computation~Modal and temporal logics</concept_desc>
       <concept_significance>500</concept_significance>
       </concept>
 </ccs2012>
\end{CCSXML}

\ccsdesc[500]{Software and its engineering~Formal methods}
\ccsdesc[500]{Theory of computation~Automated reasoning}
\ccsdesc[500]{Theory of computation~Modal and temporal logics}

\keywords{Synthesis, Realizability, \LTL over Finite Traces, Reactive System}

\received{20 February 2007}
\received[revised]{12 March 2009}
\received[accepted]{5 June 2009}

\maketitle

\SetKwRepeat{Do}{do}{while}
\SetKwProg{myproc}{function}{}{}

\SetKw{Break}{break}
\SetKw{Continue}{continue}
\SetKw{Init}{initialize}
\SetKw{Stack}{stack}

\SetKwFunction{isSccRoot}{isSccRoot}
\SetKwFunction{getScc}{getScc}
\SetKwFunction{isRealizable}{isRealizable}
\SetKwFunction{currentSystemWinning}{currentSystemWinning}
\SetKwFunction{currentEnvironmentWinning}{currentEnvironmentWinning}
\SetKwFunction{isRealizable}{isRealizable}
\SetKwFunction{pushBack}{pushBack}
\SetKwFunction{pushBack}{pushBack}
\SetKwFunction{popBack}{popBack}
\SetKwFunction{back}{back}
\SetKwFunction{insert}{insert}
\SetKwFunction{remove}{remove}
\SetKwFunction{checkCurrentStatus}{checkCurrentStatus}
\SetKwFunction{forwardSearch}{forwardSearch}
\SetKwFunction{backwardSearch}{backwardSearch}
\SetKwFunction{getEdge}{getEdge}
\SetKwFunction{directPredecessors}{directPredecessors}
\SetKwFunction{ltlfSat}{ltlfSat}
\SetKwFunction{getModel}{getModel}
\SetKwFunction{edgeConstraint}{edgeConstraint}
\SetKwFunction{noSwinPotential}{noSwinPotential}

\section{Introduction}

Formal \emph{synthesis} aims to generate a system from a specification in a manner that is correct by construction, i.e., ensuring the resulting system satisfies the specification. Compared with formal verification, in which case both a specification and an implementation must be provided, synthesis automatically derives the latter from the former, with a correctness guarantee. Synthesis thus indicates a fundamental paradigm shift, transitioning the construction of reactive systems from the imperative to the declarative level. In situations where a comprehensive specification is available prior to system implementation, synthesis presents a natural and promising endeavor. Furthermore, reactive synthesis emerges as a specialized synthesis process focused on the automatic creation of reactive systems based on given specifications, such that the interactive behaviors between the system and the adversarial environment are guaranteed to satisfy the specification.

The decidability of the synthesis problem, i.e., determining the existence of a system that implements a given specification, is called \emph{realizability}. Synthesis and realizability were initially introduced by Church~\cite{Chu62}, who used Monadic Second-Order Logic of One Successor (\SoneS) as a specification language. The \SoneS synthesis and realizability problems were solved in the 1960s~\cite{Rab69,BL69}, with a non-elementary complexity. In recent years, temporal logics, especially Linear Temporal Logic (\ltl)~\cite{Pnu77}, have proven to be a valuable and more modern tool for specifying system behaviors and expressing temporal properties. \LTL synthesis and realizability have become an active research area in formal methods, and fruitful works about \ltl synthesis have been established on both the theoretical and practical aspects, e.g., \cite{FJR09,BJPPS12,MC18,MSL18}, to name a few. 

\LTL formulas are interpreted over \emph{infinite} traces, which are suitable for describing the infinite behaviors of non-terminating systems. Meanwhile, reasoning about temporal constraints or properties over a \emph{finite}-time horizon is useful and important in various areas, such as business processes~\cite{VPS09,PBV10}, robotics~\cite{LAFKV15}, and user preferences~\cite{BFM11}. To this end, a variant of \ltl with an adaptation to finite-trace semantics, called \ltlf, has emerged as a popular logic, especially in AI-related domains since its introduction~\cite{GV13}. Extensive works have studied fundamental theoretical and practical problems of \LTLf, e.g., satisfiability checking~\cite{LRPZV19} and translation to automata~\cite{SXLGP20,DF21}. In this work, we focus on the problem of synthesizing systems from specifications in \ltlf, which was first introduced in~\cite{GV15}. 

We study the problem of reactive synthesis in the context of synchronous reactive systems, which maintain a constant interaction with the environment. From the perspective of the system, variables involved in the specification can be partitioned into \emph{inputs} and \emph{outputs}: outputs are under the control of the system, while inputs are assigned by the environment adversarially. Therefore, at each time point, we need to find some output that can adequately handle all possible inputs. More specifically, given every possible input sequence, solving the synthesis problem is to find an output sequence that induces an execution satisfying the specification.


Automata-based approaches play a key role in solving \ltlf synthesis, which is the same as in other reactive synthesis problems such as \LTL synthesis~\cite{BCJ18}. Every \ltlf formula can be converted into an equivalent Deterministic Finite Automaton (\dfa) recognizing the same language~\cite{GV13}, and \ltlf synthesis and realizability can be solved by reducing to suitable two-player games specified by the corresponding \dfa~\cite{GV15}. The process of solving \LTLf synthesis can be broken down into two parts. Firstly, we construct an equivalent \dfa from the \ltlf specification. Secondly, a reachability two-player game on the \dfa, which is considered as the game arena, is solved by a \emph{backward} fixed-point computation. To solve this reachability game, we iteratively refine an under-approximation of the set of winning states (for the system) of the \DFA. Initially, the set of winning states is simply the set of accepting states. After that, it is expanded into a larger winning set after each iteration until no more states can be added. Upon reaching the fixed point, if the initial state is in the winning set, we conclude that the \LTLf formula is realizable and return a \emph{winning strategy}; conversely, the \LTLf formula is unrealizable. Hereafter, we refer to this approach as `backward search'. As established in~\cite{GV13,GV15}, an \ltlf formula can be translated into the corresponding \dfa in 2EXPTIME, a reachability game on a \dfa can be solved in linear time in the number of states of the automaton, and the asymptotic complexity of \ltlf synthesis is 2EXPTIME-complete.

Following the aforementioned theoretical framework, several efforts have been made in developing practical \ltlf synthesis approaches, which make various attempts to improve the \DFA construction process. S. Zhu et al.~\cite{ZTLPV17} transform \ltlf formulas into equivalent formulas in first-order logic (\fol) on finite words~\cite{Tho97} and employ the off-the-shelf \fol-to-\dfa translator MONA~\cite{MonaManual2001} for \dfa construction. Subsequent works \cite{TV19,BLTV20} decompose the \ltlf formula on the conjunction level and construct the corresponding \dfa for the conjuncts, respectively. G. De Giacomo et al. \cite{DF21} decompose the \ltlf formula on the thorough syntax level, translate \LTLf formulas into formulas in Linear Dynamic Logic over finite traces~\cite{GV13} (\ldlf), then build the \dfa based on the \ldlf semantics. All these studies collectively show that the \DFA construction is the main bottleneck of \LTLf synthesis in both theory and practice.

The primary challenge with the above synthesis techniques is the \DFA construction from the \LTLf specification, which has a doubly exponential complexity. Consequently, the question arises: is it feasible to solve the \LTLf synthesis problem without generating the entire \DFA? An intuitive idea is to perform the synthesis procedure \emph{on-the-fly}. In this work, we introduce an alternative framework for \ltlf synthesis, employing direct and \emph{top-down} automata construction and solving the corresponding automata games in a \emph{global forward} and \emph{local backward} manner. In top-down automata construction from \ltlf formulas, the states and transitions computed in the process are parts of the final automaton. This enables us to possibly decide the realizability before reaching the worst-case double-exponential complexity.

In detail, we present a technique to create the transition-based \dfa (\tdfa) from \LTLf specifications. This technique produces the automata based on the semantics of \LTLf, which utilizes \emph{formula progression}~\cite{Eme90,BK98} to separate what happens in the \emph{present} (edge) and what would happen in the \emph{future} (successor). It iteratively invokes a `primitive', which builds a deterministic transition from a current state each time.
To determine the realizability of the \ltlf specification and synthesize a system, we solve a \tdfa game on the fly while generating the \tdfa. Solving a \tdfa game means to classify the states of the corresponding \tdfa into system-winning states and environment-winning states, from which the system/environment can win the game. Once the initial state is determined as a system-winning or environment-winning state, the realizability result of the original problem can be concluded. If realizable, it would also return a winning strategy. Typically, the strategy may be partial, as it does not require traversing the entire state space in the on-the-fly synthesis.

From a global perspective, we perform a forward depth-first search. We search the state space starting from the initial state (the input \ltlf formula here) and proceed forward to computing the remaining states of the automata as necessary. For each of the states, we check whether it is system-winning or environment-winning. Once a state is determined, a backtracking procedure is invoked to check whether the predecessors are system/environment-winning accordingly. However, since there may be loops in the automaton transition system, we may not have collected all the necessary information upon the first visit to a state. This results in the occurrences of undetermined states during the global forward search. Consequently, a local backward search is also required. During the depth-first traversal, we employ Tarjan's algorithm~\cite{tarjan1972SCC} to detect strongly connected components ({\SCC}s). We show that we have already obtained all the necessary information required to resolve a current \SCC when the forward search of states within the \SCC is completed, though there may still be states left undetermined at that time. At this point, a local backward search is conducted within the scope of the current \SCC to finalize the states left undetermined by the forward search.

Furthermore, we propose two optimizations to make the above on-the-fly synthesis framework more practical. Firstly, we guide the synthesis procedure with satisfiable traces of the \ltlf specification to achieve a more targeted search. In \ltlf synthesis, the potential choices for both the system and the environment can be exponentially many. Hence, a random search in the above on-the-fly synthesis is unlikely to be efficient. Considering the winning condition of the games is reaching accepting states/transitions, we adopt a greedy strategy: each time we need to move forward, we choose the edge that leads to one of the nearest accepting transitions from the current state. Secondly, we investigate the semantic entailment relations among system-winning states, as well as among environment-winning states. If a newly computed state is semantically entailed by a known system-winning state or semantically entails a known environment-winning state, we can accordingly classify it as system-winning or environment-winning.

We implement our approach in a tool called \tool and conduct comprehensive experiments by comparing it with state-of-the-art \LTLf synthesis tools. Experimental results show that \tool and the on-the-fly approach achieve the best overall performance, though they cannot completely subsume other tools and approaches. Additionally, we also demonstrate the effectiveness of the two optimization techniques through an ablation study.

\paragraph{Origin of the Results} On-the-fly \ltlf synthesis was first presented in~\cite{XLZSPV21}, and model-guided \ltlf synthesis was introduced in~\cite{XLHXLPSV24}. Beyond these, the new contributions of this work include:
\begin{itemize}

\item Providing the full proofs for lemmas and theorems previously introduced in conference papers.

\item Formulating the {\sf (T)DFA} game and discussing its relationship with \ltlf synthesis in more detail.

\item Completing the on-the-fly approach by proposing a solution to better handle loops in the automaton transition system --- detecting {\SCC}s.

\item Introducing the optimization technique of state entailment.

\end{itemize} 

\section{Preliminaries}

A sequence of some elements is denoted by $\zeta = \zeta[0], \zeta[1], \cdots, \zeta[k], \cdots$. The elements can be propositional interpretations, states of an automaton, or elements from a set, and we denote by $\E$ the set of elements in general. A sequence can either be finite (i.e., $\zeta \in\E^*$) or infinite (i.e., $\zeta\in\E^\omega$). $\zeta[i]$ ($0 \leq i < |\zeta|$) is the $i$-th element of $\zeta$. $|\zeta|$ represents the length of $\zeta$, and we have $|\zeta|=\infty$ if $\zeta$ is infinite. We use $\zeta^i$ to represent $\zeta[0],\zeta[1],\cdots,\zeta[i-1]$ ($i\geq 1$), which is the prefix of $\zeta$ up to position $i$ (excluding $i$), and $\zeta_i$ to represent $\zeta[i],\zeta[i+1],\cdots,\zeta[n]$, which is the suffix of $\zeta$ from position $i$ (including $i$). The empty sequence is denoted by $\epsilon$. We have $\zeta^i=\epsilon$ and $\zeta_j=\epsilon$ when $i\leq 0$ and $j\geq|\zeta|$ hold, respectively.

\subsection{Linear Temporal Logic over Finite Traces}
Linear Temporal Logic over finite traces, or \ltlf~\cite{GV13}, extends propositional logic with finite-horizon temporal connectives. Generally speaking, \ltlf is a variant of Linear Temporal Logic~(\LTL)~\cite{Pnu77} that is interpreted over finite traces.
\subsubsection{\ltlf Syntax} Given a set of atomic propositions $\PP$, the syntax of \ltlf is identical to \LTL, and defined as:
\begin{equation}
  \varphi ::= \ttrue \mid p \mid \neg \varphi \mid \varphi \wedge \varphi \mid \Next \varphi \mid \varphi \Until \varphi\text{,}
\end{equation}
where $\ttrue$ represents the \emph{true} formula, $p \in \PP$ is an atomic proposition, $\neg$ represents \emph{negation}, $\wedge$ represents \emph{conjunction}, $\Next$ represents the \emph{strong Next} operator and $\Until$ represents the \emph{Until} operator. We also have the corresponding dual operators $\ffalse$~(\emph{false}) for $\ttrue$, $\vee$~(disjunction) for $\wedge$, $\Wnext$~(weak Next) for $\Next$ and $\Release$~(Release) for $\Until$. Moreover, we use the notation $\G\varphi$ (Global) and $\F\varphi$ (Future) to represent $\ffalse \Release\varphi$ and $\ttrue \Until\varphi$, respectively. Notably, $\Next$ is the standard \emph{Next} (also known as \emph{strong Next}) operator, while $\Wnext$ is \emph{weak Next}; $\Next$ requires the existence of a successor instant, while $\Wnext$ does not. Thus $\Wnext\phi$ is always true in the last instant of a finite trace since no successor exists there.
A literal is an atom $p \in \PP$ or its negation ($\neg p$). An \ltlf formula is in the negation normal form if the negation operator appears only in front of atomic propositions. Every \LTLf formula can be converted into an equivalent \LTLf formula in negation normal form in linear time. Hereafter, we assume that all \LTLf formulas are in negation normal form.

\subsubsection{\ltlf Semantics} A \emph{finite trace} $\rho = \rho[0],\rho[1],\cdots,\rho[n]\in(2^\PP)^*$ is a finite sequence of propositional interpretations. Intuitively, $\rho[i]$ is interpreted as the set of propositions that are $\trueVal$ at instant $i$. \ltlf formulas are interpreted over finite traces. For a finite trace $\rho$ and an \ltlf formula $\varphi$, we define the satisfaction relation $\rho\models\varphi$ (i.e., $\rho$ is a model of $\varphi$) as follows:
\begin{itemize}
	\item 
	$\rho \models \ttrue$ and $\rho \not\models \ffalse$;
	\item 
	$\rho \models p$ iff $p \in \rho[0]$, and $\rho \models \neg p$ iff $p \notin \rho[0]$, where $p\in\PP$ is an atomic proposition;
	\item 
	$\rho \models \varphi_1 \wedge \varphi_2$ iff $\rho \models \varphi_1$ and $\rho \models \varphi_2$;
    \item 
	$\rho \models \varphi_1 \vee \varphi_2$ iff $\rho \models \varphi_1$ or $\rho \models \varphi_2$;
	\item 
	$\rho \models \Next \varphi$ iff $|\rho|>1$ and $\rho_1 \models \varphi$;
    \item 
	$\rho \models \Wnext \varphi$ iff $|\rho|=1$ or $\rho_1 \models \varphi$;
    \item 
	$\rho \models \varphi_1 \Until \varphi_2$ iff there exists $i$ with $0\leq i < |\rho|$ such that \circled{1} $\rho_i\models \varphi_2$ holds, and \circled{2} for every $j$ with $0 \leq j < i$ it holds that $\rho_j \models \varphi_1$;
    \item 
	$\rho \models \varphi_1 \Release \varphi_2$ iff there exists $i$ with $0\leq i < |\rho|$ such that \circled{1} $i=|\rho|-1$ or $\rho_i\models\varphi_1$ holds, and \circled{2} for every $j$ with $0 \leq j \leq i$ it holds that $\rho_j \models \varphi_2$.
\end{itemize}
The set of finite traces that satisfy \LTLf formula $\varphi$ is the language of $\varphi$, denoted as $\LL(\varphi)=\{\rho\in(2^\PP)^+\mid \rho\models\varphi\}$.
Two \LTLf formulas $\varphi_1$ and $\varphi_2$ are semantically equivalent, denoted as $\varphi_1\equiv\varphi_2$, iff for every finite trace $\rho$, $\rho\models\varphi_1$ iff $\rho\models\varphi_2$. And $\varphi_1$ semantically entails $\varphi_2$, denoted as $\varphi_1\Rightarrow\varphi_2$, iff for every finite trace $\rho$, $\rho\models\varphi_2$ if $\rho\models\varphi_1$. And $\varphi_1\not\Rightarrow\varphi_2$ denotes that semantic entailment does not hold.
From the \ltlf semantics, we have the following lemma.
\begin{lemma}\label{lem:UR-equiv}
    For arbitrary \ltlf formulas $\varphi_1$ and $\varphi_2$, it holds that
    \begin{itemize}
        \item $\varphi_1\Until\varphi_2 \equiv \varphi_2\vee(\varphi_1\wedge\Next(\varphi_1\Until\varphi_2))$;
        \item $\varphi_1\Release\varphi_2 \equiv \varphi_2\wedge(\varphi_1\vee\Wnext(\varphi_1\Release\varphi_2))$.
    \end{itemize}
\end{lemma}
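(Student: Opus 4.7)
The plan is to prove both equivalences by direct semantic unfolding, fixing an arbitrary finite trace $\rho\in(2^\PP)^+$ and showing $\rho\models\text{LHS}$ iff $\rho\models\text{RHS}$ for each of the two claims. Both proofs amount to splitting the existential witness $i$ in the semantics of $\Until$ and $\Release$ according to whether $i=0$ or $i>0$, and then shifting indices by one to move between $\rho$ and $\rho_1$. I would present the two parts as two separate short arguments after introducing the common setup.

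For the $\Until$ equivalence, the forward direction starts from a witness $i$ with $\rho_i\models\varphi_2$ and $\rho_j\models\varphi_1$ for all $j<i$. If $i=0$ then $\rho\models\varphi_2$, giving the left disjunct of the RHS. If $i>0$ then $\rho\models\varphi_1$ (via $j=0$) and $|\rho|>1$; moreover $i-1$ is a valid witness for $\rho_1\models\varphi_1\Until\varphi_2$, so $\rho\models\Next(\varphi_1\Until\varphi_2)$ and the right disjunct holds. For the converse, a model of $\varphi_2$ at $\rho$ yields the witness $i=0$, and a model of $\varphi_1\wedge\Next(\varphi_1\Until\varphi_2)$ gives, from the strong $\Next$, both $|\rho|>1$ and a witness $i'$ for $\rho_1$, which shifts to the witness $i'+1$ at $\rho$ (the new antecedent position $j=0$ is covered by $\rho\models\varphi_1$).

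For the $\Release$ equivalence the structure is the same but finite-trace boundary effects matter. From a witness $i$ for the LHS we immediately get $\rho\models\varphi_2$ (take $j=0$). If $i=0$, then either $0=|\rho|-1$ (so $|\rho|=1$ and $\rho\models\Wnext(\varphi_1\Release\varphi_2)$ holds vacuously) or $\rho\models\varphi_1$. If $i>0$, then $|\rho|>1$ and $i-1$ witnesses $\rho_1\models\varphi_1\Release\varphi_2$, yielding $\rho\models\Wnext(\varphi_1\Release\varphi_2)$; combined with $\rho\models\varphi_2$ this is exactly the RHS. Conversely, assuming $\rho\models\varphi_2$, if $\rho\models\varphi_1$ then $i=0$ works; otherwise $\rho\models\Wnext(\varphi_1\Release\varphi_2)$, and we split on $|\rho|$: when $|\rho|=1$ the witness $i=0=|\rho|-1$ works directly, and when $|\rho|>1$ we lift a witness $i'$ from $\rho_1$ to $i'+1$ at $\rho$ (the universal clause at $j=0$ holds because $\rho\models\varphi_2$).

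The only subtlety, and what I consider the main obstacle, is the $\Release$ case when $|\rho|=1$, because the semantic clause for $\Wnext$ is vacuously true there and one must avoid mistakenly invoking $\rho_1$ (which is empty). Isolating this case explicitly in both directions, and being careful to use the $i=|\rho|-1$ disjunct of the $\Release$ semantics rather than the $\rho_i\models\varphi_1$ disjunct in exactly that situation, keeps the argument clean. The rest reduces to routine index shifting that follows the same pattern as the familiar infinite-trace \LTL expansion laws.
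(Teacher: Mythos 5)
Your proof is correct: the case split on whether the existential witness $i$ equals $0$ or is positive, together with the index shift between $\rho$ and $\rho_1$, is exactly the right argument, and you correctly isolate the one genuinely delicate point, namely the $|\rho|=1$ boundary for $\Release$ where the $i=|\rho|-1$ disjunct (rather than $\rho_i\models\varphi_1$) must be invoked and $\Wnext$ holds vacuously. The paper states this lemma without proof, asserting it follows from the \ltlf semantics, and your argument is precisely the standard unfolding that justifies that claim.
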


\subsubsection{Propositional Semantics for \ltlf} In addition to the language-based semantics of \ltlf, we assign propositional semantics to \ltlf formulas by treating temporal subformulas as propositional variables, which plays a useful role in linear temporal reasoning as in \cite{LRPZV19,EKS20,GFLVXZ22}. A formula $\varphi$ is classified as \emph{temporal} if it is neither a conjunction nor a disjunction, i.e., either it is a literal or the root of its syntax tree is labelled by a temporal operator ($\Next$, $\Wnext$, $\Until$, or $\Release$). We represent the set of temporal subformulas of $\varphi$ as $\tcl{\varphi}$ (signifying `\underline{t}emporal \underline{cl}osure'). Formally, given a set of \LTLf formulas $\I$ and an \LTLf formula $\varphi$, the propositional satisfaction relation $\I\models_p\varphi$ is defined as follows:
\begin{itemize}
    \item $\I\models_p\ttrue$ and $\I\not\models_p\ffalse$;
    \item $\I\models_p l$ iff $l\in\I$, where $l$ is a literal;
	\item $\I\models_p\varphi_1 \wedge \varphi_2$ iff $\I\models_p\varphi_1$ and $\I\models_p\varphi_2$;
	\item $\I\models_p\varphi_1 \vee \varphi_2$ iff $\I\models_p\varphi_1$ or $\I\models_p\varphi_2$;
    \item $\I\models_p \mathop{op}\varphi$ iff $\mathop{op}\varphi\in\I$, where $\mathop{op}\in\curlyBra{\Next,\Wnext}$;
    \item $\I\models_p \varphi_1\mathop{op}\varphi_2$ iff $\varphi_1\mathop{op}\varphi_2\in\I$, where $\mathop{op}\in\curlyBra{\Until,\Release}$.
\end{itemize}
Two formulas $\varphi$, $\psi$ are propositionally equivalent, denoted $\varphi\sim\psi$, if for arbitrary formula sets $\I$, $\I\models_p \varphi$ iff $\I \models_p \psi$ holds. The propositional equivalence class of a formula $\varphi$ is denoted by $[\varphi]_\sim$ and defined as $[\varphi]_\sim=\curlyBra{\psi\mid\varphi\sim\psi}$. The propositional quotient set of a set of formulas $\Phi$ is denoted $\Phi/\sim$ and defined as $\Phi/\sim=\curlyBra{[\varphi]_\sim\mid\varphi\in\Phi}$. The following are two properties of propositional semantics.
\begin{lemma}\label{lem:prop-1}
    Given a function $f$ on formulas such that $f(\ttrue)=\ttrue$, $f(\ffalse)=\ffalse$, and $f(\chi_1\wedge\chi_2)=f(\chi_1) \wedge f(\chi_1)$, $f(\chi_1 \vee \chi_2) = f(\chi_1) \vee f(\chi_2)$ for all formulas $\chi_1$ and $\chi_1$, for every \ltlf formulas $\varphi$ and $\psi$, if $\varphi\sim\psi$, then $f(\varphi) \sim f(\psi)$.
\end{lemma}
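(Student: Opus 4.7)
The plan is to reduce the statement to a structural fact about how $f$ commutes with the Boolean skeleton of a formula, and then to invoke the hypothesis $\varphi\sim\psi$ on a carefully chosen auxiliary set of formulas. Concretely, fix an arbitrary set of formulas $\I$; I want to show $\I\models_p f(\varphi)$ iff $\I\models_p f(\psi)$. To do this, I will build from $\I$ a ``pulled-back'' set $\I^\star$ consisting of exactly those literals and temporal subformulas $\alpha$ (of $\varphi$ or $\psi$) for which $\I\models_p f(\alpha)$, and show that $\I^\star$ is the right set on which to apply $\varphi\sim\psi$.

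The key intermediate claim is:
\begin{equation*}
\text{for every }\LTLf\text{ formula }\chi,\quad \I\models_p f(\chi)\iff \I^\star\models_p\chi.
\end{equation*}
This is proved by a routine structural induction on $\chi$ using the definition of $\models_p$ together with the four commutation identities supplied in the hypothesis. The base cases $\chi\in\{\ttrue,\ffalse\}$ follow from $f(\ttrue)=\ttrue$ and $f(\ffalse)=\ffalse$. The cases where $\chi$ is a literal or a temporal formula (i.e.\ where $\chi$ is ``atomic'' for $\models_p$) are immediate from the definition of $\I^\star$: $\I\models_p f(\chi)$ iff $\chi\in\I^\star$ iff $\I^\star\models_p\chi$. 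Here it is crucial that the hypothesis does not constrain $f$ on temporal formulas at all --- $f$ can rewrite them arbitrarily --- because we only ever use the side $\I\models_p f(\alpha)$ through the defining membership $\alpha\in\I^\star$. The inductive cases $\chi=\chi_1\wedge\chi_2$ and $\chi=\chi_1\vee\chi_2$ are handled directly: $\I\models_p f(\chi_1\wedge\chi_2)=f(\chi_1)\wedge f(\chi_2)$ iff $\I\models_p f(\chi_1)$ and $\I\models_p f(\chi_2)$, which by induction is equivalent to $\I^\star\models_p\chi_1$ and $\I^\star\models_p\chi_2$, i.e.\ $\I^\star\models_p\chi_1\wedge\chi_2$; the $\vee$ case is symmetric.

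With the claim in hand, the lemma closes quickly: applying it to $\chi=\varphi$ and $\chi=\psi$ gives $\I\models_p f(\varphi)$ iff $\I^\star\models_p\varphi$, and $\I\models_p f(\psi)$ iff $\I^\star\models_p\psi$. Since $\varphi\sim\psi$ means $\I^\star\models_p\varphi$ iff $\I^\star\models_p\psi$ for this particular $\I^\star$ (the hypothesis holds for every set, so in particular for $\I^\star$), we obtain $\I\models_p f(\varphi)$ iff $\I\models_p f(\psi)$. Because $\I$ was arbitrary, $f(\varphi)\sim f(\psi)$.

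I expect no serious obstacle: the argument is essentially a change-of-variables between the truth value of an atom $\alpha$ under $\I$ after applying $f$, and the truth value of $\alpha$ itself under $\I^\star$. The only subtlety to get right in writing it up is being explicit that ``atomic'' in the sense of $\models_p$ means literals together with formulas whose root is a temporal operator, so that every NNF formula decomposes uniquely into a Boolean combination (via $\wedge,\vee$) of such atoms --- which is precisely the structure the hypothesis on $f$ is built to exploit.
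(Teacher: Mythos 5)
Your proposal is correct and follows essentially the same route as the paper: the paper's proof also hinges on a structural-induction claim of the form $\I\models_p f(\chi)\Leftrightarrow\{\alpha\in\tcl{\chi}\mid\I\models_p f(\alpha)\}\models_p\chi$ and then applies $\varphi\sim\psi$ to the pulled-back interpretation. The only cosmetic difference is that you define a single set $\I^\star$ over the atoms of both $\varphi$ and $\psi$ at once (which lets you skip the paper's intermediate restriction to $\tcl{\varphi}\cap\tcl{\psi}$); just state explicitly that your key claim is asserted only for formulas whose atoms lie among those of $\varphi$ or $\psi$, which is all you use.
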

\begin{proof}
A proof for this lemma within the context of \LTL is provided in \cite{EKS20}, which can be readily extended to \LTLf. We here offer a brief proof sketch. First, for every formula $\varphi$ and every assignment $\I$, we can obtain the following result by a structural induction on $\varphi$.
\begin{equation}\label{eq:lem-prop-1-proof}
    \I\models_p f(\varphi) \Leftrightarrow \{\chi\in\tcl{\varphi}\mid\I\models_p f(\chi)\}\models_p\varphi
\end{equation}
Second, we demonstrate that $\I\models_p f(\varphi) \Rightarrow\I\models_p f(\psi)$ holds for every assignment $\I$, which suffices to support Lemma~\ref{lem:prop-1} by symmetry.
\begin{align*}
    \I\models_p f(\varphi)&\Leftrightarrow \{\chi\in\tcl{\varphi}\mid\I\models_p f(\chi)\}\models_p\varphi &(\text{Equation~(\ref{eq:lem-prop-1-proof})})\\
    &\Leftrightarrow \{\chi\in\tcl{\varphi}\mid\I\models_p f(\chi)\}\models_p\psi &(\varphi\sim\psi)\\
    &\Rightarrow \{\chi\in\tcl{\varphi}\cap\tcl{\psi}\mid\I\models_p f(\chi)\}\models_p\psi &\\
    &\Rightarrow \{\chi\in\tcl{\psi}\mid\I\models_p f(\chi)\}\models_p\psi &\\
    &\Leftrightarrow \I\models_p f(\psi) &(\text{Equation~(\ref{eq:lem-prop-1-proof})})
\end{align*}
\end{proof}
\begin{lemma}\label{lem:prop-2}
    Given two \ltlf formulas $\varphi_1$ and $\varphi_2$ such that $\varphi_1\sim\psi$, $\rho\models\varphi$ iff $\rho\models\psi$ holds for every finite trace $\rho$.
\end{lemma}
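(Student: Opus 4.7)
The plan is to reduce trace-based satisfaction to propositional satisfaction over a canonically chosen assignment, and then invoke the hypothesis $\varphi \sim \psi$. Concretely, for every finite trace $\rho$ I would define the assignment
\[
\I_\rho \;=\; \{l \text{ literal} \mid \rho \models l\} \;\cup\; \{\chi \in \tcl{\varphi} \cup \tcl{\psi} \mid \text{root}(\chi) \in \{\Next,\Wnext,\Until,\Release\} \text{ and } \rho \models \chi\}.
\]
Intuitively, $\I_\rho$ records exactly which \emph{temporal} (i.e., non-Boolean-composite) subformulas of $\varphi$ and $\psi$ are true on $\rho$. The whole proof then hinges on the following bridge claim, which I would state and prove as the main technical step:
\begin{equation}\label{eq:bridge}
\text{for every \ltlf formula } \chi \text{ built from the temporal subformulas of } \varphi \text{ or } \psi,\quad \rho \models \chi \iff \I_\rho \models_p \chi.
\end{equation}

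First, I would establish \eqref{eq:bridge} by structural induction on $\chi$. The base cases ($\ttrue$, $\ffalse$, a literal $l$, or a temporal formula whose root is $\Next,\Wnext,\Until,$ or $\Release$) follow directly from the definition of $\I_\rho$ together with the definitions of $\models$ and $\models_p$ for literals and temporal atoms. The inductive cases $\chi = \chi_1 \wedge \chi_2$ and $\chi = \chi_1 \vee \chi_2$ are immediate because both satisfaction relations decompose conjunction and disjunction in the same Boolean way. Note that no induction on the structure of the \emph{temporal} operators is needed here: $\models_p$ treats each temporal subformula as an atom, so the equivalence is a purely propositional fact once the assignment $\I_\rho$ is fixed.

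Second, I would combine \eqref{eq:bridge} with the hypothesis $\varphi \sim \psi$. Fixing an arbitrary finite trace $\rho$, the chain
\[
\rho \models \varphi \;\stackrel{\eqref{eq:bridge}}{\iff}\; \I_\rho \models_p \varphi \;\stackrel{\varphi \sim \psi}{\iff}\; \I_\rho \models_p \psi \;\stackrel{\eqref{eq:bridge}}{\iff}\; \rho \models \psi
\]
concludes the proof. The middle step uses the definition of $\sim$: since $\varphi \sim \psi$ holds for \emph{arbitrary} assignments, it holds in particular for $\I_\rho$.

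The main obstacle I expect is writing down \eqref{eq:bridge} cleanly, because the definition of $\I_\rho$ must be wide enough to cover every temporal subformula that actually occurs in $\varphi$ and $\psi$ (otherwise $\I_\rho \models_p \varphi$ would be ill-defined on some atom), yet the induction must not require unfolding the temporal operators themselves. Taking the temporal closure $\tcl{\varphi} \cup \tcl{\psi}$ as the domain, and observing that $\models_p$ is insensitive to which temporal atoms appear outside a given formula, resolves this subtlety.
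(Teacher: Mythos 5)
Your proof is correct and follows essentially the same route as the paper: both reduce trace satisfaction to propositional satisfaction over the canonical assignment of temporal subformulas true on $\rho$ (your bridge claim is the paper's Equation~(\ref{eq:lem-prop-2-proof})) and then invoke $\varphi\sim\psi$ on that assignment. Your one refinement---taking the single assignment over $\tcl{\varphi}\cup\tcl{\psi}$ rather than over $\tcl{\varphi}$ alone---turns the paper's one-directional middle implications (and its appeal to symmetry) into a clean chain of equivalences, which is a slightly tidier presentation of the same idea.
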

\begin{proof}
We here offer a brief proof sketch. First, for every formula $\varphi$ and every finite trace $\rho$, we can obtain the following result by a structural induction on $\varphi$.
\begin{equation}\label{eq:lem-prop-2-proof}
    \rho\models\varphi \Leftrightarrow \{\chi\in\tcl{\varphi}\mid\rho\models \chi\}\models_p\varphi
\end{equation}
Second, we demonstrate that $\rho\models\varphi \Rightarrow\rho\models\psi$ holds for every finite trace $\rho$, which suffices to support Lemma~\ref{lem:prop-2} by symmetry.
\begin{align*}
    \rho\models\varphi \Leftrightarrow& \{\chi\in\tcl{\varphi}\mid\rho\models \chi\}\models_p\varphi &(\text{Equation~(\ref{eq:lem-prop-2-proof})})\\
    \Leftrightarrow& \{\chi\in\tcl{\varphi}\mid\rho\models \chi\}\models_p\psi &(\varphi\sim\psi)\\
    \Rightarrow& \{\chi\in\tcl{\varphi}\cap\tcl{\psi}\mid\rho\models \chi\}\models_p\psi &\\
    \Rightarrow& \{\chi\in\tcl{\psi}\mid\rho\models \chi\}\models_p\psi &\\
    \Leftrightarrow& \rho\models\psi&(\text{Equation~(\ref{eq:lem-prop-2-proof})})
\end{align*}
\end{proof}

Lemma~\ref{lem:prop-2} indicates that for any finite trace $\rho$, if $\rho\models\varphi$ and $\psi\in[\varphi]$, then $\rho\models\psi$ holds. We then generalize the language-based satisfaction relation to propositional equivalence classes. Given an \ltlf formulas $\varphi$ and a finite trace $\rho$, 
 we have $\rho\models[\varphi]_\sim$ iff $\rho\models\varphi$.

\subsection{\ltlf Synthesis}

In the context of reactive \emph{synthesis}, a comprehensive \emph{specification} should encompass essential information, including the properties that the targeted system is required to satisfy, the partition of variables controlled separately by the \emph{system} and the \emph{environment}, and the type of the targeted system, i.e., either Mealy or Moore. Formally, an \LTLf specification is a tuple $(\varphi,\X, \Y)_t$, where $\varphi$ is an \LTLf formula over propositions in $\X\cup\Y$; $\X$ is the set of input variables controlled by the environment, $\Y$ is the set of output variables controlled by the system, and it holds that $\X\cap\Y = \emptyset$; and $t\in\curlyBra{Mealy, Moore}$ is the type of the target system.
In a reactive system, interactions happen in turns where both the system and the environment assign values to their respective controlled variables. The order of assignment within each turn determines the system types: if the environment assigns values first, it creates a Mealy machine; conversely, if the system assigns values first, it results in a Moore machine.

\begin{definition}[\ltlf Realizability and Synthesis]\label{def:syn}

An \ltlf specification $\roundBra{\varphi,\X,\Y}_t$ is realizable, iff:
\begin{itemize}
\item $\bm{t=Mealy.}$ \quad there exists a winning strategy $g: (2^\X)^+ \to 2^\Y$ such that for an arbitrary infinite sequence $\lambda = X_0, X_1, \cdots \in (2^\X)^\omega$ of propositional interpretations over $\X$, there is $k \geq 0$ such that $\rho\models\varphi$ holds, where $\rho=(X_0\cup g(X_0)),(X_1\cup g(X_0,X_1)),\cdots,(X_k\cup g(X_0,\cdots,X_{k}))$;

\item $\bm{t=Moore.}$ \quad there exists a winning strategy $g: (2^\X)^* \to 2^\Y$ such that for an arbitrary infinite sequence $\lambda = X_0, X_1, \cdots \in (2^\X)^\omega$ of propositional interpretations over $\X$, there is $k \geq 0$ such that $\rho\models\varphi$ holds, where $\rho=(X_0\cup g(\epsilon)),(X_1\cup g(X_0)),\cdots,(X_k\cup g(X_0,\cdots,X_{k-1}))$.
\end{itemize}
The \ltlf realizability problem is to determine whether an \ltlf specification is realizable, and the \ltlf synthesis problem for a realizable specification is to compute a winning strategy. An \LTLf specification is unrealizable when it is not realizable.
\end{definition}

Synthesis for constructing Mealy systems is also referred to as \emph{environment-first} synthesis, while synthesis for constructing Moore systems is termed \emph{system-first} synthesis. The methodologies presented in this paper are applicable to both environment-first and system-first synthesis. For brevity, we only focus on the system-first synthesis in the main text.

\subsection{Transition-Based \DFA}

The Transition-based Deterministic Finite Automaton~(\TDFA) is a variant of the Deterministic Finite Automaton~\cite{SXLGP20}.
\begin{definition}[\TDFA]\label{def:tdfa}
A transition-based \DFA (\TDFA) is a tuple $\A=(2^\PP,S,init,\delta,T)$ where
\begin{itemize}
    \item $2^\PP$ is the alphabet;
    \item $S$ is the set of states;
    \item $init\in S$ is the initial state;
    \item $\delta:S\times 2^\PP\to S$ is the transition function;
    \item $T\subseteq (S\times(2^\PP))$ is the set of accepting transitions.
\end{itemize}
\end{definition}

The run $r$ of a finite trace $\rho=\rho[0], \rho[1],\cdots, \rho[n] \in (2^\PP)^+$ on a \TDFA $\A$ is a finite state sequence $r = s_0,s_1,\cdots,s_n$ such that $s_0=init$ is the initial state, $\delta(s_i,\rho[i])=s_{i+1}$ holds for $0\leq i<n$. Note that runs of \TDFA do not need to include the destination state of the last transition, which is implicitly indicated by $s_{n+1}=\delta (s_n, \rho[n])$, since the starting state ($s_n$) together with the labels of the transition ($\rho[n]$) are sufficient to determine the destination.
The trace $\rho$ is accepted by $\A$ iff the corresponding run $r$ ends with an accepting transition, i.e., $(s_n,\rho[n])\in T$. The set of finite traces accepted by a \TDFA $\A$ is the language of $\A$, denoted as $\LL(\A)$.
According to~\cite{SXLGP20}, \TDFA has the same expressiveness as \DFA.

\subsection{Strongly Connected Components of a Directed Graph}

Let $G=(V,E)$ be a directed graph. A component $C\subseteq V$ of $G$ is strongly connected if a directed path exists between any two vertices. $C\subseteq V$ is a Strongly Connected Component, abbreviated as \SCC, if $C$ is strongly connected and no proper superset of $C$ is strongly connected. The set of {\SCC}s of $G$ forms a Directed Acyclic Graph (\DAG) that has an edge from $C_0$ to $C_1$ when there exists a vertex in $C_1$ reachable from a vertex in $C_0$. Tarjan’s algorithm~\cite{tarjan1972SCC} finds all the {\SCC}s of $G$ in a single depth-first traversal of $G$, visiting each vertex just once, with a complexity of $O(|V|+|E|)$. These components are detected in the reverse order of the topological sort of the \DAG formed by the components. Specifically, if $(C_0,C_1)$ is an edge of this \DAG, then $C_1$ is found before $C_0$.
Omitting the details of Tarjan’s algorithm, we assume that it supports the following {\API}s along with a depth-first traversal:
\begin{itemize}
\item $\isSccRoot(v)$ checks whether $v\in V$ is the root vertex of the current \SCC. This function should be invoked when all successors of $v$ have been processed with a depth-first strategy and it backtracks to $v$ for the last time. If it returns $\trueVal$, it indicates that we have just finished traversing a \SCC.

\item $\getScc()$ retrieves the \SCC that has been detected just now after the $\isSccRoot(v)$ call returns $\trueVal$.
\end{itemize}

\section{Automata Construction}\label{sec:tdfa}

{\sf (T)DFA} construction plays a foundamental role in solving the \ltlf synthesis problem. We now present a method for converting \ltlf into \tdfa using \emph{formula progression}. This approach directly leverages \ltlf semantics to expand the transition system. Additionally, the intermediate results generated during the construction process integrate into the final automaton, so that it can support on-the-fly \LTLf synthesis.

Formula progression has been investigated in prior studies~\cite{Eme90,BK98}. Here we extend this concept specifically to \tdfa construction from \ltlf. Given an \LTLf formula $\varphi$ over $\PP$ and a finite trace $\rho\in(2^\PP)^+$, to make $\rho\models \varphi$, we can start from $\varphi$ to progress through $\rho$. We consider \LTLf formula $\varphi$ into a requirement about the \emph{present} $\rho[0]$, which can be verified straightforwardly, and a requirement about the \emph{future} that needs to hold for the yet unavailable suffix $\rho_1$. Essentially, formula progression looks at $\varphi$ and $\rho[0]$, and progresses a new formula $\fp{\varphi,\rho[i]}$ such that $\rho\models\varphi$ iff $\rho_{1}\models \fp{\varphi,\rho[0]}$. This procedure is analogous to an automaton reading a trace, where it reaches accepting transitions by moving forward step by step.

\begin{definition}[Formula Progression for \ltlf]\label{def:fprog}
Given an \ltlf formula $\varphi$ and a non-empty finite trace $\rho$, the progression formula $\fp{\varphi, \rho}$ is recursively defined as follows:
\begin{itemize}
    \item $\fp{\ttrue, \rho}=\ttrue$ and $\fp{\ffalse, \rho}=\ffalse$;
    
    \item $\fp{p, \rho} = \ttrue$ if $p\in\rho[0]$; $\fp{p, \rho} = \ffalse$ if $p\notin\rho[0]$;

    \item $\fp{\neg p, \rho} = \ffalse$ if $p\in\rho[0]$; $\fp{\neg p, \rho} = \ttrue$ if $p\notin\rho[0]$;
    
    \item $\fp{\varphi_1\wedge\varphi_2, \rho} = \fp{\varphi_1, \rho}\wedge\fp{\varphi_2, \rho}$;
    
    \item $\fp{\varphi_1\vee\varphi_2, \rho} = \fp{\varphi_1, \rho}\vee\fp{\varphi_2, \rho}$;
    
    \item $\fp{\Next\varphi, \rho} = \varphi$ if $|\rho|=1$; $\fp{\Next\varphi, \rho} = \fp{\varphi, \rho_1}$ if $|\rho|>1$;
    
    \item $\fp{\Wnext\varphi, \rho} = \varphi$ if $|\rho|=1$; $\fp{\Wnext\varphi, \rho} = \fp{\varphi, \rho_1}$ if $|\rho|>1$;
    
    \item $\fp{\varphi_1 \Until\varphi_2, \rho}=\fp{\varphi_2, \rho} \vee  (\fp{\varphi_1, \rho} \wedge \fp{\Next(\varphi_1 \Until\varphi_2), \rho})$;
    
    \item $\fp{\varphi_1 \Release\varphi_2, \rho}= \fp{\varphi_2, \rho}  \wedge (\fp{\varphi_1, \rho} \vee \fp{\Wnext(\varphi_1 \Release\varphi_2), \rho})$.
\end{itemize}
\end{definition}

Notice that a letter $\sigma\in2^\PP$ is interpreted as a trace of length $1$. So the second parameter of the function $\fp{\varphi,\rho}$ can be either a trace or a letter. Additionally, when applied to a specific trace $\rho$, the function $\fp{\varphi,\rho}$ satisfies the conditions stated in Lemma~\ref{lem:prop-1}. Hence, we can generalize \ltlf formula progression from single formulas to propositional equivalence classes:
\begin{equation}
  \fp{[\varphi]_\sim,\rho}=[\fp{\varphi,\rho}]_\sim=[\fp{\psi,\rho}]_\sim\text{,}
\end{equation}
where $\psi\in[\varphi]_\sim$.

The following lemma indicates that Definition~\ref{def:fprog} is aligned with our expectations of decomposing an \ltlf formula into requirements about the present and the future.
\begin{lemma}\label{lem:fprog}
    Given an \LTLf formula $\varphi$ and a finite trace $\rho\in(2^\PP)^+$ with $|\rho|>1$, it holds that
    \begin{enumerate}
        \item $\rho\models\varphi$ iff $\rho_1\models\fp{\varphi,\rho[0]}$, and\label{item:lem-fprog-1}
        \item $\rho\models[\varphi]_\sim$ iff $\rho_1\models\fp{[\varphi]_\sim,\rho[0]}$.\label{item:lem-fprog-2}
    \end{enumerate}
\end{lemma}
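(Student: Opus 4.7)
The plan is to prove part~(\ref{item:lem-fprog-1}) by structural induction on the \ltlf formula $\varphi$, with the trace $\rho$ fixed throughout (and $|\rho|>1$), and then to derive part~(\ref{item:lem-fprog-2}) from part~(\ref{item:lem-fprog-1}) together with Lemmas~\ref{lem:prop-1} and~\ref{lem:prop-2}. Because only the single letter $\rho[0]$ is ever fed into $\fp{\cdot,\cdot}$ in the lemma statement, Definition~\ref{def:fprog} is always invoked in its $|\rho|=1$ clauses, which simplifies the case analysis considerably.

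For the base cases ($\ttrue$, $\ffalse$, and literals $p$, $\neg p$) the biconditional is immediate from Definition~\ref{def:fprog} and the \ltlf semantics. The Boolean cases ($\wedge$, $\vee$) follow at once from the induction hypothesis combined with the homomorphism clauses $\fp{\varphi_1\wedge\varphi_2,\rho[0]}=\fp{\varphi_1,\rho[0]}\wedge\fp{\varphi_2,\rho[0]}$ and its disjunctive counterpart. The $\Next$ and $\Wnext$ cases do not even require the induction hypothesis: since $\rho[0]$ is a trace of length $1$, Definition~\ref{def:fprog} gives $\fp{\Next\varphi,\rho[0]}=\fp{\Wnext\varphi,\rho[0]}=\varphi$, while the hypothesis $|\rho|>1$ collapses both $\rho\models\Next\varphi$ and $\rho\models\Wnext\varphi$ to $\rho_1\models\varphi$.

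The main obstacle is the $\Until$ and $\Release$ cases, because formula progression here does not reduce to progression of strictly smaller formulas: for instance, $\fp{\varphi_1\Until\varphi_2,\rho[0]}$ contains $\fp{\Next(\varphi_1\Until\varphi_2),\rho[0]}=\varphi_1\Until\varphi_2$ verbatim, so a naive structural induction stalls. I would circumvent this by unfolding the semantics via Lemma~\ref{lem:UR-equiv}: $\rho\models\varphi_1\Until\varphi_2$ iff $\rho\models\varphi_2$ or $(\rho\models\varphi_1$ and $\rho\models\Next(\varphi_1\Until\varphi_2))$. On the right-hand side, the semantic conditions $\rho\models\varphi_i$ concern strict subformulas and are discharged by the induction hypothesis as $\rho_1\models\fp{\varphi_i,\rho[0]}$, while $\rho\models\Next(\varphi_1\Until\varphi_2)$ collapses to $\rho_1\models\varphi_1\Until\varphi_2$ because $|\rho|>1$. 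Matching the resulting Boolean combination against $\fp{\varphi_1\Until\varphi_2,\rho[0]}=\fp{\varphi_2,\rho[0]}\vee(\fp{\varphi_1,\rho[0]}\wedge(\varphi_1\Until\varphi_2))$ closes the case; the $\Release$ case is entirely symmetric.

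For part~(\ref{item:lem-fprog-2}), taking $f$ to be $\fp{\cdot,\rho[0]}$ in Lemma~\ref{lem:prop-1} is legitimate, since Definition~\ref{def:fprog} preserves $\ttrue$ and $\ffalse$ and distributes over $\wedge$ and $\vee$; hence $\varphi\sim\psi$ implies $\fp{\varphi,\rho[0]}\sim\fp{\psi,\rho[0]}$, making the lifted notation $\fp{[\varphi]_\sim,\rho[0]}=[\fp{\varphi,\rho[0]}]_\sim$ well-defined. Combining this with Lemma~\ref{lem:prop-2} (which licenses the convention $\rho\models[\chi]_\sim$ iff $\rho\models\chi$) reduces part~(\ref{item:lem-fprog-2}) syntactically to part~(\ref{item:lem-fprog-1}).
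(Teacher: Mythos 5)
Your proposal is correct and follows essentially the same route as the paper: structural induction on $\varphi$ for part~(\ref{item:lem-fprog-1}), with the $\Until$/$\Release$ cases handled exactly as in the paper by unfolding via Lemma~\ref{lem:UR-equiv} so that the self-referential occurrence collapses to $\rho_1\models\varphi_1\Until\varphi_2$ (resp.\ $\Release$) under $|\rho|>1$, and the remaining conjuncts/disjuncts are discharged by the induction hypotheses on the strict subformulas. Your derivation of part~(\ref{item:lem-fprog-2}) from Lemmas~\ref{lem:prop-1} and~\ref{lem:prop-2} is a slightly more explicit version of the paper's one-line remark, but substantively identical.
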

\begin{proof}
We focus on proving (\ref{item:lem-fprog-1}) here, while (\ref{item:lem-fprog-2}) follows (\ref{item:lem-fprog-1}) directly from the definition of \ltlf language-based satisfaction relation on propositional equivalence classes.

We establish the proof by structural induction over the syntax composition of $\varphi$. 

\textbf{Basis.}
\begin{itemize}
\item $\varphi=\ttrue/\ffalse$.
By Definition~\ref{def:fprog}, we have $\fp{\varphi,\rho[0]}=\fp{\ttrue,\rho[0]}=\ttrue/\ffalse$. The lemma holds obviously.
    
\item $\varphi=p$.

$\rho\models p
  \Leftrightarrow
p\in\rho[0]
  \xLeftrightarrow{\text{Definition~\ref{def:fprog}}}
\fp{p,\rho[0]}=\ttrue
  \xLeftrightarrow{\text{Definition~\ref{def:fprog}}}
\rho_1\models\fp{p,\rho[0]}
$.

\item $\varphi=\neg p$.

$\rho\models\neg p
  \Leftrightarrow
p\notin\rho[0]
  \xLeftrightarrow{\text{Definition~\ref{def:fprog}}}
\fp{\neg p,\rho[0]}=\ttrue
  \xLeftrightarrow{\text{Definition~\ref{def:fprog}}}
\rho_1\models\fp{\neg p,\rho[0]}
$.
\end{itemize}

\textbf{Induction.}
\begin{itemize}
    
\item $\varphi=\varphi_1\wedge\varphi_2$, the induction hypotheses are $\rho\models\varphi_1$ iff $\rho_1\models\fp{\varphi_1,\rho[0]}$ and $\rho\models\varphi_2$ iff $\rho_1\models\fp{\varphi_2,\rho[0]}$.
\begin{align*}
\rho\models\varphi_1\wedge\varphi_2 \Leftrightarrow& \rho\models\varphi_1\text{ and }\rho\models\varphi_2 &(\text{\ltlf semantics})\\
\Leftrightarrow& \rho_1\models\fp{\varphi_1,\rho[0]}\text{ and }\rho_1\models\fp{\varphi_2,\rho[0]} &(\text{induction hypotheses})\\
\Leftrightarrow& \rho_1\models\fp{\varphi_1,\rho[0]}\wedge\fp{\varphi_2,\rho[0]} &(\text{\ltlf semantics})\\
\Leftrightarrow& \rho_1\models\fp{\varphi_1\wedge\varphi_2,\rho[0]} &(\text{Definition~\ref{def:fprog}})
\end{align*}

\item $\varphi=\varphi_1\vee\varphi_2$. The proof for this case follows analogously to $\varphi=\varphi_1\wedge\varphi_2$.

\item $\varphi=\Next\psi$. Notice that $|\rho|>1$.

$\rho\models\Next\psi
  \xLeftrightarrow{\text{\ltlf semantics}}
\rho_1\models\psi
  \xLeftrightarrow{\text{Definition~\ref{def:fprog}}}
\rho_1\models\fp{\Next\psi,\rho[0]}
$.

\item $\varphi=\Wnext\psi$. Notice that $|\rho|>1$.

$\rho\models\Wnext\psi
  \xLeftrightarrow{\text{\ltlf semantics}}
\rho_1\models\psi
  \xLeftrightarrow{\text{Definition~\ref{def:fprog}}}
\rho_1\models\fp{\Next\psi,\rho[0]}
$.

\item $\varphi=\varphi_1\Until\varphi_2$, the induction hypotheses are $\rho\models\varphi_1$ iff $\rho_1\models\fp{\varphi_1,\rho[0]}$ and $\rho\models\varphi_2$ iff $\rho_1\models\fp{\varphi_2,\rho[0]}$.
\begin{align*}
\rho\models\varphi_1\Until\varphi_2 \Leftrightarrow& \rho\models\varphi_2\vee(\varphi_1\wedge\Next(\varphi_1\Until\varphi_2)) &(\text{Lemma}~\ref{lem:UR-equiv})\\
\Leftrightarrow&
\begin{cases}
    \rho\models\varphi_2\text{, or}\\
    \rho\models\varphi_1\text{ and }\rho\models\Next(\varphi_1\Until\varphi_2)
\end{cases}
&(\text{\ltlf semantics})\\
\Leftrightarrow&
\begin{cases}
    \rho_1\models\fp{\varphi_2,\rho[0]}\text{, or}\\
    \rho_1\models\fp{\varphi_1,\rho[0]}\text{ and }\rho\models\Next(\varphi_1\Until\varphi_2)
\end{cases}
&(\text{induction hypotheses})\\
\Leftrightarrow&
\begin{cases}
    \rho_1\models\fp{\varphi_2,\rho[0]}\text{, or }\\
    \rho_1\models\fp{\varphi_1,\rho[0]}\text{ and }\rho_1\models\varphi_1\Until\varphi_2
\end{cases}
&(\text{\ltlf semantics})\\
\Leftrightarrow&\rho_1\models\fp{\varphi_2, \rho[0]} \vee  (\fp{\varphi_1, \rho[0]} \wedge (\varphi_1\Until\varphi_2) &(\text{\ltlf semantics})\\
\Leftrightarrow&\rho_1\models\fp{\varphi_1\Until\varphi_2,\rho[0]} &(\text{Definition~\ref{def:fprog}})
\end{align*}

\item $\varphi=\varphi_1\Release\varphi_2$, the induction hypotheses are $\rho\models\varphi_1$ iff $\rho_1\models\fp{\varphi_1,\rho[0]}$ and $\rho\models\varphi_2$ iff $\rho_1\models\fp{\varphi_2,\rho[0]}$.
\begin{align*}
\rho\models\varphi_1\Release\varphi_2 \Leftrightarrow& \rho\models\varphi_2\wedge(\varphi_1\vee\Wnext(\varphi_1\Release\varphi_2)) &(\text{Lemma}~\ref{lem:UR-equiv})\\
\Leftrightarrow&
\begin{cases}
    \rho\models\varphi_2\text{, and}\\
    \rho\models\varphi_1\text{ or }\rho\models\Wnext(\varphi_1\Release\varphi_2)
\end{cases}
&(\text{\ltlf semantics})\\
\Leftrightarrow&
\begin{cases}
    \rho_1\models\fp{\varphi_2,\rho[0]}\text{, and }\\
    \rho_1\models\fp{\varphi_1,\rho[0]}\text{ or }\rho\models\Next(\varphi_1\Release\varphi_2)
\end{cases}
&(\text{induction hypotheses})\\
\Leftrightarrow&
\begin{cases}
    \rho_1\models\fp{\varphi_2,\rho[0]}\text{, and}\\
    \rho_1\models\fp{\varphi_1,\rho[0]}\text{ or }\rho_1\models\varphi_1\Release\varphi_2
\end{cases}
&(\text{\ltlf semantics})\\
\Leftrightarrow&\rho_1\models\fp{\varphi_2, \rho[0]} \wedge  (\fp{\varphi_1, \rho[0]} \vee (\varphi_1\Release\varphi_2) &(\text{\ltlf semantics})\\
\Leftrightarrow&\rho_1\models\fp{\varphi_1\Release\varphi_2,\rho[0]} &(\text{Definition~\ref{def:fprog}})
\end{align*}
\end{itemize}
\end{proof}

With formula progression, we can translate \ltlf formulas to \tdfa. Firstly, we define the states of the automaton as propositional equivalence classes, with the initial state being the equivalence class corresponding to the source formula. Then, formula progression enables us to calculate the successor state (i.e., $\fp{s,\sigma}$) that is reached by a given state $s$ through an edge $\sigma$. In building the transition system of \tdfa $\A_\varphi$, we start from the initial state $[\varphi]_\sim$, applying formula progression iteratively to determine all states and transitions, which is similar to a standard graph traversal. Finally, the accepting condition is defined on transitions where the edge directly satisfies the current state.

\begin{definition}[\ltlf to \TDFA]\label{def:ltlf2tdfa}
Given an \ltlf formula $\varphi$, the \TDFA ${\A_{\varphi}}$ is a tuple $\roundBra{2^\PP, S, \delta, init, T}$ such that
\begin{itemize}
	\item $2^{\PP}$ is the alphabet, where $\PP$ is the set of atoms of $\varphi$;

	\item $S=\reach{\varphi}/\sim$ is the set of states, where $\reach{\varphi}=\curlyBra{\varphi}\cup\{ \fp{\varphi, \rho} \mid \rho \in (2^\PP)^{+} \}$; 
	
	\item $init = [\varphi]_\sim$ is the initial state;

	\item $\delta:  S\times 2^\PP \to S$ is the transition function such that $\delta(s,\sigma)=\fp{s,\sigma}$ for $s\in S$ and $\sigma\in2^\PP$;
	
	\item $T=\{(s,\sigma)\in S\times2^\PP\mid \sigma\models s\}$ is the set of accepting transitions.
\end{itemize}
\end{definition}

The correctness of our \tdfa construction is stated in the theorem below.

\begin{theorem}\label{thm:tdfaCorrectness}
Given an \LTLf formula $\varphi$ and the \TDFA $\A_\varphi$ constructed by Definition~\ref{def:ltlf2tdfa}, it holds that $\rho\models\varphi$ iff $\rho\in\LL(\A_\varphi)$ for any finite trace $\rho$.
\end{theorem}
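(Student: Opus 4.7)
The plan is to prove the statement by induction on the length $n = |\rho|$ of the trace, keeping the induction uniform in the formula so that the hypothesis applies to every \ltlf formula simultaneously. In the base case $n = 1$, the run on $\A_\varphi$ consists of just the initial state $s_0 = [\varphi]_\sim$, and the trace is accepted iff $(s_0, \rho[0])$ is an accepting transition, which by Definition~\ref{def:ltlf2tdfa} holds iff $\rho[0]\models [\varphi]_\sim$. By the generalization of the satisfaction relation to propositional equivalence classes (justified by Lemma~\ref{lem:prop-2}), this is equivalent to $\rho\models\varphi$.

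For the inductive step with $|\rho| > 1$, I would set $\varphi' = \fp{\varphi, \rho[0]}$. By Lemma~\ref{lem:fprog}, $\rho\models\varphi$ iff $\rho_1\models\varphi'$, and by the induction hypothesis applied to the shorter trace $\rho_1$ together with the formula $\varphi'$, this is equivalent to $\rho_1\in\LL(\A_{\varphi'})$. It then remains to show that $\rho_1\in\LL(\A_{\varphi'})$ iff $\rho\in\LL(\A_\varphi)$, which is a matter of bridging the two runs on the two different automata. Since the transition function $\delta(s, \sigma) = \fp{s, \sigma}$ depends only on the equivalence class of $s$ (by Lemma~\ref{lem:prop-1}), the run is determined deterministically by the starting state. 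If the run of $\rho$ on $\A_\varphi$ is $s_0, s_1, \ldots, s_n$, then $s_1 = [\fp{\varphi, \rho[0]}]_\sim = [\varphi']_\sim$, which coincides with the initial state $s_0'$ of the run of $\rho_1$ on $\A_{\varphi'}$. A straightforward sub-induction then yields $s_{i+1} = s_i'$ for every $0 \leq i \leq n-1$, and since $\rho_1[n-1] = \rho[n]$, the two acceptance conditions $(s_n, \rho[n])\in T_\varphi$ and $(s_{n-1}', \rho_1[n-1])\in T_{\varphi'}$ become literally the same.

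I do not expect any step to be truly hard; the main care is bookkeeping. We must justify that although $\A_\varphi$ and $\A_{\varphi'}$ have potentially different state sets $\reach{\varphi}/\sim$ and $\reach{\varphi'}/\sim$, both the transition rule and the acceptance predicate depend only on the propositional equivalence class of the current state, so the final accepting transition coincides in both automata. This well-definedness on equivalence classes is precisely what Lemma~\ref{lem:prop-1} grants us, and everything else is a direct unpacking of Definition~\ref{def:ltlf2tdfa} together with the semantic content of Lemma~\ref{lem:fprog}.
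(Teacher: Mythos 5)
Your proof is correct, but it is organized differently from the paper's. The paper fixes the single automaton $\A_\varphi$ and runs two separate inductions along the positions of the run: a forward induction establishing $\rho_i\models s_i$ for the direction $\rho\models\varphi\Rightarrow\rho\in\LL(\A_\varphi)$, and a backward induction from the accepting last transition for the converse. You instead induct on the trace length uniformly over all formulas, peeling off the first letter via Lemma~\ref{lem:fprog} and appealing to the induction hypothesis for the automaton $\A_{\varphi'}$ of the progressed formula $\varphi'=\fp{\varphi,\rho[0]}$. Both arguments rest on exactly the same engine (Lemma~\ref{lem:fprog} applied once per letter), so the mathematical content is the same; the trade-off is that your version handles both directions in a single induction because the induction hypothesis is already a biconditional, whereas the paper needs two passes, but in exchange you must do the cross-automaton bookkeeping --- identifying the run of $\rho_1$ on $\A_{\varphi'}$ with the tail of the run of $\rho$ on $\A_\varphi$ --- which the paper's single-automaton argument never confronts. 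You handle that bookkeeping correctly: the identification is legitimate because $\delta(s,\sigma)=\fp{s,\sigma}$ and the acceptance predicate $\sigma\models s$ both depend only on the propositional equivalence class of $s$ (the well-definedness granted by Lemma~\ref{lem:prop-1} and the displayed equation $\fp{[\varphi]_\sim,\rho}=[\fp{\varphi,\rho}]_\sim$), and every class visited by the run of $\rho_1$ from $[\varphi']_\sim$ lies in both $\reach{\varphi'}/\sim$ and $\reach{\varphi}/\sim$. No gap; either presentation is acceptable.
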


\begin{proof}
Recall that we have $\rho\models[\varphi]_\sim$ iff $\rho\models\varphi$. To prove this theorem, it is only necessary to prove that $\rho\models[\varphi]_\sim$ iff $\rho\in\LL(\A_\varphi)$ for any finite trace $\rho$.
Assume $|\rho|=n+1$ with $n\geq0$.

$(\Rightarrow)$ We perform this part of the proof in two cases over the value of $n$.
\begin{itemize}
    \item If $n=0$ (i.e., $|\rho|=1$), $\rho\models[\varphi]_\sim$ is equivalent to $\rho[0]\models init$. By Definition~\ref{def:ltlf2tdfa}, $(init,\rho[0])$ is an accepting transition. So it holds that $\rho$ is accepted by $\A_\varphi$, i.e., $\rho\in\LL(\A_\varphi)$.
    
    \item If $n>0$, there exists a run $r=s_0, \cdots,$ $s_n, s_{n+1}$ over $\A_\varphi$ on $\rho$ such that $s_0=init$ and $\delta(s_i,\rho[i])=s_{i+1}$ holds for $0\leq i\leq n$. We first prove that $\rho_i\models s_i$ for $0\leq i\leq n$ by induction over the value of $i$.
    
    \textbf{Basis.} $\rho\models[\varphi]_\sim$ i.e., $\rho_0\models s_0$ holds basically. 
    
    \textbf{Induction.} The induction hypothesis is $\rho_{i}\models s_{i}$ holds for $0\leq i<n$. By Lemma~\ref{lem:fprog} and the induction hypothesis, we have $\rho_{i+1}\models\fp{s_i,\rho[i]}$. By Definition~\ref{def:ltlf2tdfa}, it holds that $s_{i+1}=\fp{s_i,\rho[i]}$, which implies $\rho_{i+1}\models s_{i+1}$.
    
    Therefore, we have $\rho_n\models s_n$. Note that $\rho_n=\rho[n].$ By Definition~\ref{def:ltlf2tdfa}, $s_n\times\rho[n]\to s_{n+1}$ is an accepting transition. So it holds that $\rho$ is accepted by $\A_\varphi$, i.e., $\rho\in\LL(\A_\varphi)$.
\end{itemize}

$(\Leftarrow)$ $\rho\in\LL(\A_\varphi)$ implies that there exists a run $r=s_0, s_1, \cdots,$ $s_n, s_{n+1}$ over $\A_\varphi$ on $\rho$ such that $s_0=init$ and $\delta(s_i,\rho[i])=s_{i+1}$ holds for $0\leq i\leq n$ and we have $\rho[n]\models s_n$.

We first prove that $\rho\in\LL(\A_\varphi)$ implies $\rho_{n-i}\models s_{n-i}$ holds for $0\leq i\leq n$ by induction over the value of $i$.

\textbf{Basis.}
By Definition~\ref{def:ltlf2tdfa}, $\rho\in\LL(\A_\varphi)$ implies that the corresponding run ends with accepting transition $s_n\times\rho[n]\to s_{n+1}$. Then we have $\rho[n]\models s_n$. So it holds that $\rho_{n-0}\models s_{n-0}$ when $i=0$.
    
\textbf{Induction.}
The induction hypothesis is that $\rho_{n-i}\models s_{n-i}$ holds for $0<i\leq n$. By Definition~\ref{def:ltlf2tdfa}, we have $\delta(s_{n-(i+1)},\rho[n-(i+1)])=s_{n-i}=\fp{s_{n-(i+1)},\rho[n-(i+1)]}$. So the induction hypothesis is equivalent to $\rho_{n-i}\models\fp{s_{n-(i+1)},\rho[n-(i+1)]}$. By Lemma~\ref{lem:fprog}, we have $\rho_{n-(i+1)}\models s_{n-(i+1)}$.

Now we have proved that $\rho\in\LL(\A_\varphi)$ implies $\rho_{n-i}\models s_{n-i}$ holds for $0\leq i\leq n$. Then we can have $\rho\models[\varphi]_\sim$ (i.e., $\rho_{n-n}\models s_{n-n}$) holds when $i=n$.
\end{proof}

The size of \tdfa constructed by Definition~\ref{def:ltlf2tdfa} depends on
the properties of propositional equivalence. States computed via formula progression do not create new temporal subformulas.

\begin{lemma}\label{lem:fp-tcl}
    Given an \LTLf formula $\varphi$ and a finite trace $\rho$, it holds that $\tcl{\fp{\varphi,\rho}}\subseteq\tcl{\varphi}$.
\end{lemma}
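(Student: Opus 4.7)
The plan is to use a nested induction: an outer induction on the length $|\rho|$ of the trace, with a structural induction on $\varphi$ inside. A pure structural induction on $\varphi$ alone does not suffice, because the $\Until$ clause of Definition~\ref{def:fprog} rewrites $\fp{\varphi_1\Until\varphi_2,\rho}$ into a boolean combination that contains $\fp{\Next(\varphi_1\Until\varphi_2),\rho}$, and the argument $\Next(\varphi_1\Until\varphi_2)$ is not a syntactic subformula of $\varphi_1\Until\varphi_2$. Making $|\rho|$ the outer parameter fixes this, since the only recursive branch that recreates a non-smaller formula simultaneously consumes one letter of the trace.

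The basis is routine: for $\varphi\in\{\ttrue,\ffalse\}$ we have $\fp{\varphi,\rho}=\varphi$, and for a literal $\varphi$, $\fp{\varphi,\rho}$ is a boolean constant whose temporal closure is empty, so the inclusion holds trivially. The boolean cases $\varphi=\varphi_1\wedge\varphi_2$ and $\varphi=\varphi_1\vee\varphi_2$ follow directly from the inner induction hypothesis together with $\tcl{\psi_1\wedge\psi_2}=\tcl{\psi_1\vee\psi_2}=\tcl{\psi_1}\cup\tcl{\psi_2}$.

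For $\varphi=\Next\psi$ (and symmetrically $\Wnext\psi$) I would split on $|\rho|$. If $|\rho|=1$, then $\fp{\Next\psi,\rho}=\psi$, and $\tcl{\psi}\subseteq\tcl{\Next\psi}$ by definition. If $|\rho|>1$, then $\fp{\Next\psi,\rho}=\fp{\psi,\rho_1}$ with $|\rho_1|<|\rho|$, and the outer induction hypothesis applied to $\psi$ on the shorter trace $\rho_1$ gives $\tcl{\fp{\psi,\rho_1}}\subseteq\tcl{\psi}\subseteq\tcl{\Next\psi}$.

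The main obstacle is the $\Until/\Release$ case. For $\varphi=\varphi_1\Until\varphi_2$, Definition~\ref{def:fprog} yields $\fp{\varphi,\rho}=\fp{\varphi_2,\rho}\vee(\fp{\varphi_1,\rho}\wedge\fp{\Next(\varphi_1\Until\varphi_2),\rho})$. The temporal closures of $\fp{\varphi_i,\rho}$ lie in $\tcl{\varphi_i}\subseteq\tcl{\varphi_1\Until\varphi_2}$ by the structural induction hypothesis. For the problematic term I split on $|\rho|$: when $|\rho|=1$, $\fp{\Next(\varphi_1\Until\varphi_2),\rho}=\varphi_1\Until\varphi_2$, whose temporal closure is exactly $\tcl{\varphi_1\Until\varphi_2}$; when $|\rho|>1$, $\fp{\Next(\varphi_1\Until\varphi_2),\rho}=\fp{\varphi_1\Until\varphi_2,\rho_1}$, and the outer induction hypothesis, applied on the strictly shorter trace $\rho_1$ to the \emph{same} formula $\varphi_1\Until\varphi_2$, gives $\tcl{\fp{\varphi_1\Until\varphi_2,\rho_1}}\subseteq\tcl{\varphi_1\Until\varphi_2}$. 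Taking the union over all three disjuncts yields the desired inclusion. The $\Release$ case is entirely symmetric, using the dual clause of Definition~\ref{def:fprog} together with $\Wnext(\varphi_1\Release\varphi_2)$ in place of $\Next(\varphi_1\Until\varphi_2)$.
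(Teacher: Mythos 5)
Your proof is correct and follows exactly the route the paper indicates: the paper's own proof is only a sketch stating that the result follows ``via a straightforward nested structural induction on the syntax of $\varphi$ and the length of $\rho$'', and your nested induction (outer on $|\rho|$, inner structural on $\varphi$) is precisely that argument, with the $\Until/\Release$ case worked out in the detail the paper omits. Your observation that a pure structural induction fails because the clause for $\fp{\varphi_1\Until\varphi_2,\rho}$ recreates the non-smaller argument $\Next(\varphi_1\Until\varphi_2)$, so that the recursion must instead be justified by the strictly shorter trace $\rho_1$, is exactly the right reason the trace-length parameter is needed.
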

\begin{proof}
This lemma holds intuitively, because $\fp{\varphi,\rho}$ does not introduce new elements in the syntax tree, except for Boolean combinations of existing temporal subformulas. The formal proof can be performed via a straightforward nested structural induction on the syntax of $\varphi$ and the length of $\rho$.
\end{proof}

The worst-case complexity of constructing deterministic automata using this method is also doubly exponential.

\begin{theorem}\label{thm:tdfaComplexity}
    Given an \LTLf formula $\varphi$, $\A_\varphi$ is a \tdfa constructed by Definition~\ref{def:ltlf2tdfa}. The state set of $\A_\varphi$ has a cardinality in $O(2^{2^{n}})$, i.e., $|S|=|\reach{\varphi}/\sim|=O(2^{2^{n}})$, where $n=|\tcl{\varphi}|$.
\end{theorem}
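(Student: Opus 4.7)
The plan is to use Lemma~\ref{lem:fp-tcl} to confine every state of $\A_\varphi$ to a Boolean combination of a fixed finite pool of temporal subformulas, and then count propositional equivalence classes over that pool.

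First, I would observe that every formula $\psi \in \reach{\varphi}$ is either $\varphi$ itself or of the form $\fp{\varphi,\rho}$ for some $\rho \in (2^\PP)^+$. By a short induction on $|\rho|$ using the recursive clauses of Definition~\ref{def:fprog} and iterating Lemma~\ref{lem:fp-tcl}, we get $\tcl{\psi} \subseteq \tcl{\varphi}$ for every $\psi \in \reach{\varphi}$. Thus every reachable formula is built from the same fixed set $\tcl{\varphi}$ of temporal subformulas by Boolean connectives ($\wedge$, $\vee$, together with the constants $\ttrue,\ffalse$, noting that negation only appears in front of atoms in the NNF we assume throughout).

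Second, I would invoke the propositional semantics $\models_p$ from the preliminaries: by treating each element of $\tcl{\varphi}$ as a propositional variable, each $\psi \in \reach{\varphi}$ defines a Boolean function $B_\psi : 2^{\tcl{\varphi}} \to \{\ttrue,\ffalse\}$ given by $B_\psi(\I) = \ttrue$ iff $\I \models_p \psi$. By the definition of $\sim$, two formulas $\psi_1,\psi_2 \in \reach{\varphi}$ satisfy $\psi_1 \sim \psi_2$ iff $B_{\psi_1} = B_{\psi_2}$. Hence the map $[\psi]_\sim \mapsto B_\psi$ is a well-defined injection from $\reach{\varphi}/\sim$ into the set of Boolean functions on $\tcl{\varphi}$.

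Third, I would simply count: the number of subsets $\I$ of $\tcl{\varphi}$ is $2^n$, and the number of Boolean functions on those $2^n$ assignments is $2^{2^n}$. Combining with the injection above gives
\[
|S| \;=\; |\reach{\varphi}/\sim| \;\leq\; 2^{2^n} \;=\; O(2^{2^n}).
\]
I do not anticipate a serious obstacle; the only delicate point is justifying that reachable formulas truly live in the Boolean closure of $\tcl{\varphi}$ (needed so that propositional equivalence over $\tcl{\varphi}$ is the right notion), which is exactly what Lemma~\ref{lem:fp-tcl} and the form of Definition~\ref{def:fprog} provide. Everything else is a clean counting argument.
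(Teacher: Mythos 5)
Your proposal is correct and follows essentially the same route as the paper's proof: both apply Lemma~\ref{lem:fp-tcl} to place every reachable formula in the Boolean closure of $\tcl{\varphi}$ and then bound $|\reach{\varphi}/\sim|$ by the number $2^{2^n}$ of Boolean functions over $n$ variables. Your version merely spells out the injection from equivalence classes to Boolean functions that the paper leaves implicit.
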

\begin{proof}
By Lemma~\ref{lem:fp-tcl}, every formula of in $\reach{\varphi}$ is a Boolean combination of formulas in $\tcl{\varphi}$. So each equivalence class in $\reach{\varphi}/\sim$ can be interpreted as a Boolean function over $n$ variables, which can be bounded by $2^{2^{n}}$, considering that there exist at most $2^{2^{n}}$ Boolean functions over $n$ variables.
\end{proof}

\ltlf formulas in the same propositional equivalence class exhibit semantic equivalence, and further, they are equivalent with regards to satisfiability and realizability. A single formula can uniquely determine a propositional equivalence class. When implementing Definition~\ref{def:ltlf2tdfa}, it is neither necessary nor possible to explicitly compute the propositional equivalence class. Instead, we use a formula to represent an equivalence class. Upon deriving a new formula by formula progression (Definition~\ref{def:fprog}), we can check whether it is propositionally equivalent to an already computed formula. If it is, then they are classified within the same equivalence class and thus represent the same state of the automaton. Conversely, if not equivalent, the new formula is a new automaton state. To simplify the notations, we omit the propositional equivalence symbol $[]_\sim$, so that an \LTLf formula can represent either the formula itself or the corresponding propositional equivalence class, depending on the context.

\section{\tdfa Games}

We now introduce two-player games on \tdfa, which is considered as an invariant of traditional infinite reachability/safety graph-based games~\cite{BCJ18}. Moreover, most of the concepts and results presented in this section have their origins in, or correspond to, those found in infinite reachability/safety graph-based games. From a theoretical perspective, this section studies \TDFA games in terms of formalism, solution, and their correlation with \LTLf synthesis.

\subsection{What a \tdfa Game Is}
The \tdfa \emph{game} is played on a \tdfa with the initial state removed. Every \tdfa game involves two \emph{players}, the environment and the system, controlling the variables in sets $\X$ and $\Y$ respectively. Each \emph{round} of the game consists of a state and the environment and system setting the values of the propositions they control. The winner is determined by the winning condition, i.e., the accepting transitions $T$ here. The system wins if the play reaches an accepting transition. The parameter $t\in\{Mealy, Moore\}$ specifies the order in which players assign values in each round. In alignment with \LTLf synthesis, we focus on the situation of $t=Moore$ in the main text, where the system assigns values first in each round.
\begin{definition}[\tdfa Game]
A \tdfa game is a tuple $\G=(2^{\X\cup\Y},S,\delta,T)_t$, where
\begin{itemize}
    \item $2^{\X\cup\Y}$ is the alphabet with $\X\cap\Y=\emptyset$;
    \item $S$ is the set of states;
    \item $\delta:S\times 2^\PP\to S$ is the transition function;
    \item $T\subseteq S\times2^\PP$ is the set of accepting transitions;
    \item $t\in\curlyBra{Mealy,Moore}$ is the type of the system.
\end{itemize}
\end{definition}

A \emph{play} of the game starts in some state and progresses through various states as the system and environment alternately assign values.

\begin{definition}[Play]
A play is a sequence of rounds $p=(s_0,X_0\cup Y_0), \cdots, (s_n,X_n\cup Y_n), \cdots \in (S\times2^{\X\cup\Y})^* \cup(S\times2^{\X\cup\Y})^\omega$ such that $s_{i+1} = \delta(s_i,X_i\cup Y_i)$ holds for $0\leq i<|p|$, where the sequence may be finite or infinite. If the system wins, the play terminates; the play continues infinitely and the environment wins.
\end{definition}

Plays in \tdfa games can be categorized into three sets described below. This shows a key difference of \TDFA games from traditional infinite games~\cite{BCJ18}, as it involves both finite and infinite plays.
\begin{itemize}
    \item Ongoing plays. Play $p$ is an ongoing play if $p=(s_0,X_0\cup Y_0), \cdots, (s_n,X_n\cup Y_n)\in(S\times2^{\X\cup\Y})^*$ is a finite play such that none of rounds in $p$ is an accepting transition, i,e., $(s_i,X_i\cup Y_i)\notin T$ holds for $0\leq i\leq n$;
    
    \item System-winning plays. Play $p$ is a system winning play if $p=(s_0,X_0\cup Y_0), \cdots, (s_n,X_n\cup Y_n)\in(S\times2^{\X\cup\Y})^*$ is a finite play such that $p$ ends with an accepting transition and none of the previous rounds is an accepting transition, i,e., $(s_n,X_n\cup Y_n)\in T$ and $(s_i,X_i\cup Y_i)\notin T$ holds for $0\leq i< n$;

    \item Environment-winning plays. Play $p$ is an environment winning play if $p=(s_0,X_0\cup Y_0), (s_1,X_1 \cup Y_1), \cdots \in (S\times2^{\X\cup\Y})^\omega$ is an infinite play such that none of rounds in $p$ is an accepting transition, i,e., $(s_i,X_i\cup Y_i)\notin T$ holds for $i\in\mathbb{N}$.
\end{itemize}

In the context of games, a \emph{strategy} provides the decisions for a player based on the history of the play. Given a \tdfa game $(2^{\X\cup\Y},S,\delta,T)_{Moore}$, a system strategy is a function $\pi:S^+\to2^\Y$, and an environment strategy is a function $\tau: S^+\times2^\Y\to2^\X$.
A play $p=(s_0,X_0\cup Y_0), \cdots,(s_n,X_n\cup Y_n),\cdots$ is \emph{consistent} with a system strategy $\pi$ if $Y_i=\pi(s_0,\cdots,s_i)$ holds for $0\leq i< |p|$; similarly, $p$ is consistent with an environment strategy $\tau$ if $X_i=\tau(s_0,\cdots,s_i,Y_i)$ holds for $0\leq i< |p|$. Notably, starting from a state, the strategies $\pi$ and $\tau$ of the two players together uniquely identify one specific play.

Given a \tdfa game $\G$, a system strategy $\pi$ is a system-winning strategy from a state $s$ if there exists no environment-winning play consistent with $\pi$ starting in $s$, and an environment strategy $\tau$ is an environment-winning strategy from a state $s$ if there exists no system-winning play consistent with $\tau$ starting in $s$.

\begin{definition}[Winning Strategy]
Given a \tdfa game $\G=(2^{\X\cup\Y},S,\delta,T)_{Moore}$,
\begin{itemize}
\item a system strategy $\pi:S^+\to2^\Y$ is a system-winning strategy from a state $s_0\in S$ if for any sequence $\lambda=X_0,X_1,\cdots\in(2^\X)^\omega$, there exists $k\geq0$ such that play $p$ is a system-winning play, where
$p=(s_0,X_0\cup\pi(s_0)),\cdots,(s_k,X_k\cup\pi(s_0,\cdots,s_k))$;

\item  an environment strategy $\tau:S^+\times2^\Y\to2^\X$ is an environment-winning strategy from a state $s_0\in S$ if for any sequence $\lambda=Y_0,Y_1,\cdots\in(2^\Y)^\omega$, play $p$ is an environment-winning play, where $p=(s_0,\tau(s_0,Y_0)\cup Y_0),\cdots,(s_k,\tau(s_0,\cdots,s_k,Y_k)\cup Y_k),\cdots$.
\end{itemize}
\end{definition}

\begin{definition}[Winning State]
A state $s$ is a system-winning state if there exists a system-winning strategy from $s$. A state $s$ is an environment-winning state if there exists an environment-winning strategy from $s$. 
\end{definition}
It is easy to see that no state can be winning for both players.
\begin{lemma}\label{lem:win-insertion}
    Given a \tdfa game $\G=(2^{\X\cup\Y},S,\delta,T)_{Moore}$, there exists no state that can be both system-winning and environment-winning.
\end{lemma}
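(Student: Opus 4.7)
The plan is to proceed by contradiction, exploiting the fact that, once a starting state is fixed, a pair of strategies (one per player) determines a unique play. Suppose for contradiction that some state $s_0 \in S$ is both system-winning and environment-winning. Then there exist a system-winning strategy $\pi: S^+ \to 2^\Y$ from $s_0$ and an environment-winning strategy $\tau: S^+ \times 2^\Y \to 2^\X$ from $s_0$.

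Next, I would construct the unique play $p$ consistent with both $\pi$ and $\tau$ from $s_0$. Concretely, define $Y_i = \pi(s_0,\ldots,s_i)$, $X_i = \tau(s_0,\ldots,s_i,Y_i)$, and $s_{i+1} = \delta(s_i, X_i \cup Y_i)$ inductively, continuing as long as the round just produced is not an accepting transition. This yields a play $p$ that is, by construction, consistent with $\pi$ and with $\tau$. According to the trichotomy given earlier in the paper, $p$ is either an ongoing play, a system-winning play, or an environment-winning play; since we extend the play whenever it is ongoing, $p$ is either finite and system-winning or infinite and environment-winning.

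The heart of the argument is then a two-case contradiction. If $p$ is a finite system-winning play consistent with $\tau$, this directly contradicts the assumption that $\tau$ is an environment-winning strategy from $s_0$, which requires that no system-winning play consistent with $\tau$ exists from $s_0$. Symmetrically, if $p$ is an infinite environment-winning play consistent with $\pi$, this contradicts $\pi$ being a system-winning strategy from $s_0$, which requires that no environment-winning play consistent with $\pi$ exists. In either case we reach a contradiction, so no state can be both system-winning and environment-winning.

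I do not anticipate a genuine obstacle: the lemma is essentially a consequence of strategies being deterministic functions of the history, so that joint play is uniquely determined and cannot simultaneously witness both winning conditions. The only care needed is to phrase the inductive construction of $p$ precisely, ensuring it terminates exactly when an accepting transition is first produced, so that $p$ falls cleanly into one of the two winning categories (never into the ongoing category, since ongoing plays are finite only by truncation, not as maximal plays in this construction).
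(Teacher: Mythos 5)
Your proposal is correct and follows essentially the same route as the paper's proof: both fix the allegedly doubly-winning state, form the unique play (sequence of rounds) jointly determined by the two strategies, and derive a contradiction from the fact that this single play cannot simultaneously reach an accepting transition (as required by the system-winning strategy) and avoid accepting transitions forever (as required by the environment-winning strategy). Your explicit two-case split on whether the joint play is finite or infinite is just a slightly more detailed phrasing of the same contradiction.
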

\begin{proof}

Assume that there exists a state $s_0\in S$ and strategies $\pi$ and $\tau$ that are winning from $s$ for the system and environment, respectively. Then there exists a unique infinite sequence $\zeta$ determined by $\pi$ and $\tau$.
\begin{equation}
    \zeta=(s_0,\pi(s_0)\cup\tau(s_0,\pi(s_0))),\cdots,(s_k,\pi(s_0,\cdots,s_k)\cup\tau(s_0,\cdots,s_k,\pi(s_0,\cdots,s_k))),\cdots
\end{equation}
When $\pi$ is a system-winning strategy from $s_0$, then there exists $i\in\mathbb{N}$ such that $\zeta[i]\in T$ is an accepting transition. On the other hand, when $\tau$ is an environment-winning strategy from $s_0$, then for every $i\in\mathbb{N}$, $\zeta[i]\notin T$ is not an accepting transition. The assumption leads to a contradiction since above both two situations cannot hold simultaneously.
\end{proof}

Note that a winning strategy in the game is state-dependent. If a system strategy is winning for all system-winning states, we call it a \emph{uniform} winning strategy. In addition, the above definition of a strategy is general in the sense that the decisions are based on the entire history of the play. However, it often suffices to work with simpler strategies. A system strategy is \emph{positional} iff $\pi(uv)=\pi(v)$ holds for all $u\in S^*$, $v\in S$. Uniform strategies and positional strategies for the environment can be defined similarly.

\subsection{How to Solve a \tdfa Game}
Given a \tdfa game $\G=(2^{\X\cup\Y},S,\delta,T)_{Moore}$, we define the construction of the set of winning states for $n\in\mathbb{N}$ as follows.
\begin{align}
    CPre^S(\E)&=
        \{s\in S\mid\exists Y\in2^\Y.\forall X\in2^\X.(s,X\cup Y)\in T\text{ or }\delta(s,X\cup Y)\in\E\}\label{eq:cpres}\\
    CPre^E(\E)&=
        \{s\in S\mid\forall Y\in2^\Y.\exists X\in2^\X.(s,X\cup Y)\notin T\text{ and }\delta(s,X\cup Y)\in\E\}\label{eq:cpree}
\end{align}
\begin{align}
    SWin^0&=\emptyset&\quad SWin^{i+1}&=SWin^i\cup CPre^S(SWin^i)\label{eq:swin}
    \\
     EWin^0&=S&\quad EWin^{i+1}&=EWin^i\cap CPre^E(EWin^i)\label{eq:ewin}
\end{align}
The abbreviation `$CPre^S$' and `$CPre^E$' mean `\underline{C}ontrollable \underline{Pre}decessors of the \underline{S}ystem/\underline{E}nvironment'. The sets $SWin_i$ and $EWin_i$ respectively represent an under-approximation of system-winning states and an over-approximation of environment-winning states. Intuitively, $SWin^i$ is the set of states where the system can enforce the play to reach accepting transitions in $i$ steps, and $EWin^i$ is the set of states where the environment can enforce the play to avoid accepting transitions in $i$ steps.

We rewrite the recurrence relation of $SWin^i$ as follows:
\begin{equation}\label{eq:fs}
    SWin^{i+1}=f_s(SWin^i)=SWin^i\cup CPre^S(SWin^i)\text{.}
\end{equation}
$f_s:2^S\to2^S$ is a monotonic function over the complete lattice $(2^S,\subseteq)$. According to Equations~(\ref{eq:swin}) and (\ref{eq:fs}), we start from the basic situation $SWin^0$ and iteratively compute $SWin^i$ until a fixed point $SWin^{n+1}=SWin^n=f_s(SWin^n)$ is reached. The existence of such a fixed point is guaranteed by the Knaster-Tarski fixed-point theorem~\cite{tarski1955lattice}. We denote this fixed point by $SWin$, i.e.,
\begin{equation}
    SWin=SWin^n\quad s.t.\quad SWin^n=f_s(SWin^n)\text{.}
\end{equation}
By similar reasoning, we have
\begin{equation}
    EWin^{i+1}=f_e(EWin^i)=EWin^i\cap CPre^E(EWin^i)\text{,}
\end{equation}
and a fixed point $EWin$, which must exist,
\begin{equation}
    EWin=EWin^n\quad s.t.\quad EWin^n=f_e(EWin^n)\text{.}
\end{equation}

Computing $SWin$ and $EWin$ requires linear time in the number of states in $\G$. After a linear number of iterations at most, the fixed points can be reached. Furthermore, the lemma and proof presented below indicate that the iterative processes of $SWin^i$ and $EWin^i$ are synchronized and eventually reach their fixed points at the same time.
\begin{lemma}\label{lem:win-union}
    For $i\in\mathbb{N}$, $SWin^i\cup EWin^i=S$ holds.
\end{lemma}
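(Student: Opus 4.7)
The plan is to prove the lemma by induction on $i$. The base case $SWin^0\cup EWin^0=\emptyset\cup S=S$ is immediate from the initialisations in Equations~(\ref{eq:swin}) and (\ref{eq:ewin}), so the work lies entirely in the inductive step. Assuming $SWin^i\cup EWin^i=S$, I would take an arbitrary $s\in S$ and split on whether $s\in SWin^i$ or $s\in EWin^i$. The first branch is trivial since $SWin^i\subseteq SWin^{i+1}$ by definition of $f_s$. So the heart of the argument is the second branch, where $s\in EWin^i$.

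For $s\in EWin^i$, the plan is to do a further case split on whether $s\in CPre^E(EWin^i)$. If it is, then directly $s\in EWin^i\cap CPre^E(EWin^i)=EWin^{i+1}$ and we are done. Otherwise, I would negate the defining condition of $CPre^E$ in Equation~(\ref{eq:cpree}) to obtain
\[
\exists Y\in 2^\Y.\,\forall X\in 2^\X.\;(s,X\cup Y)\in T\;\text{ or }\;\delta(s,X\cup Y)\notin EWin^i.
\]
This is the main manipulation: swapping the quantifiers and negating the two inner conjuncts. Now I would invoke the induction hypothesis $SWin^i\cup EWin^i=S$ to replace $\delta(s,X\cup Y)\notin EWin^i$ by $\delta(s,X\cup Y)\in SWin^i$, obtaining precisely the condition for $s\in CPre^S(SWin^i)$ in Equation~(\ref{eq:cpres}); hence $s\in SWin^{i+1}$ by Equation~(\ref{eq:swin}).

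I do not expect any real obstacle here: the proof reduces to a clean quantifier-negation plus one use of the induction hypothesis, and the Moore alternation ($\exists Y.\forall X$ for the system, $\forall Y.\exists X$ for the environment) ensures that the two controllable-predecessor operators are exact duals modulo the accepting-transition clause. The only subtle point worth flagging is the asymmetric treatment of $T$ in the two operators, but since the clauses ``$(s,X\cup Y)\in T$'' and ``$(s,X\cup Y)\notin T$'' are themselves complementary, the negation matches up perfectly and no additional case analysis on accepting transitions is needed. Combining all cases yields $s\in SWin^{i+1}\cup EWin^{i+1}$, closing the induction.
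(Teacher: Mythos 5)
Your proof is correct and follows essentially the same route as the paper: induction on $i$ with the key step being the quantifier-negation duality between $CPre^E$ and $CPre^S$ combined with the induction hypothesis. The paper packages this duality as the set identity $CPre^S(S\setminus\D)=S\setminus CPre^E(\D)$ and finishes with set algebra, whereas you argue pointwise on an arbitrary $s\in EWin^i\setminus CPre^E(EWin^i)$, but the underlying argument is the same.
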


\begin{proof}

We perform the lemma by induction on $n\in\mathbb{N}$.

\textbf{Basis.} Obviously we have $SWin^0\cup EWin^0=\emptyset\cup S=S$.

\textbf{Induction.} The induction hypothesis is that $SWin^i\cup EWin^i=S$.

For an arbitrary set of states $\D$, it is evident that:
\begin{equation}
\begin{aligned}
CPre^S(S\setminus\D)
    &=\{s\in S\mid\exists Y\in2^\Y.\forall X\in2^\X.(s,X\cup Y)\in T\text{ or }\delta(s,X\cup Y)\in S\setminus\D\}\\
    &=\{s\in S\mid\exists Y\in2^\Y.\forall X\in2^\X.(s,X\cup Y)\in T\text{ or }\delta(s,X\cup Y)\notin \D\}\\
    &=S\setminus\{s\in S\mid\forall Y\in2^\Y.\exists X\notin2^\X.(s,X\cup Y)\notin T\text{ and }\delta(s,X\cup Y)\in\D\}\\
    &=S\setminus CPre^E(\D)\text{.}\label{eq:cpre-minus}
\end{aligned}
\end{equation}
When $\D$ in Equation~(\ref{eq:cpre-minus}) is set equal to $EWin_i$, we derive Equation~(\ref{eq:substitute}). Then we further perform the following set operations, which may be more comprehensible if read in reverse order.
\begin{align}
    &CPre^S(S\setminus EWin^i)=S\setminus CPre^E(EWin^i)\label{eq:substitute}\\
\Rightarrow\quad&CPre^S(S\setminus EWin^i)\cap EWin^i=EWin^i\cap(S\setminus CPre^E(EWin^i))\\
\xLeftrightarrow{A\setminus B=A\cap(S\setminus B)}\quad&CPre^S(S\setminus EWin^i)\setminus (S\setminus EWin^i)=EWin^i\setminus CPre^E(EWin^i)\\
\xLeftrightarrow{\text{induction hypothesis}}\quad&CPre^S(SWin^i)\setminus SWin^i=EWin^i\setminus CPre^E(EWin^i)\label{eq:equ-change}
\end{align}
In Equation~(\ref{eq:equ-change}), the left side of the equal sign represents the increasing part from $SWin^i$ to $SWin^{i+1}$, while the right side corresponds to the decreasing part from set $EWin^i$ to $EWin^{i+1}$.
\begin{equation}
    SWin^{i+1}\setminus SWin^i=CPre^S(SWin^i)\setminus SWin^i
\end{equation}
\begin{equation}
    EWin^i\setminus EWin^{i+1}=EWin^i\setminus CPre^E(EWin^i)
\end{equation}
Thus we can conclude that $SWin^{i+1}\cup EWin^{i+1}=S$.
\end{proof}
With Lemmas~\ref{lem:win-insertion}, \ref{lem:win-union}, and the following theorem, we conclude that \tdfa games can be determined with Equations~(\ref{eq:cpres})-(\ref{eq:ewin}), which provide a complete and sound solution.
\begin{theorem}\label{thm:tdfa-game}
Given a \tdfa game $\G=( 2^{\X\cup\Y},S,\delta,T)_{Moore}$, $SWin$ is the set of system-winning states, and $EWin$ is the set of environment-winning states. Both players have a uniform and positional strategy.
\end{theorem}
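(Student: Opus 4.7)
The plan is to first establish soundness of both fixed-point characterisations by extracting positional uniform winning strategies directly from the inductive construction of $SWin^i$ and $EWin^i$, and then use Lemmas~\ref{lem:win-insertion} and~\ref{lem:win-union} to upgrade soundness to equality. Since the ascending chain $SWin^0 \subseteq SWin^1 \subseteq \cdots$ and the descending chain $EWin^0 \supseteq EWin^1 \supseteq \cdots$ stabilise after at most $|S|$ iterations, and Lemma~\ref{lem:win-union} gives $SWin^i \cup EWin^i = S$ at every stage, I get $SWin \cup EWin = S$ at the common fixed point. Combined with Lemma~\ref{lem:win-insertion}, this will force $SWin$ and $EWin$ to be exactly the sets of system- and environment-winning states, provided I can show that membership in $SWin$ (resp.\ $EWin$) already suffices for being system-winning (resp.\ environment-winning).

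For the system direction, I would assign to each $s \in SWin$ its rank $r(s) = \min\{i \mid s \in SWin^i\}$; since $SWin^0 = \emptyset$ we have $r(s) \geq 1$, and the recurrence forces $s \in CPre^S(SWin^{r(s)-1})$, witnessing some $Y_s \in 2^\Y$ such that for every $X \in 2^\X$, either $(s, X \cup Y_s) \in T$ or $\delta(s, X \cup Y_s) \in SWin^{r(s)-1}$. Setting $\pi(s) = Y_s$ yields a positional uniform strategy, and an induction on $r(s)$ shows that, against any environment sequence $X_0, X_1, \ldots$, the play consistent with $\pi$ from $s$ reaches an accepting transition within at most $r(s)$ rounds, so $\pi$ is system-winning from every $s \in SWin$.

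For the environment direction, each $s \in EWin$ lies in $CPre^E(EWin)$ at the fixed point, so for every $Y \in 2^\Y$ there is some $X_{s,Y} \in 2^\X$ with $(s, X_{s,Y} \cup Y) \notin T$ and $\delta(s, X_{s,Y} \cup Y) \in EWin$; set $\tau(s, Y) = X_{s,Y}$. This strategy is positional and uniform, and by an invariance argument every play from $s \in EWin$ consistent with $\tau$ stays inside $EWin$ forever while never traversing an accepting transition, hence is an environment-winning play. With both soundness directions in hand, the identity $SWin \cup EWin = S$ forces the reverse inclusions: any system-winning state outside $SWin$ would lie in $EWin$ and also be environment-winning via $\tau$, contradicting Lemma~\ref{lem:win-insertion}; the symmetric argument handles the environment.

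The main delicate point is the inductive verification for $\pi$, since a system-winning play must reach $T$ in finitely many rounds against all adaptive environment behaviours; the induction must carry the bound "at most $r(s)$ rounds" through the recurrence $\delta(s, X \cup \pi(s)) \in SWin^{r(s)-1}$ while also correctly handling the base case where $(s, X \cup \pi(s)) \in T$ already. Everything else — positionality, uniformity, and the invariance used for $\tau$ — follows directly from how $\pi$ and $\tau$ are extracted from the fixed-point equations.
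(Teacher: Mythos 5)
Your proposal is correct and follows essentially the same route as the paper's proof: it extracts positional uniform strategies directly from the fixed-point iteration, using a rank induction ($r(s)=\min\{i\mid s\in SWin^i\}$, the paper's $l(s)$) to bound the number of rounds until an accepting transition on the system side. The only differences are presentational: you argue the environment side by invariance of $EWin$ under $CPre^E$ at the fixed point rather than by the paper's level-counting with $h(s)$ (arguably cleaner), and you make the completeness step via Lemmas~\ref{lem:win-insertion} and~\ref{lem:win-union} explicit inside the proof, whereas the paper places it in the surrounding discussion.
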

\begin{proof}
We fix an arbitrary total ordering on $2^\X$ and $2^\Y$ respectively. Then we perform the proof with a case analysis. 

\paragraph{$\bm{s\in SWin}$.}
For state $s\in SWin$, we denote by $l(s)=min\{n\in\mathbb{N}\mid s\in SWin^n\}$ the level where $s$ is added to $SWin$ firstly. We have the following uniform and positional strategy $\pi:S\to2^\Y$.
\begin{equation}
\pi(s)=
\begin{cases}
    min\{Y\in 2^\Y\mid \forall X\in2^\X.(s,X\cup Y)\in T\text{ or }\delta(s,X\cup Y)\in SWin^{l(s)-1}\}&\text{ if }s\in SWin\\
    min(2^\Y)&\text{ if }s\in S\setminus SWin
\end{cases}
\end{equation}
By the induction on $n\in\mathbb{N}$, we can prove that any play that starts in $s\in SWin^n$ and is consistent with $\pi$ reaches accepting transitions $T$ within at most $n$ step.

\paragraph{$\bm{s\in EWin}$.}
For state $s\in S$, we denote by $h(s)=max\{n\in\mathbb{N}\mid s\in EWin^n\}$ the level where $s$ is not removed from $EWin$ lastly. We have the following uniform and positional strategy $\tau:S\times2^\Y\to2^\X$.
\begin{equation}
\tau(s,Y)=
\begin{cases}
    min\{X\in 2^\X\mid(s,X\cup Y)\notin T\text{ and }\delta(s,X\cup Y)\in EWin^{h(s)-1}\}&\text{ if }s\in EWin^1\\
    min(2^\X)&\text{ if }s\in S\setminus EWin^1
\end{cases}
\end{equation}

By the induction on $n\in\mathbb{N}$, we can prove that any plays $p$ that starts in $s\in EWin^n$ and consistent with $\tau$ cannot reach accepting transitions $T$ within $n$ steps.

\end{proof}

\begin{corollary}
    Given a \tdfa game $\G=(2^{\X\cup\Y},S,\delta,T)_{Moore}$, $s\in S$ is an environment-winning state iff $s$ is not a system-winning state.
\end{corollary}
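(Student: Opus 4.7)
The plan is to prove the two directions of the biconditional by directly combining the three results already established in this section: Lemma~\ref{lem:win-insertion}, Lemma~\ref{lem:win-union}, and Theorem~\ref{thm:tdfa-game}. No new machinery is needed; the corollary is essentially a packaging of these facts.

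For the forward direction, I would assume $s$ is environment-winning and appeal directly to Lemma~\ref{lem:win-insertion}, which states that no state can simultaneously be system-winning and environment-winning; hence $s$ cannot be system-winning. This direction is immediate.

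For the converse, I would assume $s$ is not system-winning. By Theorem~\ref{thm:tdfa-game}, the set of system-winning states coincides with the fixed point $SWin$, so $s \notin SWin$. The key step is to observe that Lemma~\ref{lem:win-union} holds at every iteration index $i$, and in particular at an index $n$ large enough that both sequences have stabilized at their respective fixed points $SWin$ and $EWin$ (such an $n$ exists because $S$ is finite and the iterations are monotone). Therefore $SWin \cup EWin = S$, which forces $s \in EWin$, and by Theorem~\ref{thm:tdfa-game} again $s$ is environment-winning.

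The only step that requires any care is making sure that Lemma~\ref{lem:win-union}, stated for every finite $i$, transfers to the fixed points. Since both $SWin^i$ and $EWin^i$ stabilize (the former is monotone increasing and bounded by $S$, the latter monotone decreasing and bounded below by $\emptyset$), there is a common index beyond which both equal their fixed points, and the lemma applied at that index gives $SWin \cup EWin = S$. I do not anticipate any real obstacle beyond this observation.
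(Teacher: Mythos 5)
Your proposal is correct and follows exactly the route the paper intends: the corollary is presented as an immediate consequence of Lemma~\ref{lem:win-insertion}, Lemma~\ref{lem:win-union}, and Theorem~\ref{thm:tdfa-game}, which is precisely how you combine them. Your extra remark about transferring Lemma~\ref{lem:win-union} from finite iteration indices to the stabilized fixed points is a valid and welcome bit of care, but does not change the argument.
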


\subsection{How to Reduce \LTLf Synthesis to \tdfa Games}

In a \TDFA derived from an \LTLf formula, a state being system-winning or environment-winning reflects the realizability of the corresponding \LTLf specification. The following lemma explicitly reveals the relationship between \TDFA games and \LTLf synthesis.

\begin{lemma}\label{lem:syn2game}
Given an \LTLf formula $\varphi$ and the corresponding \tdfa ${\A_{\varphi}}=\roundBra{2^\PP, S, \delta, init, T}$ constructed by Definition~\ref{def:ltlf2tdfa}, $\roundBra{\psi,\X,\Y}_{Moore}$ is realizable iff $\psi$ is a system-winning state in the game $\G=(2^{\X\cup\Y},S,\delta,T)_{Moore}$, where $\psi\in S$ is a state.
\end{lemma}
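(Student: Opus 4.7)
The plan is to relate the LTLf-realizability condition of Definition~\ref{def:syn} to the game-winning condition of $\G$ by translating strategies between the two settings, using Theorem~\ref{thm:tdfaCorrectness} as the key bridge between LTLf satisfaction and TDFA acceptance, and Lemma~\ref{lem:fprog} to align states with trace suffixes.

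For the forward direction, I would suppose $(\psi,\X,\Y)_{Moore}$ is realizable with winning strategy $g$, and argue that $\psi$ cannot be an environment-winning state, so by Lemmas~\ref{lem:win-insertion} and~\ref{lem:win-union} (together with the corollary after Theorem~\ref{thm:tdfa-game}) it must be system-winning. Assume for contradiction that some environment-winning strategy $\tau$ exists from $\psi$. Simulating $g$ against $\tau$ starting from $s_0=\psi$ yields a unique play: at round $i$ set $Y_i = g(X_0,\ldots,X_{i-1})$, then $X_i = \tau(s_0,\ldots,s_i,Y_i)$, and $s_{i+1} = \delta(s_i,X_i\cup Y_i)$. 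Because $\tau$ is environment-winning, this play is infinite and never visits an accepting transition. Yet, because $g$ is LTLf-winning against the input sequence $\lambda = X_0,X_1,\ldots$ thereby produced, some finite prefix $\rho$ of length $k+1$ of the induced trace satisfies $\psi$; by Theorem~\ref{thm:tdfaCorrectness}, $\rho \in \LL(\A_\psi)$, so the $k$-th transition of the simulated play must lie in $T$, contradicting the assumption on $\tau$.

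For the reverse direction, suppose $\psi$ is system-winning and fix a positional winning strategy $\pi$ from $\psi$, guaranteed by Theorem~\ref{thm:tdfa-game}. I would define $g:(2^\X)^*\to 2^\Y$ by simulating $\A_\psi$ under $\pi$: given input history $X_0,\ldots,X_{k-1}$, set $s_0=\psi$ and iteratively compute $Y_i=\pi(s_i)$ and $s_{i+1}=\delta(s_i,X_i\cup Y_i)$ for $i<k$, then set $g(X_0,\ldots,X_{k-1})=\pi(s_k)$. Against any infinite environment sequence $\lambda$, the resulting play coincides with the unique $\pi$-consistent play from $\psi$ determined by $\lambda$, and hence must reach an accepting transition at some round $k$. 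By Definition~\ref{def:ltlf2tdfa}, this gives $X_k\cup Y_k\models s_k$; together with the iterated application of Lemma~\ref{lem:fprog} along the run, this yields $\rho\models\psi$ for the induced trace $\rho$ of length $k+1$, which is exactly the Moore-case winning condition of Definition~\ref{def:syn}.

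The main obstacle I foresee is reconciling the two strategy formats: LTLf winning strategies are indexed by environment-input histories, whereas the game's system strategies in this paper are indexed by state histories. The positional witness from Theorem~\ref{thm:tdfa-game} dissolves the asymmetry on the game side, and the input-history parametrization of $g$ naturally records all information needed to simulate $\pi$ in lockstep. The remaining delicacy is bookkeeping, namely ensuring that indices line up correctly between the LTLf trace $\rho$ of length $k+1$, the game play of $k+1$ rounds, and the TDFA run whose $k$-th transition is the accepting one, so that the applications of Lemma~\ref{lem:fprog} and Theorem~\ref{thm:tdfaCorrectness} compose cleanly at the right position.
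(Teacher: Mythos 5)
Your proof is correct, but the forward direction takes a genuinely different route from the paper's. The paper proves the lemma as a chain of equivalences and, for the step linking the \ltlf strategy $g:(2^\X)^*\to2^\Y$ to the game strategy $\pi:S^+\to2^\Y$, it gives an explicit two-way translation: from $g$ it builds $\pi$ by mapping each state history back to a canonical input history (via a $\min$ over a fixed total ordering, needed because several input sequences can induce the same state sequence), and from $\pi$ it builds $g$ by the same automaton simulation you use. Your forward direction instead avoids constructing $\pi$ at all: you pit $g$ against a hypothetical environment-winning strategy $\tau$, derive a contradiction from Theorem~\ref{thm:tdfaCorrectness}, and then invoke determinacy (Lemma~\ref{lem:win-union} plus the corollary to Theorem~\ref{thm:tdfa-game}) to conclude $\psi$ is system-winning. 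This is legitimate --- those results are established before the lemma --- and it buys you a cleaner argument by sidestepping the delicate state-history-to-input-history translation; the cost is that your proof is non-constructive in that direction and leans on the full fixed-point machinery, whereas the paper's version is self-contained and directly exhibits the winning game strategy. Your reverse direction, using the positional strategy from Theorem~\ref{thm:tdfa-game} to define $g$ by simulation, is essentially the paper's construction specialized to positional $\pi$. One small point to make explicit in a full write-up: applying Theorem~\ref{thm:tdfaCorrectness} to $\A_\psi$ and then reading off acceptance of the simulated play in $\A_\varphi$ requires identifying the run of $\rho$ in $\A_\psi$ with the run from state $\psi$ in $\A_\varphi$; this holds because both are generated by formula progression on propositional equivalence classes with the same accepting condition, and the paper glosses over the same identification.
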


\begin{proof}

The proof of this lemma is performed with the following equivalence relations.

\textcolor{white}{$\Leftrightarrow$ }$\roundBra{\psi,\X,\Y}_{Moore}$ is realizable.

$\Leftrightarrow$ There exists a function $g: (2^\X)^* \to 2^\Y$ such that for any infinite sequence $\lambda = X_0, X_1, \cdots \in (2^\X)^\omega$, there exists $k \geq 0$ such that $\rho\models\psi$ holds, where $\rho=(X_0\cup g(\epsilon)),(X_1\cup g(X_0)),\cdots,(X_k\cup g(X_0,\cdots,X_{k-1}))$.

$\Leftrightarrow$
There exists a function $g: (2^\X)^* \to 2^\Y$ such that for any infinite sequence $\lambda = X_0, X_1, \cdots \in (2^\X)^\omega$, there exists $k \geq 0$ such that $r=s_0,\cdots,s_k$ is an accepting state sequence starting from $\psi$ and corresponding to $\rho=(X_0\cup g(\epsilon)),(X_1\cup g(X_0)),\cdots,(X_k\cup g(X_0,\cdots,X_{k-1}))$. Specifically, for the state sequence $r$ we have $s_0=\psi$, $s_i=\delta(s_{i-1},\rho[i-1])$ for $i=1\cdots,k$, and $(s_k,\rho[k])\in T$.

$\Leftrightarrow$
There exists a function $\pi: S^+\to2^\Y$ such that for any infinite sequence $\lambda = X_0, X_1, \cdots \in (2^\X)^\omega$, there exists $k \geq 0$ such that $p = (s_0, X_0\cup\pi(s_0)), \cdots,(s_k,X_k\cup\pi(s_0,\cdots,s_k))$ is a system-winning play starting from $\psi$ (i.e., $s_0=\psi$).

$\Leftrightarrow$
$\psi$ is a system-winning state in the game $\G$.

The constructive proof of the equivalence relation in the third step is as follows.

$(\Rightarrow)$ With the existence of the function $g:(2^\X)^*\to2^\Y$ as in the definition of realizability, we construct a system-winning strategy $\pi:S^+\to2^\Y$ from $\psi$ as follows. We first define the set of state sequences that are starting from $\psi$ and induced by $g$.
\begin{equation}
\begin{aligned}
    \R(\psi,g)=
    \{r\in S^+\mid&\text{\circled{1}}r[0]=\psi\text{ and}\\
    &\text{\circled{2}}\exists\eta\in(2^\X)^{|r|-1}. r[i]=\delta(r[i-1],\eta[i-1]\cup g(\eta^{i-1})),i=1,\cdots,|r|-1\}
\end{aligned}
\end{equation}
We fix an arbitrary total ordering on $(2^\X)^*$. For $r\in S^+$, we have
\begin{equation}
    \pi(r)=
    \begin{cases}
        g(f(r))&\text{ if }r\in\R(\psi,g)\\
        \emptyset&\text{ if }r\notin\R(\psi,g)
    \end{cases}\text{,}
\end{equation}
where the function $f:S^+\to(2^\X)^*$ maps state sequences to sequences of  propositional interpretations over $\X$.
\begin{equation}
    f(r)=
    \begin{cases}
        \begin{aligned}
        min\{&\eta\in(2^\X)^{|r|-1}\mid\\
        &r[i]=\delta(r[i-1]\text{ and }\eta[i-1]\cup g(\eta^{i-1})),i=1,\cdots,|r|-1\}
        \end{aligned}&\text{ if }r\in\R(\psi,g)\\
        min((2^\X)^*)&\text{ if }r\notin\R(\psi,g)
    \end{cases}
\end{equation}

$(\Leftarrow)$ With the existence of the system-winning strategy from $\psi$ $\pi:S^+\to2^\Y$, we construct the function $g:(2^\X)^*\to2^\Y$ that satisfies in the definition of realizability. For $\eta\in(2^\X)^*$, we have 
\begin{equation}
    g(\eta)=\pi(h(\eta))\text{,}
\end{equation}
where the function $h:(2^\X)^*\to S^+$ maps sequences of propositional interpretations over $\X$ to state sequences.
\begin{equation}
\begin{aligned}
    &h(\eta)=r\\
    s.t.\quad &r\in(S)^{|\eta|+1}\text{, }r[0]=\psi\text{, and }r[i]=\delta(r[i-1],\eta[i-1]\cup\pi(r^i))\text{ for }i=1,\cdots,|r|-1
\end{aligned}
\end{equation}
\end{proof}

The following theorem can be directly derived from Lemma~\ref{lem:syn2game}. When using \TDFA games to address the \LTLf realizability problem, the objective for reasoning \TDFA games is to determine whether the initial state $init$ is a system-winning state.

\begin{theorem}\label{thm:syn-game}
Given an \LTLf formula $\varphi$ and the corresponding \tdfa ${\A_{\varphi}}=(2^\PP, S, \delta, init, T)$ constructed by Definition~\ref{def:ltlf2tdfa}, the \LTLf specification $\roundBra{\varphi,\X,\Y}_{Moore}$ is realizable iff the initial state $init$ is a system-winning state in the game $\G=(2^{\X\cup\Y},S,\delta,T)_{Moore}$.
\end{theorem}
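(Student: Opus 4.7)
The plan is to obtain this theorem as a near-immediate corollary of Lemma~\ref{lem:syn2game}, which already establishes the correspondence between realizability and system-winning states at the level of an arbitrary state $\psi\in S$. The only gap is to bridge the syntactic source formula $\varphi$ with the state $init=[\varphi]_\sim$ of the \tdfa.

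First I would instantiate Lemma~\ref{lem:syn2game} with $\psi:=init$. This immediately yields that $\roundBra{init,\X,\Y}_{Moore}$ is realizable iff $init$ is a system-winning state in $\G$. Second, I would argue that the realizability of $\roundBra{\varphi,\X,\Y}_{Moore}$ and of $\roundBra{init,\X,\Y}_{Moore}$ are the very same problem. By Definition~\ref{def:ltlf2tdfa} we have $init=[\varphi]_\sim$, and the generalization of the satisfaction relation to propositional equivalence classes (established right after Lemma~\ref{lem:prop-2}) gives $\rho\models\varphi$ iff $\rho\models init$ for every finite trace $\rho$. Plugging this equivalence into Definition~\ref{def:syn} (Moore case), the existential over winning strategies $g:(2^\X)^*\to 2^\Y$ for $\varphi$ is literally the same quantified statement as for $init$, so realizability transfers verbatim. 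Chaining the two equivalences yields the claimed iff.

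I do not expect any genuine obstacle: the heavy lifting — constructing a strategy $\pi:S^+\to 2^\Y$ from a synthesis function $g:(2^\X)^*\to 2^\Y$ and vice versa — is already done inside the proof of Lemma~\ref{lem:syn2game}, and the step from $\varphi$ to $[\varphi]_\sim$ is a pure definition-chase supported by Lemma~\ref{lem:prop-2}. The only point that deserves a sentence of care is making explicit that the notational convention, which identifies an \LTLf formula with its propositional equivalence class when serving as an automaton state, is exactly what makes $\roundBra{\varphi,\X,\Y}_{Moore}$ and $\roundBra{init,\X,\Y}_{Moore}$ interchangeable; once this is noted, the theorem follows in one line from Lemma~\ref{lem:syn2game}.
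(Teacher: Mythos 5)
Your proposal is correct and follows exactly the paper's route: the paper derives Theorem~\ref{thm:syn-game} directly from Lemma~\ref{lem:syn2game} by taking $\psi=init$, and your additional remark that $\roundBra{\varphi,\X,\Y}_{Moore}$ and $\roundBra{init,\X,\Y}_{Moore}$ coincide via $init=[\varphi]_\sim$ and Lemma~\ref{lem:prop-2} is just a careful spelling-out of the step the paper leaves implicit.
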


\section{On-the-fly \ltlf Synthesis}\label{sec:on-the-fly}

\ltlf synthesis can be decomposed into two tasks: the construction of a \tdfa with doubly exponential complexity, and the resolution of the \TDFA game in linear time. As detailed in Section~\ref{sec:tdfa}, we can convert an \ltlf formula to an equivalent \tdfa through a top-down method, where the intermediate results comprise the final automata directly. This enables us to address both tasks concurrently. In this section, we propose an on-the-fly approach designed to bypass the complete construction of the automaton and its associated doubly exponential complexity. This method conducts a \emph{global forward} depth-first search along with the \TDFA construction. During this process, we dynamically identify system-winning and environment-winning states based on the available information. As the process of depth-first searching progresses, we partition the states into their respective {\SCC}s. Upon identifying each \SCC, we perform a \emph{local backward} fixed-point search to determine the states left unresolved by the forward search.

\subsection{Approach Overview}

\begin{figure}[t]
  \centering
  \includegraphics[width=.99\linewidth]{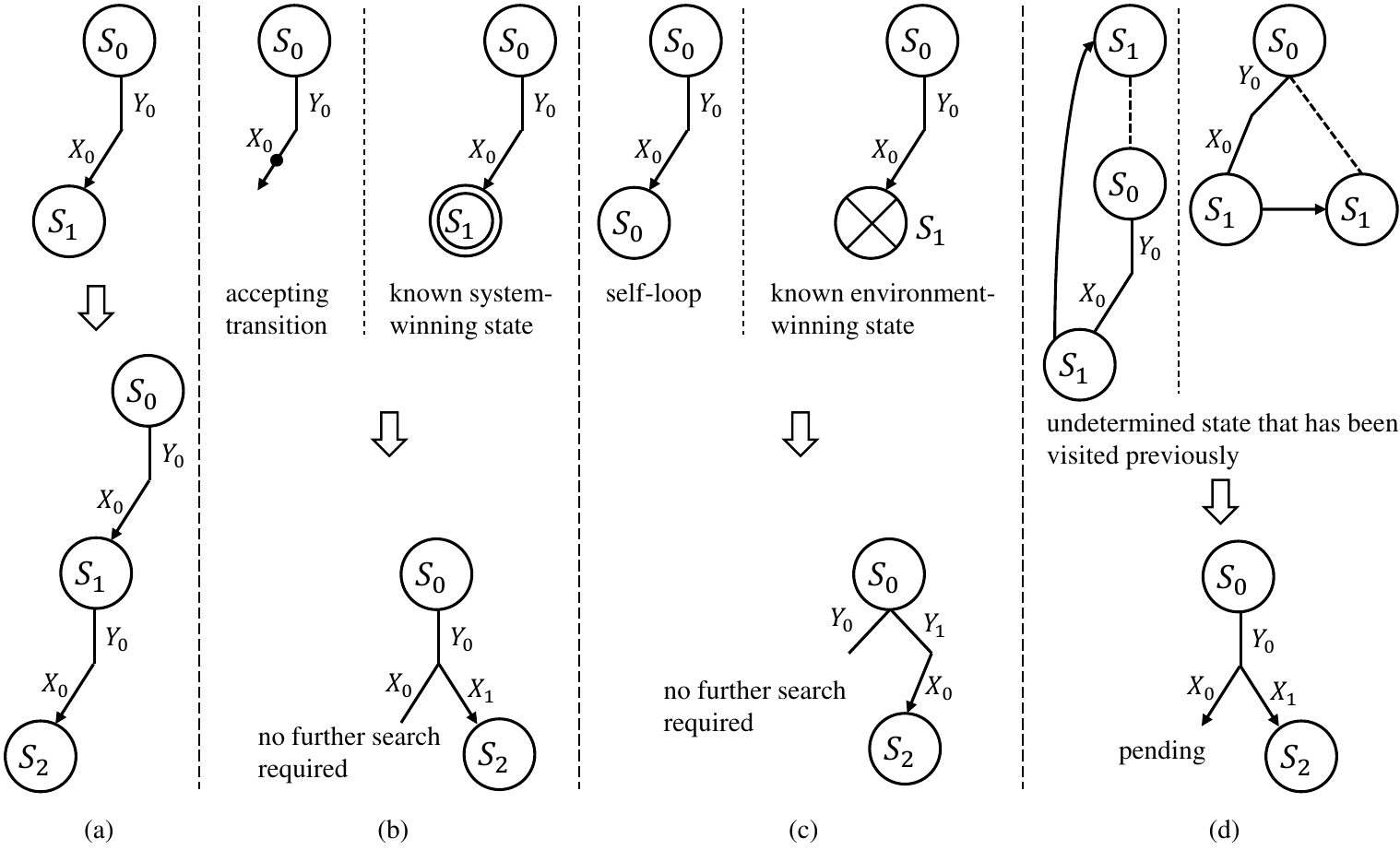}
  \caption{Different scenarios encountered during the forward search process and corresponding operations.}
  \label{fig:approach-overview}
  \Description{Demonstration of different scenarios encountered during the forward search process and corresponding operations.}
\end{figure}

Recall Theorem~\ref{thm:syn-game}, solving the problem of \LTLf realizability requires determining the corresponding \TDFA games, i.e., determining the winning states for the system/environment in games. Intuitively, to identify a state as system-winning, one needs to find a system choice such that, regardless of the response of the environment, the system can enforce a win. Conversely, to determine an environment-winning state, for every action of the system, the environment must be able to find a winning response.

Given an \ltlf specification $(\varphi,\X,\Y)_{Moore}$, we depart from the initial state $\varphi$ and utilize formula progression to incrementally build the automaton $\A_\varphi$ with a depth-first strategy. Figure~\ref{fig:approach-overview} demonstrates the different scenarios encountered during the forward search process and corresponding operations employed to determine the \TDFA game. Assume that our current state is $s_0$ and we have an edge $X_0\cup Y_0$.

\paragraph{\textup{(a)}} If $(s_0,X_0\cup Y_0)\notin T$ is not an accepting transition and $\delta(s_0,X_0\cup Y_0)=s_1$ is an unexplored state, there is no new useful information at the moment. Thus the depth-first strategy is applied to generate a new state $s_2$.

\paragraph{\textup{(b)}} If $(s_0,X_0\cup Y_0)\in T$ is an accepting transition or $\delta(s_0,X_0\cup Y_0)=s_1$ is a known system-winning state, the combined system and environment choice $X_0\cup Y_0$ can help the system win. And further exploration for $(s_0,X_0\cup Y_0)$ is unnecessary. Then the search shifts to exploring an alternative environment choice.

\paragraph{\textup{(c)}} If \circled{1} $(s_0,X_0\cup Y_0)\notin T$ is not an accepting transition, and \circled{2} $\delta(s_0,X_0\cup Y_0)=s_0$ forms a self-loop or $\delta(s_0,X_0\cup Y_0)=s_1$ is a known environment-winning state, the environment can enforce to win with current system choice $Y_0$. Thus further exploration for environment choices with $(s_0,Y_0)$ is unnecessary. The process then proceeds by considering a different system choice.

\paragraph{\textup{(d)}} If $(s_0,X_0\cup Y_0)$ is not an accepting transition and $\delta(s_0,\X_0\cup Y_0)=s_1$ is a previously visited state that has not been identified as system/environment-winning, no new useful information are gained. The repeatedly encountered state $s_1$ can be an ancestor or descendant of $s_0$ before this occurrence. For the former case, $s_1$ has not been fully processed in the depth-first search and is thus undetermined now; for the latter case, the depth-first search concerning $s_1$ is complete. Regardless of the case, the decision regarding $(s_0,X_0\cup Y_0)$ is pending and no repeat exploration is undertaken. The search then moves forward with other edges.

In scenarios (b) and (c) above, transitions leading to other known system/environment-winning states may occur either immediately upon reaching the successor state or at the backtracking phase of the depth-first search. Additionally, the case of (d) introduces the possibility that, a state $s$ has been fully processed in the depth-first search but has still not been determined. That a state $s$ is fully processed in the depth-first search indicates that all successors of $s$ have been explored following the depth-first strategy. It implies that the search does not backtrack to $s$ anymore. To this end, we integrate the detection of {\SCC}s within the forward depth-first search. Upon finding a new \SCC, we conduct a local backward search concerning Equations (\ref{eq:swin}) and (\ref{eq:ewin}) within the range of the \SCC just found.

\subsection{Algorithm}

Algorithm~\ref{alg:main-syn} shows the details of our on-the-fly \ltlf synthesis approach. It takes an \ltlf specification $(\varphi,\X,\Y)_{Moore}$ as the input and returns the realizability of the specification. The algorithm first declares five \emph{global} sets:
\begin{itemize}
\item $swin\_state$ and $ewin\_state$ to collect the known system/environment-winning states respectively;

\item $undetermined\_state$ to collect the states that have been visited but not determined to be system/environment-winning;

\item $swin\_transition$ to store system-winning transitions $(s,X\cup Y)$ such that $X\cup Y\models s$ holds or $\fp{s,X\cup Y}$ is a known system-winning state;

\item $ewin\_transition$ to store environment-winning transitions $(s,X\cup Y)$ such that \circled{1}$X\cup Y\not\models s$ holds, and \circled{2}$\fp{s,X\cup Y}$ is $s$ itself or a known environment state;

\item $ewin\_transition$ to store undetermined transitions $(s,X\cup Y)$ such that $X\cup Y\not\models s$ holds and $\fp{s,X\cup Y}$ is a state that has been visited but not determined.
\end{itemize}
Then the main procedure $\forwardSearch(\varphi)$ is invoked to compute and determine some (potentially \emph{all} in the worst case) system/environment-winning states. Finally, based on whether the initial state is a system-winning state, it returns `Realizable' or `Unrealizable' accordingly.

\begin{algorithm}[htbp!]
\caption{On-the-fly \ltlf Synthesis}\label{alg:main-syn}
\LinesNumbered
\DontPrintSemicolon
\KwIn{An \ltlf specification $(\varphi,\X,\Y)_{Moore}$}
\KwOut{Realizable or Unrealizable}

$swin\_state\coloneqq\emptyset$, $ewin\_state\coloneqq\emptyset$, $undetermined\_state\coloneqq\{\varphi\}$\;
$swin\_transition\coloneqq\emptyset$, $ewin\_transition\coloneqq\emptyset$, $undetermined\_transition\coloneqq\emptyset$\;
$\forwardSearch(\varphi)$\;
\KwRet $(\varphi\in swin)\,?\,$Realizable$\,:\,$Unrealizable\;
\;
\myproc{$\forwardSearch(s)$}
{
  \While{$\trueVal$}
  {\label{line:main:loop-in}
    \If{$\checkCurrentStatus(s)\neq\;$\textup{Unknown}}
    {\label{line:main:check-current}
      \Break\;
    }
    $edge\coloneqq \getEdge(s)$\;\label{line:main:get-edge}
    \If{$edge=\;$\textup{Null}}
    {
      \Break\;
    }
    \If{$\fp{s,edge}\in undetermined\_state$}
    {
      $undetermined\_transition.\insert(s,edge)$\;
      \Continue\;
    }
    $undetermined\_state.\insert(\fp{s,edge})$\;
    $\forwardSearch(\fp{s,edge})$\;\label{line:main:recursive}
  }\label{line:main:loop-out}
  \If{$\isSccRoot(s)$}
  {
    $scc\coloneqq\getScc()$\;
    $\backwardSearch(scc)$\;
  }
}
\end{algorithm}

The main procedure of our approach is $\forwardSearch(s)$. It initiates with some current state $s$, and recursively explores successors of $s$ that are necessary to be searched in the \emph{while}-loop at Lines \ref{line:main:loop-in}-\ref{line:main:loop-out}. Within each iteration, the procedure first tries to determine based on known information whether $s$ is system/environment-winning (Line~\ref{line:main:check-current}). If $s$ can be determined currently, it leaves the loop. At Line \ref{line:main:get-edge}, it attempts to calculate an edge comprising system and environment choice that requires further exploration. Specifically, $\getEdge()$ assigns $edge$ as:
\begin{equation}\label{eq:get-edge}
\begin{aligned}
    &X\cup Y\in 2^{\X\cup\Y}\\
    s.t.\quad &Y\notin Y_{known\_ewin}(s)\text{, }X\notin X_{known\_swin}(s,Y)\text{, and }X\notin X_{known\_undetermined}(s,Y)\text{.}
\end{aligned}
\end{equation}
Here, $Y_{known\_ewin}(s)$ denotes the set of known system choices, with which some environment choices can lead the environment to win from $s$.
\begin{equation}\label{eq:knownEwin}
    Y_{known\_ewin}(s) = \{Y\in2^\Y\mid \exists X\in2^\X. (s,X\cup Y)\in ewin\_transition\}
\end{equation}
For other system choices that are not in $Y_{known\_ewin}$, the system retains the potential for winning. Moreover, it no longer needs to explore environment choices that are known to lead the system winning, represented by $X_{known\_swin}(s,Y)$, or those leading to visited but undetermined states, represented by $X_{undetermined\_swin}(s,Y)$.
\begin{equation}\label{eq:knownSwin}
    X_{known\_swin}(s,Y) = \{X\in2^\X\mid(\psi,X\cup Y)\in swin\_transition\}
\end{equation}
\begin{equation}\label{eq:knownUndetermined}
    X_{known\_undetermined}(s,Y) = \{X\in2^\X\mid(\psi,X\cup Y)\in undetermined\_transition\}
\end{equation}
If there does not exist an edge that satisfies the condition in Equation~(\ref{eq:get-edge}), it will get `Null' from $\getEdge(s)$ and then leave the loop. This indicates that it finishes the exploration for the successors of $s$. Conversely, if an edge that satisfies the condition in Equation~(\ref{eq:get-edge}) is found, the algorithm then processes the successor $\fp{s,edge}$. If $\fp{s,edge}$ is a visited but undetermined state (i.e., $\fp{s,edge}\in undetermined\_state$), $(s,edge)$ is added to $undetermined\_transition$ and it continues the next iteration; otherwise,  proceeds to recursively explore the successor $\fp{s,edge}$ (Line~\ref{line:main:recursive}).
Finally, upon exiting the loop, the algorithm checks whether a \SCC has been detected (Line \ref{line:main:check-scc}). If detected, a backward search is conducted for the undetermined states within the \SCC (Line~\ref{line:main:bsearch-scc}).

\begin{algorithm}[htbp!]
\caption{Implementation of \texttt{checkCurrentStatus}}\label{alg:checkCurrentStatus}
\LinesNumbered
\DontPrintSemicolon
\KwIn{A state $s$ of \tdfa $\A_\varphi$}
\KwOut{System-winning, Environment-winning, Unknown}

\If{$\currentSystemWinning{s}$}
{
  \KwRet System-winning\;
}
\If{$\currentEnvironmentWinning{s}$}
{
  \KwRet Environment-winning\;
}
\KwRet Unknown\;
\;
\myproc{$\currentSystemWinning{s}$}
{
  \If{$s\in swin\_state$}
  {\label{line:checkCurrent:sInSwin}
    \KwRet $\trueVal$\;
  }
  \For{each $Y\in 2^{\Y}$}
  {
    $all\_Y\_swin\coloneqq\trueVal$\;
    \For{each $X\in 2^{\X}$}
    {
      \If{$X \cup Y\not\models s$ and $\fp{s, X\cup Y}\notin swin\_state$} 
      {\label{line:checkCurrent:fpNotinSwin}
        $all\_Y\_swin \coloneqq \falseVal$\;
        \Break\;
      }
      \Else
      {
        $swin\_transition.\insert(s,X\cup Y)$\;
      }
    }
    \If{$all\_Y\_swin$}
    {\label{line:main:check-scc}
      $swin\_state.\insert(s)$\;
      $undetermined\_state.\remove(s)$\;
      \KwRet $\trueVal$\;\label{line:main:bsearch-scc}
    }
  }
  \KwRet $\falseVal$\;
}
\myproc{$\currentEnvironmentWinning{s}$}
{
  \If{$s\in ewin\_state$}
  {
    \KwRet $\trueVal$\;
  }
  \For{each $Y\in 2^{\Y}$}
  {\label{line:checkCurrent:sInEwin}
    $exist\_X\_fail\coloneqq\falseVal$\;
    \For{each $X\in 2^{\X}$}
    {
      \If{$X\cup Y\not\models s$\ and\ $\fp{s, X\cup Y}\in (\{s\}\cup ewin\_state)$}
      {\label{line:checkCurrent:fpInEwin}
        $exist\_X\_fail \coloneqq\trueVal$\;
        $ewin\_transition.\insert(s,X\cup Y)$\;
        \Break\;
      }
    }
    \If{$\neg exist\_X\_fail$}
    {
      \KwRet $\falseVal$\;
    }
  }
  $ewin\_state.\insert(s)$\;
  $undetermined\_state.\remove(s)$\;
  \KwRet $\trueVal$\;
}

\end{algorithm}

The implementation of $\checkCurrentStatus{s}$ is presented in Algorithm~\ref{alg:checkCurrentStatus}. It checks whether $s$ is system/environment-winning currently based on the state information collected so far. The procedure $\checkCurrentStatus{s}$ returns `System-winning' if \circled{1} $s$ is already in $swin$, or \circled{2} there exists $Y\in2^\Y$ such that $X\cup Y\models s$ holds or $\fp{s,X\cup Y}$ is in $swin\_state$, for every $X\in2^\X$. During the process, system-winning states and transitions newly found are added into $swin\_state$ and $swin\_transition$ respectively, new system-winning states are removed from $undetermined\_state$. The analogous process is performed to check whether $s$ is an environment-winning state currently. Besides, it is feasible to couple \currentSystemWinning, \currentEnvironmentWinning, and \getEdge within a two-level nested loop for efficiency. The current presentation in the paper aims to enhance readability.

Subsequently, Algorithm~\ref{alg:backwardSearch} exhibits the implementation of $\backwardSearch{C}$, where it calculates the local fixed point within the range of $C$. From the set $cur\_swin$ of the current system-winning states, it first finds their direct predecessors that are in $C$ and undetermined (Line~\ref{line:bsearch:candidate}), which are potentially be determined as system-winning and stored in $candidate\_new\_swin$. Then it tries to determine whether states in $candidate\_new\_swin$ are system-winning. And states that are found system-winning are collected in $new\_win$, which is used as the current system-winning states in the next round of iteration. When no new system-winning state is found in a round iteration, it has found all the system-winning states in $C$ and leaves the loop. Ultimately, states in $C$ that are not system-winning are determined as environment-winning states and added into $ewin\_state$. 

\begin{algorithm}[htbp!]
\caption{Implementation of \texttt{backwardSearch}}\label{alg:backwardSearch}
\LinesNumbered
\DontPrintSemicolon
\KwIn{A strongly connected component $C$}

$cur\_swin\coloneqq swin\cap C$\;
\Do{$new\_swin\neq\emptyset$}
{
  $new\_swin\coloneqq\emptyset$\;
  $candidate\_new\_swin\coloneqq \directPredecessors(cur\_win,C)\cap undetermined\_state$\;\label{line:bsearch:candidate}
  \For{$s\in candidate\_new\_swin$}
  {
    \If{$\currentSystemWinning{s}$}
    {
      $new\_swin.\insert(s)$\;
    }
  }
  $cur\_swin\coloneqq new\_swin$\;
  $swin\_state\coloneqq swin\_state\cup new\_swin$\;
  $undetermined\_state\coloneqq undetermined\_state\setminus new\_swin$\;
}
$ewin\_state\coloneqq ewin\_state\cup (C\setminus swin\_state)$\;
$undetermined\_state\coloneqq undetermined\_state\setminus C$\;
\end{algorithm}

\subsection{Theoretical Analysis of the Algorithm}

TIn the following, we analyze and establish the correctness and complexity of the algorithm. We first define a set $Y_{swin}(s)$ that is used in the subsequent proof. Intuitively, for some a state $s$, it represents the set of system choices with which the state can be system-winning.
\begin{equation}
    Y_{swin}(s)=\{Y\in2^\Y\mid \forall X\in2^\X.\text{\circled{1}}(s,X\cup Y)\in T \text{, or \circled{2}}\delta(s,X\cup Y)\neq s\text{ and }\delta(s,X\cup Y)\in SWin\}
\end{equation}
The following two lemmas demonstrate the correctness of determining winning states for the system/environment in forward search.

\begin{lemma}\label{lem:forward-swin}
Given a \tdfa game $\G=(2^{\X\cup\Y},S,\delta,T)_{Moore}$, $s\in S$ is a system-winning state iff there exists $Y\in2^\Y$ such that, for every $X\in2^\X$, either $(s,X\cup Y)\in T$ is an accepting transition or the successor $\delta(s,X\cup Y)$ is another system-winning state.
\end{lemma}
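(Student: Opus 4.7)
The plan is to reduce the lemma to a statement about the fixed point $SWin$, invoking Theorem~\ref{thm:tdfa-game}, which already identifies $SWin$ with the set of system-winning states. Under that identification, the lemma becomes the assertion that $s \in SWin$ iff $Y_{swin}(s) \neq \emptyset$, which is essentially the characterization $s \in SWin \Leftrightarrow s \in CPre^S(SWin)$, refined to exclude trivial self-loops.

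For the backward direction I would argue directly from the fixed-point property. Assuming a witness $Y$ exists, for every $X \in 2^\X$ either $(s, X \cup Y) \in T$ or $\delta(s, X \cup Y) \in SWin$, which is precisely the defining condition of $s \in CPre^S(SWin)$ in Equation~(\ref{eq:cpres}). Since $SWin$ is a fixed point of $f_s$, we have $SWin = SWin \cup CPre^S(SWin) \supseteq CPre^S(SWin)$, so $s \in SWin$.

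For the forward direction I would use the level-based stratification induced by Equation~(\ref{eq:swin}). Set $l(s) = \min\{i \in \mathbb{N} \mid s \in SWin^i\}$; since $SWin^0 = \emptyset$ and $s \in SWin$, we have $l(s) \geq 1$. From the recurrence $SWin^{l(s)} = SWin^{l(s)-1} \cup CPre^S(SWin^{l(s)-1})$ together with the minimality of $l(s)$, it follows that $s \in CPre^S(SWin^{l(s)-1})$. This yields a $Y \in 2^\Y$ such that for every $X$, either $(s, X\cup Y) \in T$ or $\delta(s, X \cup Y) \in SWin^{l(s)-1} \subseteq SWin$. Minimality additionally gives $s \notin SWin^{l(s)-1}$, so whenever the successor lies in $SWin^{l(s)-1}$ it must differ from $s$. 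Hence $Y \in Y_{swin}(s)$.

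The step I expect to need the most care is precisely this use of minimality to exclude the self-loop case: the recurrence alone would yield only $\delta(s, X \cup Y) \in SWin$, not $\delta(s, X \cup Y) \neq s$, so the nontrivial content of the forward direction lies in selecting the \emph{earliest} stratum at which $s$ is certified, which forces any useful successor to live at a strictly lower stratum and therefore be distinct from $s$. Everything else is a direct application of the fixed-point semantics established in Theorem~\ref{thm:tdfa-game} and Equations~(\ref{eq:cpres})--(\ref{eq:swin}).
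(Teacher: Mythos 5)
Your proof is correct and rests on the same machinery as the paper's: the stratification $SWin^0\subseteq SWin^1\subseteq\cdots$ of Equation~(\ref{eq:swin}) and the identification of $SWin$ with the set of system-winning states from Theorem~\ref{thm:tdfa-game}. The difference is one of completeness rather than of route. The paper's proof only argues one direction: it takes a witness $Y\in Y_{swin}(s)$, defines $m(Y)$ as the maximum level of the non-accepting successors, and exhibits the level $1+\min\{m(Y)\mid Y\in Y_{swin}(s)\}$ at which $s$ enters $SWin$ --- this is your backward implication in explicit-level rather than fixed-point form (your version, via $s\in CPre^S(SWin)\subseteq SWin$, is cleaner). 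The forward implication is left implicit in the paper, and your treatment of it is the genuinely valuable addition: taking the minimal stratum $l(s)$, extracting $s\in CPre^S(SWin^{l(s)-1})$ from the recurrence together with minimality, and then using $s\notin SWin^{l(s)-1}$ to conclude that every non-accepting successor certified at a lower stratum is distinct from $s$ is exactly the argument needed to justify the clause $\delta(s,X\cup Y)\neq s$ in $Y_{swin}(s)$ (the word ``another'' in the statement), which the paper never spells out. Both arguments are sound; yours is the more complete of the two.
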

\begin{proof}

Intuitively, this lemma can be directly derived from the construction process of $SWin^i$ (i.e., Equation~(\ref{eq:swin})). Recall that for $t\in SWin$, we denote by $l(t)$ the level where $t$ is added to $SWin$ firstly. The process of constructing $l(s)$ below demonstrates that $s$ can definitely be added to $SWin$, implying that $s$ is a system-winning state. 

For every winning system choice of $s$, we can find the maximum level at which its system-winning successors are added to $SWin$ for the first time. For $Y\in Y_{swin}$, we define
\begin{equation}
    m(Y)=
    \begin{cases}
        0&\text{if }\forall X\in2^\X.(s,X\cup Y)\in T\text{ holds}\\
        max\{l(\delta(s,Y\cup X))\mid X\in2^\X\text{, }(s,X\cup Y)\notin T\}&\text{otherwise}
    \end{cases}\text{.}
\end{equation}
Then we have $l(s)=1+min\{m(Y)\mid Y\in Y_{swin}(s)\}$.

\end{proof}

\begin{lemma}\label{lem:forward-ewin}
Given a \tdfa game $\G=(2^{\X\cup\Y},S,\delta,T)_{Moore}$, $s\in S$ is an environment-winning state if for every $Y\in2^\Y$, there exists $X\in2^\X$ such that, $(s\times X\cup Y)\notin T$ is not an accepting transition, and the successor $\delta(s,X\cup Y)$ is either itself (i.e., $\delta(s,X\cup Y)=s$) or another environment-winning state.
\end{lemma}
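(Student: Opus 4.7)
My plan is to derive the lemma from the fixed-point characterization of $EWin$ established in Theorem~\ref{thm:tdfa-game}. The iteration in Equation~(\ref{eq:ewin}) starts at $S$ and repeatedly applies $f_e(E) = E \cap CPre^E(E)$, so $EWin$ is the greatest fixed point of $f_e$. By the Knaster--Tarski theorem, $EWin$ contains every post-fixed point of $f_e$, i.e., every set $D \subseteq S$ satisfying $D \subseteq f_e(D)$, which here reduces to $D \subseteq CPre^E(D)$ (because $D \subseteq D \cap CPre^E(D)$ iff $D \subseteq CPre^E(D)$). Hence it suffices to exhibit such a $D$ containing $s$; by Theorem~\ref{thm:tdfa-game} this forces $s$ to be environment-winning.

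\textbf{Construction.} I would take $D = EWin \cup \{s\}$ and verify $D \subseteq CPre^E(D)$ by a case split on the membership of an element $t \in D$. If $t \in EWin$, then the fixed-point identity $EWin = EWin \cap CPre^E(EWin)$ gives $t \in CPre^E(EWin)$, and by monotonicity of $CPre^E$ (immediate from Equation~(\ref{eq:cpree})) we have $CPre^E(EWin) \subseteq CPre^E(D)$, so $t \in CPre^E(D)$. If $t = s$, the lemma's hypothesis asserts that for every $Y \in 2^\Y$ there exists $X \in 2^\X$ with $(s, X\cup Y) \notin T$ and $\delta(s, X\cup Y) \in \{s\} \cup EWin = D$, which is precisely the definition of $s \in CPre^E(D)$ from Equation~(\ref{eq:cpree}). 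Combining the two cases yields $D \subseteq CPre^E(D)$, so $s \in D \subseteq EWin$, and Theorem~\ref{thm:tdfa-game} concludes the proof.

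\textbf{Expected obstacle.} The delicate point is the treatment of the self-loop clause $\delta(s, X\cup Y) = s$ in the hypothesis: a naive level-based argument mirroring the one used for Lemma~\ref{lem:forward-swin} does not go through, because $s$ need not lie above any strictly lower-level environment-winning successor. Absorbing the self-loop by enlarging the candidate set from $EWin$ to $EWin \cup \{s\}$ is exactly what closes this gap, and the greatest-fixed-point view of $EWin$ is what licenses the enlargement. An alternative, more operational route would construct an environment strategy $\tau$ that plays the hypothesis-prescribed $X$ whenever the current state is $s$ and otherwise follows a uniform positional winning strategy on $EWin$ from Theorem~\ref{thm:tdfa-game}; any consistent play then either loops at $s$ forever or eventually enters $EWin$ and stays there, avoiding $T$ in both cases. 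I would prefer the post-fixed point argument because it is shorter and reuses the machinery already developed.
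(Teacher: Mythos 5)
Your proof is correct, but it takes a genuinely different route from the paper's. The paper argues by contradiction: it assumes $s$ is system-winning, invokes Theorem~\ref{thm:tdfa-game} to obtain a uniform \emph{positional} system-winning strategy $\pi$, applies the hypothesis to $Y=\pi(s)$ to get a non-accepting transition into $\{s\}\cup EWin$, and in either case exhibits an environment-winning play consistent with $\pi$ (the self-loop yields $(s,X\cup\pi(s))^\omega$ directly; an environment-winning successor $t$ yields one by playing an environment-winning strategy from $t$ against $\pi$ and prepending the first round), contradicting that $\pi$ wins from $s$. Your argument instead stays entirely on the fixed-point side: you show $EWin\cup\{s\}$ is a post-fixed point of $f_e$ and invoke the greatest-fixed-point characterization of $EWin$. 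Both are sound. Your version absorbs the self-loop more cleanly and needs only monotonicity of $CPre^E$ together with Theorem~\ref{thm:tdfa-game}'s identification of $EWin$ with the environment-winning states, whereas the paper's version leans on positionality of $\pi$ (without which prepending a round to a play consistent with $\pi$ would not preserve consistency). The one step you should make explicit is that the iterate $EWin=EWin^n$ is indeed the \emph{greatest} fixed point of $f_e$ --- the paper only asserts that a fixed point is reached --- but this follows from the routine induction that any post-fixed point $D$ satisfies $D\subseteq EWin^i\Rightarrow D\subseteq f_e(D)\subseteq f_e(EWin^i)=EWin^{i+1}$, so the gap is cosmetic rather than substantive.
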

\begin{proof}

Assume that $s$ is a system-winning state. By Theorem~\ref{thm:tdfa-game}, there exists a positional and uniform system winning strategy $\pi:S\to2^\Y$ for $\G$. For $\pi(s)$, there exists $X\in2^\X$ such that, $(s\times X\cup\pi(s))\notin T$ is not an accepting transition, and the successor $\delta(s,X\cup\pi(s))$ is either itself (i.e., $\delta(s,X\cup\pi(s))=s$) or another environment-winning state.
\begin{itemize}
\item $\delta(s,X\cup\pi(s))=s$. Then there exists an environment-winning play $(s,X\cup\pi(s))^\omega$ that is consistent with $pi$ and starting from $s$, which contradicts the assumption that $s$ is a system-winning state.

\item $\delta(s,X\cup\pi(s))=t$ with $t\neq s$ is another environment-winning state. Then there exists a play $p$ that is consistent with $pi$ and starting from $t$. The play $(s,X\cup\pi(s)),p$ is also an environment-winning play that is consistent with $pi$, which contradicts the assumption.
\end{itemize}
The assumption that $s$ is a system-winning state leads to a contradiction, so we have that $s$ is an environment-winning state.
\end{proof}

As for the backward search phase, we aim to establish that, all the essential information related to each \SCC has been collected before initiating the backward search. This enables us to determine all previously undecided states within the \SCC through the backward search process. To facilitate this, we first introduce the concept of minimal dependent structure.
\begin{definition}[Minimal Dependent Structure of System-winning State]\label{def:MDS}
Given a \tdfa game $\G=(2^{\X\cup\Y},S,\delta,T)_{Moore}$, a minimal dependent structure $U_s$ of a system-winning state $s\in SWin$ is a subset of $S\times2^\Y$ such that:
\begin{equation}
    U_s=
    \begin{cases}
        \{(s,Y)\}&\text{ if }sna(s,Y)=\emptyset\\
        \{(s,Y)\}\cup \bigcup_{t\in sna(s,Y)}U_t&\text{ if }sna(s,Y)\neq\emptyset
    \end{cases}\text{,}
\end{equation}
where $Y\in Y_{swin}(s)$ hold and $U_t$ is a minimal dependent structure of $t$. And $sna(t,Y)$ represents the set of \underline{s}uccessors of $t$ that correspond to the system choice $Y$ and are reached via \underline{n}on-\underline{a}ccepting transitions, defined as $sna(t,Y)=\{v\in S\mid v=\delta(t,X\cup Y)\text{, }(t,X\cup Y)\notin T\text{, and }X\in2^\X\}$.
\end{definition}
Intuitively, a `minimal dependent structure' of a system-winning state encapsulates all the essential information required to classify it as system-winning. The usage of `Minimal' indicates that no subset of this structure suffices for such determination. Moreover, since there can be multiple system choices that enable a state to be system-winning, a system-winning state may correspond to several distinct minimal dependent structures. This concept serves merely as a tool to demonstrate the soundness of our approach; hence, we do not delve into further discussion of its properties.

\begin{lemma}\label{lem:MDS-generate-G}
Given a \tdfa game $\G=(2^{\X\cup\Y},S,\delta,T)_{Moore}$, a system-winning state $s$ in $\G$, and a minimal dependent structure $U_s$ of $s$, $s$ is also a system-winning state in $G'$, where $\G'=(2^{\X\cup\Y},S'\cup\{p\},\delta',T')_{Moore}$ is a \tdfa game such that:
\begin{itemize}
    \item $U_s|_S\subseteq S'\subseteq S$ holds, $p$ is a padding state, and $U_s|_S=\{t\in S\mid \exists Y\in2^\Y.(t,y)\in U_s\}$;

    \item $\delta'(t,X\cup Y)=
        \begin{cases}
        \delta(t,X\cup Y)&(t,Y)\in U_s\text{ and }\delta(t,X\cup Y)\in sna(t,Y)\\
        p& t=p\text{ or }\delta(t,X\cup Y)\notin S'\\
        \delta(t,X\cup Y)\text{ or }p&\text{otherwise}
        \end{cases}$;

    \item $T'=T\cap(S'\times2^{\X\cup\Y})$ holds.
\end{itemize}
\end{lemma}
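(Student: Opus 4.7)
My plan is to exhibit an explicit positional system strategy $\pi'$ for $\G'$ that wins from $s$, built directly from the choices encoded in $U_s$. For each $t \in U_s|_S$ the defining expansion of $U_s$ fixes at least one $Y$ with $(t, Y) \in U_s$; I set $\pi'(t)$ to be such a $Y$ (breaking ties by a fixed total order on $2^\Y$), and I set $\pi'$ arbitrarily on $(S' \setminus U_s|_S) \cup \{p\}$, where the actual assignment will not matter.

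The first step is to show that every play in $\G'$ starting at $s$ and consistent with $\pi'$ stays inside $U_s|_S$ until it reaches an accepting transition of $T'$. Suppose the current state is $t \in U_s|_S$ and the environment plays some $X$. If $(t, X \cup \pi'(t)) \in T$, then since $t \in S'$ this transition also belongs to $T' = T \cap (S' \times 2^{\X \cup \Y})$ and the play is system-winning. Otherwise, by the definition of $sna$, $\delta(t, X \cup \pi'(t)) \in sna(t, \pi'(t))$, and the MDS condition forces $sna(t, \pi'(t)) \subseteq U_s|_S \subseteq S'$; hence the first clause in the definition of $\delta'$ fires and gives $\delta'(t, X \cup \pi'(t)) = \delta(t, X \cup \pi'(t)) \in U_s|_S$. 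So the invariant is preserved and the play never transitions to $p$.

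The second step is to show that any such play must reach an accepting transition after finitely many rounds. For this I would equip each $t \in U_s|_S$ with a well-founded rank $d(t)$, defined by $d(t) = 0$ when $sna(t, \pi'(t)) = \emptyset$ and $d(t) = 1 + \max\{d(v) \mid v \in sna(t, \pi'(t))\}$ otherwise, so that $d$ strictly decreases along every non-accepting transition. Equivalently, I can reuse the level function $l(\cdot)$ from the proof of Lemma~\ref{lem:forward-swin}: by choosing the $Y$ at each step of the MDS expansion to minimise the associated $m(Y)$, every successor in $sna(t, \pi'(t))$ has strictly smaller level than $t$, so $l$ itself serves as the well-founded progress measure.

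The main obstacle is precisely this well-foundedness argument, since the recursive equation defining $U_s$ does not on its face exclude infinite regress in the presence of cycles in the underlying transition graph. The justification must either bake the level ordering into the MDS construction (making $U_s$ finite by construction) or proceed by induction on $l(s)$. Once a suitable rank is in place, the remainder of the proof reduces to a straightforward case analysis over the three clauses of $\delta'$: under $\pi'$ started at $s$, the only clause that ever actually fires from inside $U_s|_S$ is the first one, which keeps the dynamics identical to those of $\G$, and this yields that $\pi'$ is a system-winning strategy from $s$ in $\G'$.
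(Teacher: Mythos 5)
Your proof is correct and, at its core, follows the same route as the paper: the paper's entire argument is a two-sentence remark that the states in $U_s|_S$ can be shown system-winning in $\G'$ by structural induction on Definition~\ref{def:MDS}, applying Lemma~\ref{lem:forward-swin} at each node. You unfold that induction into an explicit positional strategy $\pi'$ read off from the $Y$-components of $U_s$, verify the invariant that plays consistent with $\pi'$ remain inside $U_s|_S$ (where the first clause of $\delta'$ makes the dynamics agree with $\delta$, and $T'=T\cap(S'\times2^{\X\cup\Y})$ makes acceptance agree with $T$), and then argue termination via a rank. The one place where you add genuine content rather than mere detail is the well-foundedness worry, and you are right to raise it: Definition~\ref{def:MDS} is a recursive equation that, read literally, admits cyclic ``solutions,'' since a $Y\in Y_{swin}(t)$ need not decrease the level $l$ --- so $U_s$ could in principle contain a cycle $s\to t\to s$ along which no accepting transition is ever forced, and the induced $\pi'$ (equally, the paper's ``structural induction'') would then fail. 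Your repair --- requiring each chosen $Y$ to minimise $m(Y)$ so that $l$ strictly decreases into $sna(t,Y)$, or equivalently inducting on $l(s)$ --- is exactly what is needed to make both your strategy argument and the paper's one-line induction legitimate; it really belongs in Definition~\ref{def:MDS} itself rather than in the informal remark about minimality that follows it. So: same approach, but yours is the fully worked version, and it exposes a gap in the definition that the paper's proof sketch glosses over.
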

\begin{proof}
With Lemma~\ref{lem:forward-swin}, we can prove that states in $U_s|_S$ are system-winning states in $G'$ by the structural induction on Definition~\ref{def:MDS}. This establishes that $s$ is a system-winning state in $G'$.
\end{proof}

\begin{lemma}\label{lem:MDS-subG}
Algorithm~\ref{alg:backwardSearch} correctly identifies the undetermined states within the current \SCC $C$ as system/environment-winning.
\end{lemma}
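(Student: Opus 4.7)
The plan is to split the argument into a soundness part (every label that \texttt{backwardSearch} attaches to a state in $C$ is correct) and a completeness part (every undetermined state in $C$ receives a label). The underlying enabling invariant, which I would establish first, is that at the moment \texttt{backwardSearch}($C$) is invoked, every successor of a state in $C$ that lies outside $C$ has already been fully processed and hence already belongs to $swin\_state$ or $ewin\_state$. This follows because Tarjan's algorithm discovers {\SCC}s in reverse topological order of the \DAG of {\SCC}s, so every \SCC reachable from $C$ and distinct from $C$ has been treated by an earlier call to \texttt{backwardSearch}, and we may inductively assume it was handled correctly.

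For soundness of the system-winning label, each time \texttt{currentSystemWinning}$(s)$ succeeds and adds $s$ to $swin\_state$, its inner loops have exhibited some $Y \in 2^\Y$ such that for every $X \in 2^\X$ either $(s,X\cup Y)\in T$ or $\fp{s,X\cup Y}$ is already in $swin\_state$. By a straightforward induction on the order in which states are added to $swin\_state$ (with previously added states correct by hypothesis), Lemma~\ref{lem:forward-swin} certifies $s$ as system-winning in the full game~$\G$. For soundness of the environment-winning label assigned at the end of \texttt{backwardSearch}, pick any $s \in C$ that survived all iterations of the do-while loop. Since the last call to \texttt{currentSystemWinning}$(s)$ failed for every $Y \in 2^\Y$, there exists, for each $Y$, some $X$ with $X\cup Y \not\models s$ and $\fp{s,X\cup Y} \notin swin\_state$. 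By the invariant, such a successor is either $s$ itself (self-loop), another surviving state in $C$ about to be labelled environment-winning, or a state outside $C$ that was previously labelled environment-winning; in every case Lemma~\ref{lem:forward-ewin} applies and certifies $s$ as environment-winning.

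For completeness, I would show that every $s \in C$ that is system-winning in $\G$ is eventually added to $swin\_state$ during \texttt{backwardSearch}. Fix a minimal dependent structure $U_s$ (Definition~\ref{def:MDS}). By Lemma~\ref{lem:MDS-generate-G}, the states of $U_s|_S$ are system-winning in a reduced game that keeps only the transitions from $sna(t,Y)$ plus a losing sink; in particular the certificates of system-winningness encoded by $U_s$ depend only on $U_s|_S$ and on the external successors reached through non-accepting transitions along the chosen $Y$'s. Those external successors either lie outside $C$—and then, by the invariant, they must already be in $swin\_state$, because being in $ewin\_state$ would contradict their role in witnessing system-winningness in the reduced game—or they lie in $C$. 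A structural induction on $U_s$, starting from its leaves where either all $X$-transitions under the chosen $Y$ are accepting or all their non-accepting successors are already in $swin\_state$, shows that each iteration of the do-while loop promotes one more layer of $U_s|_S \cap C$ to $swin\_state$ via \texttt{currentSystemWinning}; since $C$ is finite, $s$ is reached in finitely many iterations. The loop terminates when no new additions occur, and at that point every system-winning state of $C$ is in $swin\_state$, which is precisely what is needed to cap the soundness argument for the environment label.

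The main obstacle will be rigorously pinning down the layered induction on $U_s$: one must ensure that the $Y$ witnessing membership of a state $t$ in $U_s$ is in fact among those that \texttt{currentSystemWinning} enumerates (true, because the outer loop ranges over all of $2^\Y$), and that the restriction to $sna(t,Y)$ matches the conjunct $X\cup Y \not\models s$ that selects non-accepting edges inside \texttt{currentSystemWinning}. A subsidiary subtlety is the asymmetric treatment of self-loops: they are deliberately rejected by \texttt{currentSystemWinning} but permitted by Lemma~\ref{lem:forward-ewin}, and one should check that this is consistent with the invariant and with the reduced-game construction of Lemma~\ref{lem:MDS-generate-G}, so that self-loops never create spurious system-winning labels but do participate correctly in environment-winning certificates.
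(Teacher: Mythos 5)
Your proposal is correct and follows essentially the same route as the paper: soundness of the system-winning labels via Lemma~\ref{lem:forward-swin}, and correctness of the environment-winning labels by arguing that Tarjan's reverse-topological discovery order guarantees the minimal dependent structure of any genuinely system-winning state already lies in the explored portion of the game, so Lemma~\ref{lem:MDS-generate-G} forces the backward search to find it. The only caveat is that your intermediate direct appeal to Lemma~\ref{lem:forward-ewin} for a successor that is merely ``about to be labelled environment-winning'' is circular on its own, but this is harmless since, as you note, your completeness argument (which is the paper's contradiction argument) is what actually carries the environment case.
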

\begin{proof}
States in $undetermined\_state$ before executing Algorithm~\ref{alg:backwardSearch} can result in two possible situations after the execution. Firstly, a state $s$ is determined to be system-winning and added to $swin$, with the correctness of this case guaranteed by Lemma~\ref{lem:forward-swin}. Secondly, if $s$ is not identified as system-winning, it is classified as environment-winning and added to $ewin$. To validate this situation, we conduct a proof by contradiction: assume there exists a state $s$, not identified as system-winning and yet is in fact system-winning. Considering our depth-first search strategy and the properties of Tarjan's algorithm, {\SCC}s form a directed acyclic graph and are traversed in depth-first and topological order.
This ensures the existence of a minimal dependent structure for $s$, which satisfies the conditions of Lemma~\ref{lem:MDS-generate-G} in conjunction with the currently constructed partial \TDFA game. Consequently, $s$ would have been identified as system-winning during the backward search in Algorithm~\ref{alg:backwardSearch}, contradicting the initial assumption. Hence, all state $s$ not recognized as system-winning by Algorithm~\ref{alg:backwardSearch} is correctly classified as environment-winning.
\end{proof}

\begin{theorem}
Given an \ltlf specification $(\varphi,\X,\Y)_{Moore}$,
\begin{enumerate}
\item Algorithm \ref{alg:main-syn} can terminate within time  of $O(2^{|\X\cup\Y|}\cdot 2^{2^{|tcl(\varphi)|}})$;

\item $(\varphi,\X,\Y)_{Moore}$ is realizable iff Algorithm \ref{alg:main-syn} returns `\textup{Realizable}'.
\end{enumerate}
\end{theorem}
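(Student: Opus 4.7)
The plan is to establish the two parts separately, in each case leveraging Theorem~\ref{thm:syn-game} to connect \LTLf synthesis with \TDFA games and building on the soundness lemmas already proved for the forward search (Lemmas~\ref{lem:forward-swin} and \ref{lem:forward-ewin}) and the backward search (Lemma~\ref{lem:MDS-subG}).

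For part (1), termination and complexity, I would first argue that each state of the implicit \TDFA $\A_\varphi$ is touched by $\forwardSearch$ at most once. This follows because a recursive call $\forwardSearch(\fp{s,edge})$ is invoked only when $\fp{s,edge}\notin undetermined\_state$ and $\fp{s,edge}$ is not already classified, and immediately before this call the successor is inserted into $undetermined\_state$; subsequent encounters of the same state take the early branch that pushes only into $undetermined\_transition$. Within a single invocation, the \emph{while}-loop iterates at most $2^{|\X\cup\Y|}$ times, because $\getEdge$ never returns an edge whose $(Y,X)$-pair lies in any of the three sets $Y_{known\_ewin}$, $X_{known\_swin}(s,Y)$, $X_{known\_undetermined}(s,Y)$, and each inspected edge is inserted into exactly one of $swin\_transition$, $ewin\_transition$, or $undetermined\_transition$, so a given edge is considered at most a constant number of times per state. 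Combining this with Theorem~\ref{thm:tdfaComplexity}, which bounds the number of states by $2^{2^{|tcl(\varphi)|}}$, the total work of the forward phase is $O(2^{|\X\cup\Y|}\cdot 2^{2^{|tcl(\varphi)|}})$. For the backward phase, Tarjan's algorithm partitions visited states into disjoint {\SCC}s, and for each \SCC $C$ the \emph{do-while} of Algorithm~\ref{alg:backwardSearch} performs at most $|C|$ outer iterations (since every iteration either adds at least one new state to $swin\_state$ or terminates), with each iteration examining at most $|C|\cdot 2^{|\X\cup\Y|}$ transitions. Summing over all {\SCC}s gives again $O(2^{|\X\cup\Y|}\cdot 2^{2^{|tcl(\varphi)|}})$, which dominates the total cost.

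For part (2), correctness, by Theorem~\ref{thm:syn-game} it suffices to show that upon termination $\varphi\in swin\_state$ iff $\varphi$ is a system-winning state in the associated \TDFA game $\G$. I would proceed by a two-direction argument. For soundness, every insertion into $swin\_state$ occurs either in $\currentSystemWinning$ or in $\backwardSearch$, both justified by Lemma~\ref{lem:forward-swin}; every insertion into $ewin\_state$ occurs either in $\currentEnvironmentWinning$, justified by Lemma~\ref{lem:forward-ewin}, or as the residual of an \SCC in $\backwardSearch$, justified by Lemma~\ref{lem:MDS-subG}. Thus $swin\_state\subseteq SWin$ and $ewin\_state\subseteq EWin$ throughout the execution. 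For completeness, I would perform an induction on the reverse topological order in which {\SCC}s are discovered (exactly the order in which Tarjan's algorithm returns them, and the order in which $\backwardSearch$ is invoked). The induction hypothesis is that every state outside the current \SCC $C$ has already been correctly classified. Given this, Lemma~\ref{lem:MDS-subG} guarantees that after $\backwardSearch(C)$ terminates, every state in $C$ is classified correctly as well. Since $\varphi$ belongs to exactly one \SCC and is classified by the time that \SCC is processed, and since by Lemma~\ref{lem:win-insertion} the classifications are mutually exclusive while by Lemma~\ref{lem:win-union} they exhaust $S$, the algorithm returns \emph{Realizable} exactly when $\varphi\in SWin$.

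The main obstacle I anticipate is formalizing the induction over the \SCC topological order in part (2): one must carefully argue that by the time $\backwardSearch(C)$ starts, all transitions leaving $C$ point to already-classified states, so that the residual subgame on $C$ truly matches the hypothesis of Lemma~\ref{lem:MDS-subG}. This relies on the depth-first discipline of Tarjan's algorithm, which ensures that an \SCC is completed only after all successor {\SCC}s have been completed. Once that invariant is stated cleanly, the remaining pieces are routine bookkeeping on the three global sets and a direct appeal to the soundness lemmas.
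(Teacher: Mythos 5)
Your proposal is correct and follows essentially the same route as the paper: the complexity bound comes from the state-count bound of Theorem~\ref{thm:tdfaComplexity} times the per-state edge bound $2^{|\X\cup\Y|}$, and correctness is reduced via Theorem~\ref{thm:syn-game} to the classification lemmas (Lemmas~\ref{lem:forward-swin}, \ref{lem:forward-ewin}, and \ref{lem:MDS-subG}). You simply spell out in more detail what the paper leaves implicit, in particular the induction over the topological order of {\SCC}s, which the paper absorbs into the proof of Lemma~\ref{lem:MDS-subG}.
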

\begin{proof}
By Theorem~\ref{thm:tdfaComplexity}, maximum of $2^{2^{|tcl(\varphi)|}}$ states is visited. And each state is limited to a backtrack count of at most $2^{|\X\cup\Y|}$. Thus the time complexity of the algorithm is $O(2^{|\X\cup\Y|}\cdot 2^{2^{|tcl(\varphi)|}})$.

By Theorem~\ref{thm:tdfa-game}$, (\varphi,\X,\Y)_{Moore}$ is realizable iff $\varphi$ is a system-winning state of the corresponding \tdfa game. And the correctness of determining system/environment winning states has been discussed in Lemmas \ref{lem:forward-swin}, \ref{lem:forward-ewin}, and \ref{lem:MDS-subG}, covering all possible scenarios.

\end{proof}

\section{Optimization Techniques}\label{sec:opt}

In this section, we extend Algorithm~\ref{alg:main-syn} by presenting two optimization techniques from different perspectives.

\subsection{Model-guided Synthesis}

Our on-the-fly synthesis approach requires determining whether the states in the corresponding \tdfa game are winning states for the system or the environment. In this process, the edges of each state are explored in a random and non-directed manner. By Lemma~\ref{lem:forward-swin}, system-winning states are recursively detected with its base case falling on the accepting edges of \tdfa. This leads us to intuitively infer that edges associated with some satisfiable traces are more likely to result in the current state being identified as system-winning. Inspired by this insight, we design the model-guided strategy to select the proceeding directions, which achieves a more targeted search. Here, models refer to satisfiable traces.

The \LTLf satisfiability problem has been addressed by a relatively efficient solution~\cite{LRPZV19}, which provides two {\API}s for our use.
\begin{itemize}
\item $\ltlfSat(\varphi)$ checks whether an \LTLf formula $\varphi$ is satisfiable;

\item $\getModel()$, invoked when $\ltlfSat(\varphi)$ returns `sat', retrieves a model of $\varphi$, which is a satisfiable trace of minimum length.
\end{itemize}

\begin{algorithm}[htbp!]
\caption{Model-guided \ltlf Synthesis}\label{alg:mg-syn}
\LinesNumbered
\DontPrintSemicolon
\KwIn{An \ltlf specification $(\varphi,\X,\Y)_{Moore}$}
\KwOut{Realizable or Unrealizable}
$swin\_state\coloneqq\emptyset$, $ewin\_state\coloneqq\emptyset$, $undetermined\_state\coloneqq\{\varphi\}$\;
$swin\_transition\coloneqq\emptyset$, $ewin\_transition\coloneqq\emptyset$, $undetermined\_transition\coloneqq\emptyset$\;
$model\coloneqq\epsilon$\;
$\forwardSearch(\varphi)$\;
\KwRet $(\varphi\in swin)\,?\,$Realizable$\,:\,$Unrealizable\;
\;
\myproc{$\forwardSearch(s)$}
{
  \While{$\trueVal$}
  {
    \If{$\checkCurrentStatus(s)\neq\;$\textup{Unknown}}
    {
      \Break\;
    }
    $edge\coloneqq \getEdge(s)$\;\label{line:mg-syn:getEdge}
    \If{$edge=\;$\textup{Null}}
    {\label{line:mg-syn:nullEdge}
      \If{$\noSwinPotential(s)$}
      {
        $ewin\_state.\insert(s)$\;
        $undetermined\_state.\remove(s)$\;
      }
      \Break\;
    }
    \If{$\fp{s,edge}\in undetermined\_state$}
    {
      $undetermined\_transition.\insert((s,edge))$\;
      $model\coloneqq\epsilon$\;\label{line:mg-syn:resetRevisit}
      \Continue\;
    }
    $undetermined\_state.\insert(\fp{s,edge})$\;
    $\forwardSearch(\fp{s,edge})$\;\label{line:mg-syn:resetBacktrack}
    $model\coloneqq\epsilon$\;
  }
  \If{$\isSccRoot(s)$}
  {
    $C\coloneqq\getScc()$\;
    $\backwardSearch(scc)$\;
  }
}
\myproc{$\getEdge(s)$}
{\label{line:mg-getEdge:begin}
  \If{$model=\epsilon$}
  {\label{line:mg-getEdge:checkEmpty}
    $edge\_constraint\coloneqq\edgeConstraint(s)$\;\label{line:mg-getEdge:edgeConstraint}
    \If{$\ltlfSat(s\wedge edge\_constraint)=\;$\textup{sat}}
    {
      $model\coloneqq\getModel()$\;
    }
  }
  \If{$model=\epsilon$}
  {
    \KwRet Null\;
  }
  \Else
  {
    $edge\coloneqq model[0]$\;
    $model\coloneqq model_1$\quad \tcp*[l]{remove $model[0]$}
    \KwRet $edge$\;
  }
}\label{line:mg-getEdge:end}
\end{algorithm}

Algorithm~\ref{alg:mg-syn} shows the implementation of the model-guided approach, which is built upon Algorithm~\ref{alg:main-syn}. We now proceed to clarify their differences. Firstly, Algorithm~\ref{alg:mg-syn} introduces a new global variable $model$, which stores a satisfiable trace and is initialized as empty trace $\epsilon$. Secondly, it selects edges for search by invoking the $\getEdge(s)$ at Line~\ref{line:mg-syn:getEdge}, which adopts the model-guided strategy to choose paths forward.

The implementation of $\getEdge(s)$ in model-guided synthesis is detailed at Lines~\ref{line:mg-getEdge:begin}-\ref{line:mg-getEdge:end}. It first checks whether $model$ is an empty trace (Line~\ref{line:mg-getEdge:checkEmpty}). If $model$ is not an empty trace, this indicates that some unexplored segment exists within a satisfiable trace previously computed. In the case where $model$ is now empty, it tries to acquire a new satisfiable trace. At Line~\ref{line:mg-getEdge:edgeConstraint}, an edge constraint is computed for current $s$ to block edges that do not require further exploration. Formally, $\edgeConstraint(s)$ assigns $edge\_constraint$ as:
\begin{equation}
    \bigwedge_{Y\in Y_{known\_ewin}(s)}\neg Y
    \wedge
    \bigwedge_{Y\notin Y_{known\_ewin}(s)}\roundBra{Y\to\bigwedge_{X\in X_{known\_swin}(s,Y)\cup X_{known\_undermined}(s,Y)}\neg X}\text{,}
\end{equation}
where $Y_{known\_ewin}(s)$, $X_{known\_swin}(s,Y)$, and $X_{known\_undermined}(s,Y)$ are defined as Equations (\ref{eq:knownEwin}), (\ref{eq:knownSwin}), and (\ref{eq:knownUndetermined}) respectively. Then it checks the satisfiability of $s\wedge edge\_constraint$. If the result is `sat', the model is retrieved. If $model$ remains equal to $\epsilon$ after this step, $\getModel(s)$ returns `Null'. Otherwise, it returns the first element of $model$ and removes it from $model$.

Returning to Line~\ref{line:mg-syn:nullEdge} in Algorithm~\ref{alg:mg-syn}, we can see the third difference: the handling of a `Null' return value from $\getEdge(s)$.  In this situation, the algorithm proceeds to check whether the current $s$ can be environment-winning directly. Specifically, if $Y_{potential\_swin}(s)=\emptyset$ holds, then $\noSwinPotential(s)$ returns $\trueVal$ and $s$ is determined to be as environment-winning.
\begin{equation}
\begin{aligned}
    Y_{potential\_swin}(s)=\{Y\in2^\Y\mid\,&\circled{1} Y\notin Y_{known\_ewin}(s)\text{ and }\\
    &\circled{2}X_{known\_swin}(s,Y)\cup X_{known\_undetermined}(s,Y)=2^\X\}
\end{aligned}
\end{equation}
Observing that $\getModel(s)$ returns `Null' implies that $s\wedge edge\_constraint$ is unsatisfiable. With $Y_{potential\_swin}(s)=\emptyset$, there are two scenarios for $Y\in2^\Y$. If $Y\in Y_{known\_ewin}(s)$ holds, as previously discussed, the environment can win the plays from $s$ with the system choice $Y$. In the case of $Y\notin Y_{known\_ewin}(s)$, it follows that there then exists $X\in 2^\X \setminus (X_{known\_swin}(s,Y) \cup X_{known\_undetermined}(s,Y))$ such that the formula $s\wedge X\wedge Y$ is unsatisfiable. This means that it can no longer reach an accepting transition starting from $s$ via the edge $X\cup Y$. Taking both scenarios into account, it is concluded that $s$ is an environment-winning state.

Lastly, we need to note the reset of $model$ to the empty trace. This occurs when encountering a previously visited but undetermined state (Line~\ref{line:mg-syn:resetRevisit}) or backtracking from a recursive call  (Line~\ref{line:mg-syn:resetBacktrack}). In situations of recursive expansion, specifically when there is no additional information and the algorithm can only proceed with the depth-first strategy (as illustrated in Figure~\ref{fig:approach-overview}-(a)), it moves forward following a single satisfiable trace.

\subsection{State Entailment}

The condition of whether a state is a known system/environment-winning state is checked multiple times in our approach. Here, we attempt to relax this condition. We can establish that it is sufficient to replace the condition mentioned above with a determination of whether a state is semantically entailed by a known system-winning state or semantically entails a known environment-winning state. An \LTLf satisfiability solver can help determine the semantic entailment relationship between two \LTLf formulas $\varphi_1$ and $\varphi_2$: $\varphi_1\Rightarrow \varphi_2$ holds iff $\varphi_1\wedge\neg \varphi_2$ is unsatisfiable.

\begin{lemma}\label{lem:realizableEntail}
Let $(\varphi_1,\X,\Y)_{Moore}$ and $(\varphi_2,\X,\Y)_{Moore}$ be two \ltlf specifications sharing the same input and output variables and $\varphi_1\Rightarrow\varphi_2$ holds.
\begin{enumerate}
\item If $(\varphi_1,\X,\Y)_{Moore}$ is realizable, then $(\varphi_2,\X,\Y)_{Moore}$ is realizable;\label{item:lem-swin-imple}

\item if $(\varphi_2,\X,\Y)_{Moore}$ is unrealizable, then $(\varphi_1,\X,\Y)_{Moore}$ is unrealizable.
\end{enumerate}
\end{lemma}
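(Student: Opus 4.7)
The plan is to observe that statement (2) is simply the contrapositive of statement (1), so it suffices to prove (1) directly from Definition~\ref{def:syn} using the hypothesis $\varphi_1 \Rightarrow \varphi_2$.

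For (1), I would start by unpacking realizability of $(\varphi_1,\X,\Y)_{Moore}$: there exists a winning strategy $g : (2^\X)^* \to 2^\Y$ such that for every infinite input sequence $\lambda = X_0, X_1, \ldots \in (2^\X)^\omega$, there is some $k \geq 0$ with $\rho \models \varphi_1$, where $\rho = (X_0 \cup g(\epsilon)), (X_1 \cup g(X_0)), \ldots, (X_k \cup g(X_0, \ldots, X_{k-1}))$. The key step is to claim that the very same $g$ is also a winning strategy for $(\varphi_2,\X,\Y)_{Moore}$. Indeed, for any $\lambda$, we obtain the same finite trace $\rho$ witnessing $\rho \models \varphi_1$, and then the semantic entailment $\varphi_1 \Rightarrow \varphi_2$ immediately yields $\rho \models \varphi_2$ (this is exactly what $\Rightarrow$ means in the paper: for every finite trace $\rho$, if $\rho \models \varphi_1$ then $\rho \models \varphi_2$). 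Since both specifications share the same $\X$, $\Y$, and system type Moore, the strategy $g$ is a valid Moore-type strategy for the second specification, so $(\varphi_2,\X,\Y)_{Moore}$ is realizable.

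For (2), contraposition gives it for free: suppose $(\varphi_1,\X,\Y)_{Moore}$ were realizable; then by (1) so would be $(\varphi_2,\X,\Y)_{Moore}$, contradicting its unrealizability.

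There is no real obstacle here; the proof is essentially a one-line unfolding of definitions. The only thing to be careful about is that the strategy $g$ is the same syntactic object in both specifications, which is fine because both specifications agree on $\X$, $\Y$, and the system type, so the domain $(2^\X)^*$ and codomain $2^\Y$ of $g$ match. No appeal to automata or games is needed; the result follows purely from the definition of realizability and the definition of $\Rightarrow$.
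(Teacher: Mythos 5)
Your proposal is correct and matches the paper's own proof essentially verbatim: both reduce (2) to (1) by contraposition and then prove (1) by reusing the same winning strategy $g$, invoking $\varphi_1\Rightarrow\varphi_2$ to conclude $\rho\models\varphi_2$ from $\rho\models\varphi_1$. No differences worth noting.
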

\begin{proof}
The two statements in this lemma are the contrapositives of each other, and therefore, they must both hold simultaneously. Hence, we only prove Statement~(\ref{item:lem-swin-imple}) here.

By Definition~\ref{def:syn}, if $(\varphi_1,\X,\Y)_{Moore}$ is realizable, then there exists a winning strategy $g: (2^\X)^* \to 2^\Y$ such that for an arbitrary infinite sequence $\lambda = X_0, X_1, \cdots \in (2^\X)^\omega$, there is $k \geq 0$ such that $\rho\models\varphi_1$ holds, where $\rho=(X_0\cup g(\epsilon)),(X_1\cup g(X_0)),\cdots,(X_k\cup g(X_0,\cdots,X_{k-1}))$. Notice we have $\varphi_1\Rightarrow\varphi_2$, so $\rho\models\varphi_2$ also holds. Thus $(\varphi_2,\X,\Y)_{Moore}$ is realizable.
\end{proof}
\begin{lemma}\label{lem:winningEntail}
Let $\G=(2^{\X\cup\Y},S,\delta,T)_{Moore}$ be a \tdfa games, $s_1\in S$ and $s_2\in S$ be two states such that $s_1\Rightarrow s_2$.
\begin{enumerate}
\item If $s_1$ is a system-winning state in $\G$, then $s_2$ is a system-winning state in $\G$;

\item if $s_2$ is an environment-winning state in $\G$, then $s_1$ is an environment-winning state in $\G$.
\end{enumerate}
\end{lemma}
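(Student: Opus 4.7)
The plan is to reduce this lemma to the realizability-level statement just established as Lemma~\ref{lem:realizableEntail} and then bounce back to the game via Lemma~\ref{lem:syn2game}. First I observe that statements (1) and (2) are contrapositives of each other once we invoke the dichotomy of \tdfa games: by Lemma~\ref{lem:win-insertion} and Lemma~\ref{lem:win-union} (equivalently, the corollary after Theorem~\ref{thm:tdfa-game}), every state in $\G$ is either system-winning or environment-winning, but never both. So it suffices to prove statement (1).

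For statement (1), since the states of $\G$ are \ltlf formulas (propositional equivalence classes) and $\G$ can be viewed as the game obtained from the \tdfa $\A_\varphi$ for any top-level formula $\varphi$ whose reachable-state set coincides with $S$, both $s_1$ and $s_2$ are eligible as the ``$\psi$'' in Lemma~\ref{lem:syn2game}. The chain is then:
\begin{align*}
    s_1\text{ is system-winning in }\G
    &\;\Longleftrightarrow\; (s_1,\X,\Y)_{Moore}\text{ is realizable}
    &&(\text{Lemma~\ref{lem:syn2game}})\\
    &\;\Longrightarrow\; (s_2,\X,\Y)_{Moore}\text{ is realizable}
    &&(\text{Lemma~\ref{lem:realizableEntail}, using }s_1\Rightarrow s_2)\\
    &\;\Longleftrightarrow\; s_2\text{ is system-winning in }\G
    &&(\text{Lemma~\ref{lem:syn2game}})\text{.}
\end{align*}
Statement (2) then follows immediately by contraposition together with the above dichotomy: if $s_2$ is environment-winning, then $s_2$ is not system-winning, so by (1), $s_1$ cannot be system-winning either, and hence $s_1$ must be environment-winning.

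The only delicate point, and the one I would double-check, is the applicability of Lemma~\ref{lem:syn2game} to an arbitrary state $\psi\in S$ rather than only to the initial formula. This is fine because Lemma~\ref{lem:syn2game} is phrased for any $\psi\in S$ and the game structure $(2^{\X\cup\Y},S,\delta,T)_{Moore}$ used for $\psi$ is identical regardless of which state we declare as initial; the set $S$ is closed under formula progression by construction (Definition~\ref{def:ltlf2tdfa}), so the reachability arguments underlying Lemma~\ref{lem:syn2game} go through for both $s_1$ and $s_2$. No direct strategy-transfer between the two states is needed, which avoids the potential obstacle of having to rewire a winning strategy for $s_1$ into one for $s_2$ along possibly different successor structures.
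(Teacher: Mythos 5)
Your proposal is correct and follows essentially the same route as the paper: the paper's own proof simply invokes the correspondence of Lemma~\ref{lem:syn2game} between winning states and realizability and then applies Lemma~\ref{lem:realizableEntail}. Your additional care about applying Lemma~\ref{lem:syn2game} to arbitrary states $\psi\in S$ and deriving statement (2) via the determinacy of \tdfa games is a sound elaboration of details the paper leaves implicit.
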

\begin{proof}
Lemma~\ref{lem:syn2game} establishes the correspondence between system/environment-winning states and realizability. So we can get this lemma directly from Lemma \ref{lem:realizableEntail}.
\end{proof}

Table~\ref{tab:entail} provides a list of scenarios in our method where conditional expressions can be relaxed according to Lemma~\ref{lem:winningEntail}, which all appear in Algorithm~\ref{alg:checkCurrentStatus}. By leveraging these, we make fuller use of the identified system/environment-winning states and thus reduce our search space.

\begin{table}[H]
\caption{The conditional expressions in Algorithm~\ref{alg:checkCurrentStatus} that can be relaxed through entailment.}
\label{tab:entail}
\begin{tabular}{cll}
\hline
Line & Original Condition & Relaxed Condition \\ \hline
\ref{line:checkCurrent:sInSwin}    & $s\in swin\_state$                  & $\exists t\in swin\_state.t\Rightarrow s$             \\
\ref{line:checkCurrent:fpNotinSwin}    & $\fp{s,X\cup Y}\notin swin\_state$                  & $\forall t\in swin\_state.t\not\Rightarrow \fp{s,X\cup Y}$             \\
\ref{line:checkCurrent:sInEwin}    & $s\in ewin\_state$                  & $\exists t\in ewin\_state.s\Rightarrow t$             \\
\ref{line:checkCurrent:fpInEwin}    & $\fp{s,X\cup Y}\in\{s\}\cup ewin\_state$                  & $\fp{s,X\cup Y}=s$ or $\exists t\in ewin\_state.\fp{s,X\cup Y}\Rightarrow t$             \\ \hline
\end{tabular}
\end{table}

\section{Experimental Evaluation}

We provide experimental evidence that the on-the-fly approach offers the potential to avoid constructing complete automata and generally outperforms the backward search method.
We implement the on-the-fly \ltlf synthesis approach, as detailed in Section \ref{sec:on-the-fly} to \ref{sec:opt}, in a tool called \tool using C++ 11. The complete experimental setup, including the source code of \tool, the benchmarks, the compared tools, and the original logs produced during the experiment, is available at \cite{artifact}.

\subsection{Setup}

\paragraph{Compared Tools}
We evaluate the performance of our approach and \tool by comparing with the top three \ltlf synthesis tools from the latest reactive synthesis competition SYNTCOMP 2023~\cite{syntcomp,syntcomp23}: \lisa~\cite{BLTV20}, \lydia~\cite{DF21}, and \nike~\cite{mf23nike}. Both \lisa and \lydia are state-of-the-art \ltlf synthesis tools that are based on the backward search approach. \nike, which also performs forward synthesis, implements different heuristics from \tool for selecting the forward direction (as discussed in Section~\ref{sec:related-work}). Besides, \nike follows and enhances \cynthia \cite{GFLVXZ22}, another \ltlf synthesis tool that is excluded from our comparison due to its inferior performance in our preliminary experiments.
All three tools are run with their default parameters.

\paragraph{Benchmarks}
We collect, in total, 3380 \LTLf synthesis instances from literature: 1400 \emph{Random} instances \cite{ZTLPV17,BLTV20}, 140 \emph{Two-player-Games} instances \cite{TV19,BLTV20}, 40 \emph{Patterns} instances \cite{XLZSPV21}, and 1800 \emph{Ascending} instances \cite{XLHXLPSV24}.

\paragraph{Running Platform and Resources} We run the experiments on a CentOS 7.4 cluster, where each instance has exclusive access to a processor core of the Intel Xeon 6230 CPU running at 2.1 GHz, with 8 GB of memory and a 30-minute time limit. The execution time is measured with the Unix command \texttt{time}.

\subsection{Results and Discussion}

\subsubsection{Comparison with Baseline}

Table~\ref{tab:cmp-overall} shows the numbers of instances solved by different tools and approaches. The data in the `backward' column, merged from \lydia and \lisa, correspond to the backward search approach. The data in the `on-the-fly' column, merged from \nike and \tool, correspond to the on-the-fly approach. Figure~\ref{fig:solved-num} illustrates the number of instances that can be solved within different time limits. Figure~\ref{fig:state-cnt} compares the number of {\sf(T)DFA} states computed during the solving processes of \tool and \lisa in different instances.

\begin{table}[h]
\centering
\caption{Comparison among different tools and approaches. The data in the `backward' column are merged from \lydia and \lisa, and the data in the `on-the-fly' column are merged from \nike and \tool.}
\label{tab:cmp-overall}
\begin{tabular}{ll|rrrr|rr}
\hline
\multicolumn{2}{c|}{\multirow{2}{*}{}}                                & \multicolumn{4}{c|}{Compared by tools}                    & \multicolumn{2}{c}{Compared by approaches} \\ \cline{3-8} 
\multicolumn{2}{c|}{}                                                 & \lydia        & \lisa         & \nike        & \tool         & backward            & on-the-fly           \\ \hline
\multicolumn{1}{l|}{\multirow{2}{*}{\emph{Random}}}           & Realizable   & \textbf{356} & 351          & 351         & 336           & \textbf{361}                 & 354         \\
\multicolumn{1}{l|}{}                                  & Unrealizable & 920          & \textbf{965} & 842         & 791           & \textbf{965}        & 901                  \\ \cline{1-2}
\multicolumn{2}{l|}{\emph{Patterns}}                                         & \textbf{40}  & 38           & \textbf{40} & \textbf{40}   & \textbf{40}         & \textbf{40}          \\ \cline{1-2}
\multicolumn{1}{p{1.8cm}|}{\multirow{3}{=}{\emph{Two-player-Games}}} & \emph{s-counter}    & \textbf{12}  & 8            & 5           & 4             & \textbf{12}         & 5                    \\
\multicolumn{1}{l|}{}                                  & \emph{d-counters}   & \textbf{6}   & \textbf{6}   & 5           & \textbf{6}    & \textbf{6}          & \textbf{6}           \\
\multicolumn{1}{l|}{}                                  & \emph{nim}          & \textbf{20}  & 15           & 18          & 5             & \textbf{20}         & 18                   \\ \cline{1-2}
\multicolumn{1}{l|}{\multirow{2}{*}{\emph{Ascending}}}         & Realizable   & 1250         & 569          & 1302        & \textbf{1306} & 1251                & \textbf{1306}        \\
\multicolumn{1}{l|}{}                                  & Unrealizable & 234          & 210          & 216         & \textbf{365}  & 237                 & \textbf{368}         \\ \hline
\multicolumn{2}{l|}{Uniquely solved}                                  & 7            & 33           & 2           & \textbf{93}   & 112                  & \textbf{218}         \\ \cline{1-2}
\multicolumn{2}{l|}{Total}                                            & 2838         & 2162         & 2779        & \textbf{2853} & 2892                & \textbf{2998}        \\ \hline
\end{tabular}
\end{table}

\begin{figure}[htbp]
    \centering
    \begin{minipage}[b]{0.42\textwidth}
        \centering
        \includegraphics[width=\textwidth]{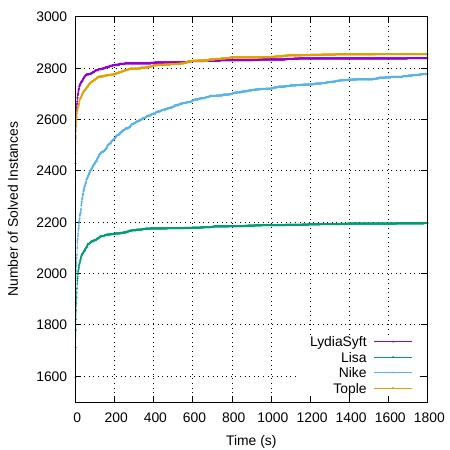}
        \caption{The number of instances solved by each tool over time.}
        \label{fig:solved-num}
        \Description{The number of solved instances by each tool over time.}
    \end{minipage}
    \hspace{0.4cm}
    \begin{minipage}[b]{0.42\textwidth}
        \centering
        \includegraphics[width=\textwidth]{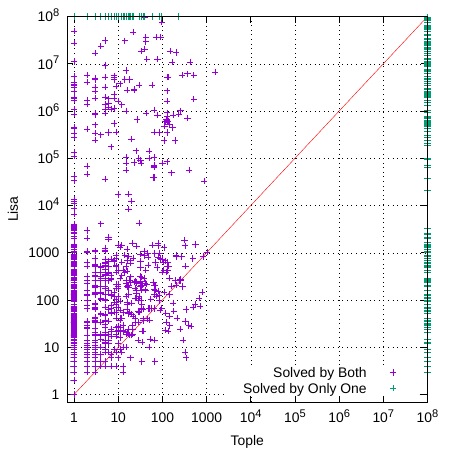}
        \caption{The number of computed {\sf(T)DFA} states in instances solved by \tool or \lisa.}
        \label{fig:state-cnt}
        \Description{The number of computed {\sf(T)DFA} states in instances solved by \tool or \lisa.}
    \end{minipage}
\end{figure}

The data and results suggest the following observations and discussions.
\begin{itemize}

\item \tool and on-the-fly approach show an overall advantage of the solving capability. They achieve the maximum number of uniquely solved instances (93/218), the maximum number of totally solved instances (2853/2998), and the highest endpoint in Figure~\ref{fig:solved-num}.

\item No tool or approach dominates in solving \LTLf synthesis problems. Firstly, we observe that each tool or approach can solve instances that others cannot, as evidenced by non-zero values in the 'Uniquely solved' row of Table~\ref{tab:cmp-overall}. This is expected given the overall complexity of the problem and the fact that different approaches adopt different heuristics for speeding up the search. Secondly, performance disparities among tools and approaches vary across different benchmark classes. Specifically, \lydia excels in solving realizable \emph{Random} instances and \emph{Two-player-Game} instances, \lisa solves the most unrealizable \emph{Random} instances, while \tool achieves optimal results in \emph{Ascending} instances. Meanwhile, the backward search approach outperforms in \emph{Random} instances and \emph{Two-player-Game} instances, whereas the on-the-fly approach performs better in \emph{Ascending} instances.

\item The number of computed states within the on-the-fly search is not always lower than that of the backward search approach. Both the backward search tools \lydia and \lisa construct a minimized \dfa. But \nike and \tool depend on the propositional equivalence to define the state space, which may not be minimized. To illustrate this, we compare \lisa and \tool in Figure~\ref{fig:state-cnt} (as shown in Table~\ref{tab:cmp-overall}, their performance across different benchmark classes tends to be complementary). As depicted in Figure~\ref{fig:state-cnt}, there are instances distributed below the red reference line, indicating that in some cases, a complete minimized deterministic automaton may have fewer states than a subset of states within a non-minimized automaton.

\item The searching policy has a considerable impact on the solving process of the on-the-fly approach. According to Table~\ref{tab:cmp-overall}, in unrealizable \emph{Random} instances, \nike and \tool solve 842 and 791 instances respectively, while they solve 901 unrealizable \emph{Random} instances in total. More specifically, among unrealizable \emph{Random} instances, \nike and \tool solve 732 the same instances, and each individually solves 110 and 59 respectively. This indicates that there are certain differences between the sets of successfully solved instances by \nike and \tool. We attribute this discrepancy to the fact that \nike and \tool are both on-the-fly tools but adopt different policies for selecting the moving direction during the depth-first search.

\end{itemize}

\subsubsection{Evaluation of Optimization Techniques within \tool}

\begin{table}[htbp]
\centering
\caption{Comparison of the application of two optimization techniques: model-guided synthesis and state entailment, represented by the options `-m' and `-e' respectively. Here, `0' indicates non-application and `1' indicates application of the respective technique. Abbreviations used are Realizable (Real.), Unrealizable (Unre.), \emph{s-counter} (\emph{s-count.}), and \emph{d-counters} (\emph{d-count.}).}
\label{tab:cmp-withinTople}
\begin{tabular}{l|rrrrrrrr|rr}
\hline
          & \multicolumn{2}{c|}{\textit{Random}}                                & \multicolumn{1}{c|}{\multirow{2}{*}{\textit{Patterns}}} & \multicolumn{3}{c|}{\textit{Two-player-Games}}                                                                         & \multicolumn{2}{c|}{\textit{Ascending}} & \multirow{2}{*}{Total} & \multirow{2}{*}{$\Delta$ Solve} \\ \cline{2-3} \cline{5-9}
          & \multicolumn{1}{r}{Real.} & \multicolumn{1}{r|}{Unre.} & \multicolumn{1}{c|}{}                                   & \multicolumn{1}{r}{\textit{s-count.}} & \multicolumn{1}{r}{\textit{d-count.}} & \multicolumn{1}{r|}{\textit{nim}} & Real.        & Unre.       &                        &                        \\ \hline
-m 0 -e 0 & 328                             & 736                               & 40                                                      & 4                                       & 6                                        & 5                                 & 1309              & 311                & 2739                   & 0                      \\ \hdashline
-m 1 -e 0 & 337                             & 759                               & 40                                                      & 4                                       & 6                                        & 4                                 & 1305              & 350                & 2805                   & +66                     \\
-m 0 -e 1 & 327                             & 760                               & 40                                                      & 4                                       & 6                                        & 5                                 & 1310              & 327                & 2779                   & +40                     \\
-m 1 -e 1 & 336                             & 791                               & 40                                                      & 4                                       & 6                                        & 5                                 & 1306              & 365                & 2853                   & +114                    \\ \hline
\end{tabular}
\end{table}

Table~\ref{tab:cmp-withinTople} compares the application of the optimization techniques: model-guided synthesis and state entailment. The last column, `$\Delta$ Solve', shows the improvement in the total number of solved instances, using the baseline configuration `-m 0 -e 0'. Both techniques enhance the overall solving capability of the on-the-fly approach. On the other hand, the numbers of solved instances with the configuration of  `-m 1 -e 0' in the realizable \emph{Ascending} instances (1305 compared to 1309 in the baseline) and with the configuration of `-m 0 -e 1' realizable \emph{Random} instances (327 compared to 328 in the baseline), are slightly less than those in the baseline. This implies that while the optimizations generally improve performance, there are also costs to using them, and the results are not always positive.

\section{Related Work}\label{sec:related-work}

\paragraph{Other Attempts at On-the-fly \LTLf Synthesis} This work builds upon previous efforts of \cite{XLZSPV21,XLHXLPSV24}. Concurrently, other studies follow \cite{XLZSPV21} also employ the on-the-fly approach to solving \ltlf synthesis problem. De Giacomo et al. \cite{GFLVXZ22} formulates the searched state space of \dfa games as And/Or graphs~\cite{N71}, which is widely used in automated planning with nondeterministic models~\cite{GNT17nondeterm}. This study also introduces Sequential Decision Diagrams ({\SDD}s) as a data structure for reasoning through And/Or graphs. Despite the improved performance in certain benchmarks, this approach encounters significant scalability issues. Within the same framework in \cite{XLZSPV21,GFLVXZ22}, \cite{mf23nike} devise a procedure inspired by the Davis-Putnam-Logemann-Loveland algorithm~\cite{DP60,DLL62} to improve edge enumeration and introduces a syntactic equivalence check for search states.

\paragraph{\ltl Synthesis}
The first work to consider \LTL synthesis is \cite{PR89}, which solves the synthesis problem by reducing it to a Rabin game~\cite{EJ88}. This approach constructs a non-deterministic B\"uchi automaton from the input \LTL formula, and then determinizes it to its equivalent Rabin automaton, a process which takes worst-case double-exponential time. The complexity of solving a Rabin game is NP-Complete~\cite{EJ88}. Nowadays, the standard approach is to reduce \LTL synthesis to the parity game~\cite{EJ91}, because a parity game can be solved in quasi-polynomial time~\cite{CJKLS17}, even though the doubly-exponential process to obtain a deterministic parity automaton cannot be avoided. \LTL synthesis tools like \ltlsynt~\cite{MC18} and \strix~\cite{MSL18}, are built using the parity-game approach. Because of the challenge to determinize an $\omega$ automaton, researchers also consider other possibilities, e.g., by reducing \LTL synthesis to the bounded safety game~\cite{Kup06a}. \acacia~\cite{BFJ12} is a representative tool following the safety-game approach. The annual reactive synthesis competition~\cite{syntcomp} drives progress in this field, yet the scalability issue is still a major problem.

\section{Concluding Remarks}

We have presented an on-the-fly approach framework for synthesizing \ltlf specifications. By concurrently conducting synthesis and constructing automata, we get the chance to bypass the double exponential growth of state space. An empirical comparison of this method to state-of-the-art \ltlf synthesizers suggests that it can achieve the best overall performance.
Several future research directions are being considered.
Firstly, to further reduce state space to be searched, it would be interesting to investigate how to convert \ltlf to minimized {\sf (T)DFA} on the fly. Secondly, the design of low-cost and effective heuristics to guide the search process could significantly enhance the efficiency of the on-the-fly approach. Lastly, beyond the framework introduced in this article, the on-the-fly approach could be integrated with compositional techniques as in \cite{BLTV20,BDDLVZ22}.


\bibliographystyle{ACM-Reference-Format}
\bibliography{ok,cav}



\end{document}